\renewcommand*{\backrefalt}[4]{%
    \ifcase #1 \footnotesize{(Not cited.)}%
    \or        \footnotesize{(Cited on page~#2.)}%
    \else      \footnotesize{(Cited on pages~#2.)}%
    \fi}
\newcommand{\Winit}{\btheta^{(0)}}
\newcommand{\oK}{\overline{K}}
\newcommand{\ok}{\overline{k}}
\newcommand{\ophi}{\phi_{\overline{V}_{h+1}^t}}
\newcommand{\ophiholder}{\overline{\phiholder}}
\newcommand{\oV}{\overline{V}}
\newcommand{\oLambda}{\overline{\Lambda}}
\newcommand{\otheta}{\overline{\btheta}}
\newcommand{\algname}{\texttt{KernelCCE-VTR}\xspace}
\newcommand{\algnameprime}{\texttt{KernelCCE-VTR}\xspace}
\newcommand{\algnameB}{\texttt{KernelCCE-VTR+}\xspace}
\newcommand{\OURALGO}{\algname}
\newcommand{\genpo}{^{\pi, \nu}} 
\newcommand{\tbr}{\text{br}}
\newcommand{\hprod}[2]{\left\langle #1, #2 \right\rangle_{\cH}}
\newcommand{\W}{\btheta}
\newcommand{\Wbar}{\overline{\btheta}_h^t}
\newcommand{\Wubar}{\underline{\btheta}_h^t}
\newcommand{\phiholder}{\bm{\Psi}}
\newcommand{\QU}[1]{\overline{Q}_h^t(#1)}
\newcommand{\QL}[1]{\underline{Q}_h^t(#1)}
\newcommand{\VUh}{\overline{V}_h^t}
\newcommand{\VLh}{\underline{V}_h^t}
\newcommand{\VUp}{\overline{V}_{h+1}^t}
\newcommand{\VLp}{\underline{V}_{h+1}^t}
\newcommand{\QUxh}{\QU{x_h^t, a_h^t, b_h^t}}
\newcommand{\QLxh}{\QL{x_h^t, a_h^t, b_h^t}}
\newcommand{\QUxab}{\QU{x, a, b}}
\newcommand{\QLxab}{\QL{x, a, b}}
\newcommand{\yholder}{\mathbf{y}}
\newcommand{\xholder}{\mathbf{x}}
\newcommand{\Wholder}{\mathbf{W}}
\newcommand{\normh}[1]{\left\|#1 \right\|_{\cH}}
\newcommand{\norml}[1]{\left\|#1\right\|_{\left(\Lambda_h^t\right)^{-1}}}
\newcommand{\logdet}{\text{logdet}}
\def\vec{\mathop{\text{vec}}}
\newcommand{\Vest}{\mathbb{V}^{\text{est}}}
\newcommand{\deff}{d_{\text{eff}}}
\newcommand{\uppvar}{\overline{R}_h}
\newcommand{\lowvar}{\underline{R}_h}
\def\blue#1{\textcolor{black}{#1}}
\def\red#1{}
\newcommand{\bonus}{w}
\begin{document}






\begin{center}

{\bf{\LARGE{Learning Two-Player Mixture Markov Games: Kernel Function Approximation and Correlated Equilibrium}}}

\vspace*{.2in}

{\large{
\begin{tabular}{cccc}
Chris Junchi Li$^{\diamond *}$
&
Dongruo Zhou$^{\ddagger *}$ 
&
Quanquan Gu$^{\ddagger}$
&
Michael I.~Jordan$^{\diamond,\dagger}$
\end{tabular}
}}

\vspace*{.2in}

\begin{tabular}{c}
Department of Electrical Engineering and Computer Sciences,
University of California, Berkeley$^\diamond$
\\
Department of Statistics,
University of California, Berkeley$^\dagger$
\\
Department of Computer Sciences,
University of California, Los Angeles$^\ddagger$
\end{tabular}

\vspace*{.2in}

\today
\end{center}

\vspace*{.2in}

\begin{abstract}
We consider learning Nash equilibria in two-player zero-sum Markov Games with nonlinear function approximation, where the action-value function is approximated by a function in a Reproducing Kernel Hilbert Space (RKHS). The key challenge is how to do exploration in the high-dimensional function space. We propose a novel online learning algorithm to find a Nash equilibrium by minimizing the duality gap. At the core of our algorithms are upper and lower confidence bounds that are derived based on the principle of optimism in the face of uncertainty. We prove that our algorithm is able to attain an $O(\sqrt{T})$ regret with polynomial computational complexity, under very mild assumptions on the reward function and the underlying dynamic of the Markov Games.
We also propose several extensions of our algorithm, including an algorithm with Bernstein-type bonus that can achieve a tighter regret bound, and another algorithm for model misspecification that can be applied to neural function approximation.

\end{abstract}

\newcommand\blfootnote[1]{%
  \begingroup
  \renewcommand\thefootnote{}\footnote{#1}%
  \addtocounter{footnote}{-1}%
  \endgroup
}
\blfootnote{$^*$Equal contribution.}

\section{Introduction}\label{sec:intro}
Multi-agent reinforcement learning (MARL) has been the focus of research across a range of research communities~\citep{shapley1953stochastic,littman1994markov}.  The case of two-player Markov Games (MG) has been of particular interest. In this case, two players select their actions based on the current state simultaneously and independently. One player (the max-player) aims to maximize the return based on the reward provided by the environment, while the other (the min-player) aims to minimize it. A series of recent results have established polynomial sample complexity/regret guarantees that depend on the cardinality of state/action spaces for two-player zero-sum MGs~\citep{wei2017online,bai2020provable,bai2020near,liu2020sharp, jia2019feature,sidford2020solving,cui2020minimax, lagoudakis2012value,perolat2015approximate,perolat2016softened,perolat2016use,perolat2017learning, jin2021v}.

Meanwhile, most of the  recent successful  applications of MARL deal with \emph{large state/action spaces} that may be continuous or a fine-grained discretizations of a continous space. Examples include GO \citep{silver2016mastering}, autonomous driving \citep{shalev2016safe}, TexasHold’em poker \citep{brown2019superhuman}, and AlphaStar for the game Starcraft \citep{vinyals2019grandmaster}. In order to tackle problems with large state/action spaces, researchers have designed MARL algorithms based on \emph{function approximation} which approximate the original high-dimensional value function/policy by a function approximator. For instance, \citet{xie2020cce} and \citet{chen2021almost} studied RL for two-player zero-sum MGs with \emph{linear function approximation}, where it is assumed that there are a set of \emph{linear features} that span the transition kernel and reward function spaces. In contrast to RL with linear function approximation, RL with  \emph{nonlinear function approximation} (e.g., kernel and neural network approximation) aims to take advantage of the superior representational power of nonlinear functions compared to linear parameterizations. For example, \citet{jin2021power} studied neural-network-based RL in the setting of \emph{MGs with low multi-agent Bellman eluder dimension}, obtaining algorithms that have polynomial dependence on the complexity of the  underlying function class. Although this yields a strong theoretical guarantee, the algorithm that they propose is not computationally efficient due to the nonconvexity of the confidence sets that are constructed. The following question remains open:%
\emph{
Can we design a computationally and statistically efficient RL algorithm for learning two-player Markov Games with nonlinear function approximation?
}

In this paper, we give an affirmative answer to this question for a class of episodic Markov Games, dubbed \emph{mixture Markov Games}, when using nonlinear approximations from a Reproducing Kernel Hilbert Space (RKHS). We propose a novel kernel-based MARL algorithmic framework for general episodic two-player zero-sum MGs which provides provable regret guarantees. We summarize the contributions of our work as follows:

\begin{enumerate}[label=(\roman*)]
\item
We propose a $\algname$ algorithm for two-player zero-sum MGs. In particular,  at each episode, $\algname$ uses kernel function approximation to approximate the optimal value function and constructs corresponding confidence sets, following the \emph{``Optimism-in-Face-of-Uncertainty''} principle \citep{abbasi2011improved} to select an action based on the current state. In contrast to the algorithms in \citet{jin2021power}, which construct implicit confidence sets that are in general computationally intractable, our  $\algname$ algorithm crafts a computationally efficient exploration bonus based on the Gram matrix of the kernel function. 

\item
Under the assumption that the transition dynamics belongs to some RKHS, we show that our $\algname$ algorithm  is able to find a Nash equilibrium of the game with an $\tilde{O}( d_{\cF} H^2 \sqrt{T})$ regret bound on the duality gap, where $H$ is the horizon, $T$ is the number of the episodes, and $d_{\cF}$ represents the complexity of the function class $\cF$. We also propose an extension of $\algname$ that utilizes \emph{weighted kernel ridge regression} and a \emph{Bernstein-type bonus} to achieve $\tilde O(d_{\cF} H^{3/2} \sqrt{T})$ regret. When $\cF$ reduces to the $d$-dimensional linear function class, our regret reduces to $\tilde{O}(dH^{3/2}\sqrt{T})$, which almost matches the lower bound in~\citet{chen2021almost}. 

\item We also study the general case where the transition dynamics belongs to some RKHS up to a misspecification error. We show that $\algname$ can achieve a similar regret as in the well-specified case. In particular, we study the neural network function approximation case which can be regarded as a special instance of the misspecified RKHS case and derive the corresponding regret bound.
\end{enumerate}


\paragraph{Notation.}
We use lower-case letters to denote scalars, and lower- and upper-case bold letters to denote vectors and matrices. We use $\| \cdot \|$ to indicate Euclidean norm, and for a semi-positive definite matrix $\bSigma$ and any vector $\xb$, we define $\| \xb \|_{\bSigma} := \| \bSigma^{1/2} \xb \| = \sqrt{\xb^{\top} \bSigma \xb}$. For real $t$ and interval $[a,b]$, we use $\Pi_{[a,b]}[t]$ to indicate the projection of $t$ onto $[a,b]$, i.e.~$
    \Pi_{[a,b]}[t] = \max\left(a,\min(b,t)\right)
$.
For positive integer $N$ we define $[N] = \{1,\dots,N\}$.
We also adopt the standard big-$O$ and big-$\Omega$ notation: say $a_n = O(b_n)$ if and only if there exists $C > 0, N > 0$, for any $n > N$, $a_n \le C b_n$; $a_n = \Omega(b_n)$ if $a_n \ge C b_n$. The notations $\tilde{O}$ and $\tilde{\Omega}$ are adopted when $C$ hides a polylogarithmic factor.







\section{Related Work}\label{sec_related}
\paragraph{Online RL with function approximation.}
MARL  with function approximation can be seen as an extension of RL with function approximation on MDPs.  There are several lines of work studying RL with function approximation. The first line of work studies the so-called linear MDP which assumes the reward function and transition dynamics are linear functions of a feature mapping defined on the state and action spaces \citep{yang2019reinforcement, jin2019provably, zanette2020learning}. These works propose model-free algorithms with sublinear regret with respect to the number of episodes $K$. The second line of work studies the linear mixture MDP which assumes the transition kernel is a linear combination of several base models \citep{modi2019sample, jia2020model, zhou2021provably, zhou2020nearly}. These studies propose model-based RL algorithms that estimate the transition kernel with finite sample complexity or sublinear regret guarantees. The third line of work studies general function approximation for either the value function or the transition kernel \citep{osband2014model, jiang2017contextual, sun2019model, wang2020reinforcement, yang2020function, du2021bilinear, jin2021bellman}. Algorithms proposed in this vein enjoy finite regret or sample complexity bounds that depend on  general complexity measures such as Eluder dimension \citep{russo2013eluder, osband2014model}, Bellman rank \citep{jiang2017contextual}, witness rank \citep{sun2019model}, information gain \citep{yang2020function}, bilinear class \citep{du2021bilinear} and Bellman eluder dimension \citep{jin2021bellman}.

\paragraph{Learning two-player MGs with function approximation.}
There is a large body of literature on MARL for two-player MGs with function approximation. These works can be generally categorized into MARL with \emph{linear function approximation} and MARL with \emph{general function approximation}. For example, for linear function approximation, \citet{xie2020cce} studied zero-sum simultaneous-move MGs where both the reward and transition kernel can be parameterized as linear functions of feature mappings. They proposed an OMVI-NI algorithm with an $\tilde O(\sqrt{d^3 H^3 T})$ regret, where $d$ is the number of the feature dimension, $H$ is the episode length and $T$ is the total number of rounds. \citet{chen2021almost} studied the linear mixture MGs and proposed a nearly minimax optimal Nash-UCRL-VTR algorithm with an $\tilde O(dH\sqrt{T})$ regret and an $\Omega(dH\sqrt{T})$ matching lower bound. In contrast to this  work, our $\algname$ does not assume the underlying transition dynamic or reward function have a linear structure. For MARL with general function approximation, \citet{jin2021power} studied the two-player zero-sum MGs with low multi-agent Bellman Eluder dimension and proposed a ``Golf with Exploiter'' algorithm using a general function class. They showed their algorithm enjoys an $\tilde O(H\sqrt{dK\log N})$ regret, where $d$ is the multi-agent Bellman eluder dimension, and $K$ is the number of episodes. 
\citet{huang2021towards} studied two-player MGs with a finite minimax Eluder dimension and proposed an ONEMG method with an $\tilde O(H\sqrt{dK\log N})$ regret, where $d$ is the minimax Eluder dimension. To obtain the desired function approximator, both Golf with Exploiter and ONEMG need to solve a constrained optimization problem which is computationally intractable even in the linear function approximation setting. 
In contrast to \citet{jin2021power} and \citet{huang2021towards}, our proposed algorithms are computationally efficient and nearly optimal when using the Bernstein-type bonus. \citet{qiu2021reward} also studied kernel function approximation for two-player MGs. However, there are two key differences between our work and theirs. First, \citet{qiu2021reward} studied MGs where the expectation of the value function is in some RKHS; we, on the other hand, assume that the transition dynamics of the MG lies in an RKHS. Second, while the regret result in \citet{qiu2021reward} depends on the covering number of the function space, our regret is \emph{independent} of the covering number.


\section{Preliminaries}\label{sec_prelim}
In this section, we present the necessary definitions that will be adopted throughout the paper. 
Section~\ref{sec_prelim_twoplayer} describes simultaneous-move games in the setting of zero-sum two-player Markov Games (MG) and recaps the concepts of equilibrium and duality gap that are employed in the game theory literature.
Section~\ref{sec_prelim_rkhs} provides necessary definitions and notation for approximations based on a reproducing kernel Hilbert space (RKHS). 

\subsection{Two-player Markov Games}\label{sec_prelim_twoplayer}
A simple instance of Markov Games, referred to as turn-based games, can be seen as a special case of simultaneous-move games.%
\footnote{We present a discussion of the implications of our results for turn-based games in the supplementary materials.}
In a zero-sum two-player simultaneous-move Markov Game, the dynamical structure is captured by an MG, denoted $(\cS, \cA_1, \cA_2, r, \PP, H)$, where $\cS$ is the space of  states of the environment, $\cA_1$ is the action space of the first player and $\cA_2$ is the action space of the second player. $H$ is the time horizon representing the maximum step of each round of play. The reward function $r: \left\{r_h(x, a, b): h \in [H]\right\}$ is a sequence of mappings from $\cS \times \cA_1 \times \cA_2$ to $[-1, 1]$. The transition matrix $\PP: \left\{\PP_h(\cdot | x, a, b): h \in [H] \right\}$ gives for each triplet $(x, a, b)$ and at each time $h$ the stochastic response of the environment to the next $x' \in \cS$.  Here by ``simultaneous move'' we refer to the setting where at each round of game the two players $P_1$ and $P_2$ take actions $a \in \cA_1, b \in \cA_2$ simultaneously at a given state $x \in \cS$, in contrast with the turn-based game where $r_h$ and $\PP_h$ are defined for a state-action pair $(x, a)$ where the action can be taken by either player. In the context of this paper, for simplicity of notation we let $\cA_1 = \cA_2 = \cA$, noting that the results can be easily generalized to the case when $\cA_1 \neq \cA_2$. Similar definitions of a zero-sum two-player simultaneous-move episodic Markov Games can be found in~\citet{wei2017online,perolat2018actor,xie2020cce}. 

In the above setting, two players $P_1$ and $P_2$ take actions according to their individual strategies. We use $\pi := \{\pi_h\}_{h \in [H]}$ to denote the stochastic policy of $P_1$ and use $\nu := \{\nu_h\}_{h \in [H]}$ to denote the stochastic policy of $P_2$. We note that at time $h$, $\pi_h: \cS \mapsto \Delta_{\cA}$ maps the current state $x_h$ to a probability distribution of the actions, and similarly for $\nu_h$.
Given two agents' policies, $\pi, \nu$, across $h$ steps, the state value function is defined as the expected total reward through $H$ steps where at step $h \in [H]$ player $P_1$ follows policy $\pi_h(\cdot | x_h)$ and player $P_2$ follows policy $\nu_h(\cdot | x_h)$,
\begin{align}
    V_h^{\pi, \nu}(x) 
:= 
\EE_{\pi, \nu} \left[\sum_{t = h}^H r_t(s_t, a_t, b_t)  \mid x_h = x\right]
,\quad \blue{
V_{H+1}^{\pi, \nu}(x):=0
},\notag
\end{align}
and where $V^{\pi, \nu}(x) := V_1^{\pi, \nu}(x)$.
Note that the expectation is taken over all stochasticity in $\pi_h, \nu_h$ and $\PP_h$.
The action-value function is defined as 
\begin{align}
Q_h^{\pi, \nu}(x, a, b) 
:= 
\EE_{\pi, \nu} \left[\sum_{t = h}^H r_t(x_t, a_t, b_t) \,\bigg|\, x_h = x,  a_h = a, b_h = b\right]
,\quad \blue{
Q_{H+1}^{\pi, \nu}(x,a,b):=0
},\notag
\end{align}
and
$
Q^{\pi, \nu}(x, a, b) 
:= 
Q_1^{\pi, \nu}
.
$
From the definition of two value functions, we observe that for any $x \in \cS$, the state value function given policy pair $(\pi, \nu)$ is the expectation of the corresponding action-value function
\begin{align}
    V_h\genpo(x) 
:= 
\EE_{(a, b) \sim (\pi, \nu)} Q_h\genpo(x, a, b)
,\notag
\end{align}
where the expectation is taken over the action distribution induced by the policy pair. {\blue During this paper, we use superscripts to denote the number of episodes and subscripts to denote the number of horizon steps. }


\paragraph{Nash equilibrium and duality gap.} 
In a zero-sum two-player Markov Game, $P_1$ wants to maximize the expected reward $V^{\pi, \nu}(x)$ via the choice of the policy $\pi$. On the other hand, $P_2$ wants to minimize $V^{\pi, \nu}(x)$ by the choice of $\nu$. For fixed $\nu$, we define the best-response policy with respect to $V$ and $\nu$ as $\tbr (\nu)$ and define $V_h^{*, \nu} := V_h^{\tbr(\nu), \nu}$ and $Q_h^{*, \nu} := Q_h^{\tbr(\nu), \nu}$, We define $V_h^{\pi, *} := V_h^{\pi, \tbr(\pi)}$ and $Q_h^{\pi, *} := Q_h^{\pi, \tbr(\pi)}$ similarly.
A Nash equilibrium is a pair of policies $(\pi^*, \nu^*)$ that are the best-response policy for each other, which we write as $V^{\pi^*, *}(x) = V^{\pi^*, \nu^*}(x) = V^{*, \nu^*}(x)$.
For notational simplicity we write $V^* := V^{\pi^*, \nu^*}, Q^* := Q^{\pi^*, \nu^*}$. By definition of the best-response policy, we obtain weak duality:
\begin{align}
    V_h^{\pi, *}(x) \leq V_h^*(x) \leq V_h^{*, \nu}(x).\notag
\end{align}
For any policy pair $(\pi, \nu)$, we define the duality gap as $V_1^{*, \nu^t}(x_1^t) -V_1^{\pi^t, *}(x_1^t)$. We call a pair an \emph{$\epsilon$-approximate Nash equilibrium (NE)} if $V_1^{*, \nu^t}(x_1^t) -V_1^{\pi^t, *}(x_1^t) \leq \epsilon$. We also define the regret in the MG setting as follows:
\begin{align}
    \textrm{Regret}(T)
:=
\sum_{t = 1}^T V_1^{*, \nu^t}(x_1^t) 
-
V_1^{\pi^t, *}(x_1^t)
.\notag
\end{align}

\paragraph{Coarse Correlated Equilibrium.}
We introduce the \emph{Coarse Correlated Equilibrium (CCE)} solution concept. Given payoff matrices $Q_1, Q_2 : \cS \times \cA \times \cA \mapsto \RR$ and the state $x$, we define the CCE of the game as a joint distribution $\sigma$ on $\cA \times \cA$ satisfying:
\begin{align}
\mathbb{E}_{(a, b) \sim \sigma}\left[Q_{1}(x, a, b)\right] &\geq \mathbb{E}_{b \sim \mathcal{P}_{2} \sigma}\left[Q_{1}\left(x, a^{\prime}, b\right)\right]
,\quad
\forall a^{\prime} \in \mathcal{A}\label{cce:1}
,\\
\mathbb{E}_{(a, b) \sim \sigma}\left[Q_{2}(x, a, b)\right] & \leq \mathbb{E}_{a \sim \mathcal{P}_{1} \sigma}\left[Q_{2}\left(x, a, b^{\prime}\right)\right]
,\quad
\forall b^{\prime} \in \mathcal{A}
,\label{cce:2}
\end{align}
where $\cP_1 \sigma$ denotes the marginal of $\sigma$ on the first coordinate (min-player) and $\cP_2 \sigma$ denotes the marginal of $\sigma$ on the second coordinate (max-player). {\blue We use $\texttt{FIND\_CCE}(Q_1, Q_2, x)$ to denote $\sigma$. When $\sigma$ can be written as a product of two policies over action space $\cA$, it is an Nash equilibrium \citep{xie2020cce}. To compute a CCE given $Q_1, Q_2, x$, please refer to Appendix \ref{app:cce}.}




\subsection{Nonlinear function approximation by reproducing kernel Hilbert spaces}\label{sec_prelim_rkhs}

For simplicity of notation, we use $z = (x, a, b)$ to denote a state-action triplet in $\cZ := \cS \times \cA \times \cA$.
An RKHS $\cH$ with kernel $K(\cdot, \cdot): \cZ \times \cZ \mapsto \RR$ is a general form of linear function class.
Every RKHS $\cH$ consists of functions on $\cZ$, with a feature mapping, $\phi: \cZ \mapsto \cH$, such that $\forall f \in \cH$ and $\forall z \in \cZ$, $f(z) = \hprod{f}{\phi(z)}$. The kernel $K$ is thus defined for every $x, y \in \cZ \times \cZ$ as $K(x, y) = \hprod{\phi(x)}{\phi(y)}$. We call $\phi$ the feature mapping induced by the RKHS $\cH$ with kernel $K$. In the following sections, we use $f^\top g$ as a simplification of $\hprod{f}{g}$ when $f, g \in \cH$. We make no distinction in notation between the vector product and the product $\hprod{\cdot}{\cdot}$; the distinction can be read out from the nature of the two objects in the product. For every RKHS $\cH$, there exists a natural eigenvalue decomposition in $\cL^2(\cZ)$. RKHS approximation is a generalization of the linear function approximation of finite dimension $d$ which can be infinite dimensional. In the following, we define the so-called \emph{kernel mixture MG}, which can be regarded as an extension from the linear mixture MDP \citep{jia2020model, ayoub2020model, zhou2020nearly} and linear mixture MG \citep{chen2021almost} to their kernel counterpart.

\paragraph{Kernel mixture MG.}
In a kernel mixture MG model, we model the transition probability $\PP_h(s' | z): \cZ \mapsto \Delta(\cS)$ as an element in an RKHS $\cH$ with feature mapping $\phi(s' | z): \cZ \times \cS \rightarrow \cH$, such that for an unknown parameter $\btheta_h^* \in \cH$, we have $\PP_h(s' | z) = \left\langle \phi(s' | z), \btheta_h^*  \right\rangle_{\cH}$ for all $ s' \in \cS$ and $z \in \cZ$. 
A similar MG structure called kernel MG has been studied by \citet{qiu2021reward}, which assumes that the transition probability satisfies $\PP_h(s'|z) = \la \phi(z), \mu_h(s')\ra$ for some $\phi(\cdot), \mu_h(\cdot) \in \cH$. The single-agent MDP counterparts of kernel MGs and kernel mixture MGs are linear MDPs and linear mixture MDPs. \citet{zhou2021provably} have shown that linear MDPs and linear mixture MDPs are different classes of MDPs and one cannot be covered by the other. Following a similar argument, we can also show that kernel mixture MGs and kernel MGs are different classes of MGs and cannot imply each other. 

At time $h$, for any estimate of the value function $V_h(\cdot): \cS \mapsto \RR$, the expectation of value function at time $h+1$, $\PP_h V_{h+1}$, is an element in the RKHS $\PP_h V_{h+1}(z) = \left\langle \phi_{V_{h+1}}(z), \btheta_h^* \right\rangle_{\cH}$,
where 
$
\phi_{V_{h+1}}(z) := \sum_{s' \in \cS} \phi(s' | z) V_{h+1}(s')
$
integrates the product of the feature mapping with the estimated value of $s'$ over $\cS$. It is worth noting that the quantity $\phi_V(\cdot)$ plays an important role in previous linear mixture model-based algorithms \citep{jia2020model, ayoub2020model, zhou2020nearly, chen2021almost}. We assume that for any bounded value function $V(\cdot): \cS \mapsto [-1, 1]$ and any $z \in \cZ$, we have $\|\phi_{V}(z)\|_{\cH}\leq 1$.
Given that the reward function $r_h(z)$ is known, we obtain through the Bellman equation that 
\begin{align}
Q_h^{*, \nu}(\cdot) = r_h(\cdot) + (\PP_h V_{h+1}^{*, \nu})(\cdot)
    &=
r_h(\cdot) + \left\langle \phi_{V_{h+1}^{*, \nu}}(\cdot), \btheta_h^* \right\rangle_{\cH}
,
\\
Q_h^{\pi, *}(\cdot) = r_h(\cdot) + (\PP_h V_{h+1}^{\pi, *})(\cdot)
    &=
r_h(\cdot) + \left\langle \phi_{V_{h+1}^{\pi, *}}(\cdot), \btheta_h^* \right\rangle_{\cH}
.
\end{align}

\paragraph{Weighted kernel function.}
In this work, we consider a general RKHS $\cH$ and do not assume that we can access the feature mapping $\phi$ directly. Instead, we assume that we can access the \emph{weighted kernel function} $k_{V_1, V_2}(\cdot, \cdot)$, which is defined as follows:
\begin{definition}\label{def:weighkernel}
For any function pairs $V_1, V_2: \cS \rightarrow [0,1]$ which map states to real numbers, the weighted kernel function $k_{V_1, V_2}(\cdot, \cdot)$ is defined as follows:
\begin{align}
\forall z_1, z_2 \in \cZ,\ k_{V_1, V_2}(z_1, z_2) := \sum_{s_1, s_2 \in \cS}V_1(s_1)V_2(s_2) \left\langle\phi(s_1|z_1), \phi(s_2|z_2)\right\rangle_{\cH}
.\notag
\end{align}
\end{definition}
It is easy to see from Definition \ref{def:weighkernel} that 
\begin{align}
    &k_{V_1, V_2}(z_1, z_2) = \bigg\la\sum_{s_1 \in \cS}V_1(s_1)\phi(s_1|z_1),\sum_{s_2 \in \cS}V_2(s_2)\phi(s_2|z_2) \bigg\ra_{\cH} = \la \phi_{V_1}(z_1), \phi_{V_2}(z_2)\ra_{\cH},\notag
\end{align}
which suggests that the weighted kernel function $k_{V_1, V_2}(\cdot, \cdot)$ captures the inner product relation between $\phi_{V_1}(z_1)$ and $\phi_{V_2}(z_2)$. 
We assume that we can access an integration oracle that can calculate $k_{V_1, V_2}(z_1, z_2)$ for any function $V_1, V_2$ and state-action tuples $z_1, z_2$ efficiently.

\section{Algorithm}\label{sec_algo}
In this section, we introduce our value-targeted iteration algorithm for the zero-sum two-player Markov Game setting with RKHS function approximation. 
We follow the \emph{value-targeted regression} framework and the confidence set design as in UCRL~\citep{jia2020model,ayoub2020model}, and combine the CCE technique~\citep{xie2020cce} to deal with the zero-sum sub-game induced by upper confidence bound (UCB) and lower confidence bound (LCB) value functions. These techniques enable us to adapt the results from the linear setting to the nonlinear RKHS regime~\citep{chowdhury2017kernelized,yang2020function, zhou2020neural} and obtain a structure-dependent regret bound that is both computationally simple and statistically efficient.

      \begin{algorithm}[!tb]
	\caption{\algnameprime}\label{alg:base}
\begin{algorithmic}[1]
\STATE \textbf{Input:} bonus parameter $ \beta>0 $.
\FOR {episode $t=1,2,\ldots,T$}
\FOR {step $h=H,H-1,\ldots,1$} 
\STATE {\blue Calculate $\QU{\cdot, \cdot, \cdot}, \QL{\cdot, \cdot, \cdot}$ as in \eqref{eq:Q_neural_update}}
\STATE Let $\sigma_h^t(\cdot) = \texttt{FIND\_CCE}(\overline{Q}_h^t, \underline{Q}_h^t, \cdot)$
\STATE Let $\overline{V}_h^t(\cdot) = \EE_{(a, b) \sim \sigma_h^t(\cdot)} \overline{Q}_h^t(\cdot, a, b)$ and $\underline{V}_h^t(\cdot) = \EE_{(a, b) \sim \sigma_h^t(\cdot)} \underline{Q}_h^t(\cdot, a, b)$
\ENDFOR
\STATE Receive initial state $x_{1}^{t}$
\FOR {step $h=1,2,\ldots,H$} 
\STATE  Sample $(a_{h}^{t},b_{h}^{t})\sim\sigma_{h}^{t}(x_{h}^{t})$.
\STATE $P_1$ takes action $a_h^t$, $P_2$ takes action $b_h^t$
\STATE  Observe next state $x_{h+1}^{t}$.
\ENDFOR
\ENDFOR
\end{algorithmic}
\end{algorithm}

To find an equilibrium $(\pi^*, \nu^*)$ of the value function $V_1^{\pi, \nu}(x_1)$,
we design an algorithm using value-targeted regression (VTR) and upper/lower confidence bound-based exploration. As the min-player aims to minimize the value function while the max-player wishes to maximize the value function, we use upper confidence bound to encourage the exploration of the max-player and use a lower confidence bound to encourage the exploration of the min-player. Thus we need to define two value functions for the min/max-players respectively, i.e., $\overline{Q}_h^t, \underline{Q}_h^t, \overline{V}_h^t, \underline{V}_h^t$, where we adopt the overline notation for the over-estimation by the max-player and the underline notation for the under-estimation by the min-player. In the following,  we only describe how to estimate the value functions for the max-player; the value functions for the min-player can be estimated analogously. At each round of the game, we solve the following ridge regression problem for minimizing the Bellman error:
\begin{align}
\Wbar &= \min_{\W \in \cH } 
\sum_{\tau = 1}^{t - 1}
\left[ 
 \overline{V}_{h+1}^\tau(x_{h+1}^\tau) - \left\langle \phi_{\overline{V}_{h+1}^\tau}(z_h^\tau), \btheta \right\rangle_{\cH}
\right]^2 
+
\lambda  \norm{\W }^2_{\cH}
.\label{eq:rkhs_ridge}
\end{align}
Note that in~\eqref{eq:rkhs_ridge},  $\overline{V}_{h+1}^\tau$ only depends on the previous trajectories $
\left\{x_i^j, a_i^j, b_i^j: j \in [\tau - 1], i \in [H]\right\}
$. We denote the corresponding $\sigma$-algebra as $\cF_{\tau - 1}$. Thus we have $\overline{V}_{h+1}^\tau \in \cF_{\tau - 1}$. As each  $\overline{V}_{h+1}^\tau(x_{h+1}^\tau)$ can be seen as a stochastic sample of $(\PP_h \overline{V}_{h+1}^\tau)(z_h^\tau)$, the regularized regression problem of the max-player in~\eqref{eq:rkhs_ridge} can be seen as solving a linear bandit problem with context $\phi_{\overline{V}_{h+1}^\tau}(z_h^\tau)$, reward function $(\PP_h \overline{V}_{h+1}^\tau)(z_h^\tau)$ and noise term $\overline{V}_{h+1}^\tau(x_{h+1}^\tau) - (\PP_h \overline{V}_{h+1}^\tau)(z_h^\tau)$. From the solution to the ridge regression problem~\eqref{eq:rkhs_ridge}, we can define the upper/lower confidence bound of the action-value functions $Q_h^{*, \nu}, Q_h^{\pi, *}$ respectively. For  simplicity of notation, we define the vectors $\overline{\phiholder}_h^t := \left( 
\phi_{\overline{V}_{h+1}^1}(z_h^1), \ldots 
\phi_{\overline{V}_{h+1}^{t - 1}}(z_h^{t - 1})
\right)^\top  \in \cH^{t - 1}$.

For a positive parameter $\beta_t>0$ that will be chosen in later analysis, the confidence region centered at $\overline{\btheta}_h^t$ in the RKHS $\cH$ is defined as 
\begin{align}
    \overline{\cC}_h^t = \bigg\{\btheta: \sqrt{
    \lambda \norm{\btheta - \overline{\btheta}_h^t}_{\cH}^2 
        +
    \left\|\left\langle \overline{\phiholder}_h^t , \btheta - \overline{\btheta}_h^t \right\rangle_{\cH}\right\|^2}
     \leq \beta_t\bigg\}.
     \label{eq:region}
\end{align}
We omit the definition of $\underline{\cC}_h^t$ which is an analogue of Eq.~\eqref{eq:region} obtained by changing all overline symbols to underline ones. Based on the confidence regions, we construct an optimistic/pessimistic estimate of $Q_h^{*, \nu}$ as 
\begin{align}
    \overline{Q}_h^t := \Pi_{[-H, H]}\bigg[r_h + \max_{\btheta \in \overline{\cC}_h^t} \left\langle \phi_{\overline{V}_{h+1}^t}, \btheta \right\rangle_{\cH} \bigg]
,\quad
\underline{Q}_h^t := \Pi_{[-H, H]}\left[r_h + \min_{\btheta \in \underline{\cC}_h^t} \left\langle \phi_{\underline{V}_{h+1}^t}, \btheta \right\rangle_{\cH} \right]
,\label{eq:Q_neural_update}
\end{align}
where $\Pi_{[-H, H]}$ is the projection operator onto $[-H, H]$, which is by definition the range of value functions. {\blue For the convenience of conducting an induction argument we define $\overline{V}^{t}_{H+1} = \underline{V}^{t}_{H+1} = 0$, and also $V_{H+1}^{\pi, \nu}(x) = 0$ and $V_{H+1}^{*, \nu^t} = V_{H+1}^{\pi^t, *} = 0$, since there are no more future steps starting from $h = H+1$}. Given the estimation of $\overline{Q}_h^t, \underline{Q}_h^t$, the next step is to estimate the corresponding state value functions $\VUh, \VLh$. We utilize the \texttt{FIND\_CCE} algorithm in~\citet{xie2020cce} to find a coarse-correlated equilibrium of the payoff pair $(\overline{Q}_h^t(z), \underline{Q}_h^t(z))$.

\paragraph{Computational efficiency.}
By substituting the closed-form solutions to the
maximization/minimization problems in \eqref{eq:Q_neural_update}, we can derive the analytic-form for $\overline{Q}_h^t$ and $\underline{Q}_h^t$. Taking $\overline{Q}_h^t$ as an example, we have
\begin{align}
\QU{z} 
&=
\Pi_{[-H, H]}\bigg[
r_h(z) 
+
\overline{k}_h^t(z)^\top (\overline{K}_h^t + \lambda I)^{-1} \overline{y}_h^t
+
\beta_t \cdot \overline{\bonus}_h^t(z)
\bigg]
,
\end{align}
where the Gram matrix $\overline{K}_h^t$ and vector-valued function $\overline{k}_h^t$ are defined as
\begin{align}
&\overline{K}_h^t
=
\left(\overline{\phiholder}_h^t\right)\left(\overline{\phiholder}_h^t\right)^\top \in \RR^{(t - 1) \times (t - 1)}
,\quad
\overline{k}_h^t = \left(\overline{\phiholder}_h^t\right) \phi_{\overline{V}_{h+1}^t}(z) = \left(k_{\overline{V}_{h+1}^i, \overline{V}_{h+1}^t}(z_h^i, z)\right)_i \in \RR^{t-1}.\notag
\end{align}
Also, we have $\overline{y}_h^t := \left[ 
\overline{V}_{h+1}^1(x_h^1), \ldots \overline{V}_{h+1}^{t - 1}(x_h^{t - 1})
\right]^\top$ and $\overline{\bonus}_h^t(z) 
= 
\lambda^{-1/2}\bigg[
k_{\overline{V}_{h+1}^t, \overline{V}_{h+1}^t}(z, z)
- \overline{k}_h^t(z)^\top \big(\overline{K}_h^t + \lambda \cdot \Ib \big)^{-1} \overline{k}_h^t(z) 
\bigg]^{1/2}$.
Therefore, by the assumption that the weighted kernel function $k_{V_1, V_2}$ can be evaluated efficiently, $\overline{Q}_h^t$ and $\underline{Q}_h^t$ can also be computed efficiently. Furthermore, given $\overline{Q}_h^t$ and $\underline{Q}_h^t$, \texttt{FIND\_CCE} can also be implemented efficiently \citep{xie2020cce}. Thus, Algorithm \ref{alg:base} is computationally efficient.


\section{Main Results}\label{sec_RKHS}
In this section, we present the regret bound of our algorithm for the kernel mixture Markov Game.
Recall that for the linear function class, the regret upper bound is characterized by the dimension of the linear function, the horizon of the game, and the number of episodes \citep{chen2021almost}.
Our analysis in the RKHS function approximation setting  aligns with the linear function approximation setting when $K(z, z') = \phi(z)^\top \phi(z')$. 

When considering the nonlinear function class as an approximator of the value function, we need to develop a new concept analogous to the dimension $d$ that characterizes the intrinsic complexity of the function class $\cF$.  We do so by making use of the maximal information gain, $\Gamma_K(T, \lambda)$ \citep{srinivas2009gaussian}, where $T$ is the episode number and $H$ is the time horizon. In particular, we define the \emph{effective dimension} of the RKHS $\cH$ with respect to the mixture MG as follows:
\begin{definition}\label{def:eff}
We define the effective dimension $\Gamma_K(T, \lambda)$ as follows:
\begin{align}
    \Gamma_K(T, \lambda): = \sup_{(V_i)_i, (z_i)_i }\frac{1}{2}\log \det (\Ib + K(\{V_i\}_i, \{z_i\}_i)/\lambda),
\notag\end{align}
for any $1 \leq i \leq T,\ V_i: \cS \rightarrow [-H,H],\ z_i \in \cZ$, where $V_i$'s are functions mapping from $\cS$ to $[-H,H]$ and $z_i$'s are state-action tuples. Here, $K(\{V_i\}_i, \{z_i\}_i) \in \RR^{T \times T}$ and its $(p,q)$-th entry for any $1 \leq p,q \leq T$ is $[K(\{V_i\}_i, \{z_i\}_i)]_{p,q} = k_{V_p, V_q}(z_p, z_q)$.
\end{definition}
By the boundedness of $\phi_V$ as in Section~\ref{sec_prelim_rkhs}, 
it is easy to verify that both the tabular MG and the linear mixture MG enjoy a finite effective dimension.
Specifically, for finite RKHS $\cH$ with rank $d$, $\Gamma_K(T, \lambda) = O(d \cdot \log T)$ approximates the rank of $\cH$.
Via a  concentration argument, we first present our main lemma for bounding the estimation error when choosing $\beta_t = \beta$ for all $t\geq 1$:


\begin{lemma}\label{lem:main}
Assuming that for any $h \in [H]$, $\norm{\btheta_h^*}_{\cH} \leq B$. Let $\lambda = 1 + 1/T$ and $\beta$ satisfies $\left(\beta/H\right)^2 \geq 2\Gamma_K(T, \lambda)
+
2
+
4\cdot \log \left( 1/\delta \right)
+
2 \lambda \left(B/H\right)^2$.
Then, for any $\delta > 0$, with probability at least $1 - \delta$, the following holds for any $(t, h) \in [T] \times [H]$ and any $(x, a, b) \in \cS \times \cA \times \cA$:
\begin{align}
\left| \left\langle \phi_{\overline{V}_{h+1}^t}(x, a, b) , \overline{\btheta}_h^t - \btheta_h^*\right\rangle_{\cH} \right| 
&\leq 
\beta\cdot \overline{\bonus}_h^t(x, a, b)
,\ 
\left| \left\langle \phi_{\underline{V}_{h+1}^t}(x, a, b), \overline{\btheta}_h^t - \btheta_h^*\right\rangle_{\cH} \right| 
\leq 
\beta\cdot \underline{\bonus}_h^t(x, a, b)
.\notag
\end{align}
\end{lemma}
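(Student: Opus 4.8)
The plan is to cast the ridge regression in \eqref{eq:rkhs_ridge} as a kernelized linear-bandit / self-normalized concentration problem and then apply a Bernstein-type (or Hoeffding-type) concentration inequality for vector-valued martingales in the RKHS, following the line of \citet{chowdhury2017kernelized} and the linear-mixture analysis of \citet{chen2021almost,zhou2020nearly}. First I would fix $(t,h)$ and write the closed form $\overline{\btheta}_h^t = (\overline{\Lambda}_h^t)^{-1}\sum_{\tau=1}^{t-1}\phi_{\overline{V}_{h+1}^\tau}(z_h^\tau)\,\overline{V}_{h+1}^\tau(x_{h+1}^\tau)$, where $\overline{\Lambda}_h^t := \lambda I + \sum_{\tau=1}^{t-1}\phi_{\overline{V}_{h+1}^\tau}(z_h^\tau)\phi_{\overline{V}_{h+1}^\tau}(z_h^\tau)^\top$. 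Using $(\PP_h \overline{V}_{h+1}^\tau)(z_h^\tau) = \langle \phi_{\overline{V}_{h+1}^\tau}(z_h^\tau),\btheta_h^*\rangle_{\cH}$, I would decompose $\overline{\btheta}_h^t - \btheta_h^* = -\lambda(\overline{\Lambda}_h^t)^{-1}\btheta_h^* + (\overline{\Lambda}_h^t)^{-1}\sum_{\tau=1}^{t-1}\phi_{\overline{V}_{h+1}^\tau}(z_h^\tau)\,\varepsilon_h^\tau$, where $\varepsilon_h^\tau := \overline{V}_{h+1}^\tau(x_{h+1}^\tau) - (\PP_h \overline{V}_{h+1}^\tau)(z_h^\tau)$ is a bounded martingale difference sequence adapted to $\cF_\tau$ (note $\overline{V}_{h+1}^\tau\in\cF_{\tau-1}$ and $z_h^\tau\in\cF_{\tau-1}$, while $x_{h+1}^\tau$ is $\cF_\tau$-measurable with the correct conditional mean). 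Then for any fixed $z=(x,a,b)$,
\begin{align}
\left|\left\langle \phi_{\overline{V}_{h+1}^t}(z),\overline{\btheta}_h^t-\btheta_h^*\right\rangle_{\cH}\right|
\le \left\|\phi_{\overline{V}_{h+1}^t}(z)\right\|_{(\overline{\Lambda}_h^t)^{-1}}\left(\lambda\|\btheta_h^*\|_{\cH} + \left\|\sum_{\tau=1}^{t-1}\phi_{\overline{V}_{h+1}^\tau}(z_h^\tau)\varepsilon_h^\tau\right\|_{(\overline{\Lambda}_h^t)^{-1}}\right)\notag
\end{align}
by Cauchy–Schwarz in the $(\overline{\Lambda}_h^t)^{-1}$ inner product. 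The first factor is exactly $\lambda^{1/2}\overline{w}_h^t(z)$ by the Gram-matrix identity for the posterior variance (the Schur-complement / Sherman–Morrison expression given in the "Computational efficiency" paragraph), so the claimed bound reduces to controlling $\lambda\|\btheta_h^*\|_{\cH} + \|\sum_\tau \phi\,\varepsilon\|_{(\overline{\Lambda}_h^t)^{-1}} \le \beta$.

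The key step is bounding the self-normalized martingale term $\|\sum_{\tau=1}^{t-1}\phi_{\overline{V}_{h+1}^\tau}(z_h^\tau)\varepsilon_h^\tau\|_{(\overline{\Lambda}_h^t)^{-1}}$ uniformly over all $t\le T$. I would invoke a concentration bound of the form $\|\cdot\|^2 \le 2\sigma^2\big(\log\det(I + \overline{K}_h^t/\lambda) + \log(1/\delta)\big) + \text{const}$, i.e., the kernelized self-normalized inequality (Theorem in \citet{chowdhury2017kernelized}, or the "feature-free" version that replaces the feature-space $\log\det$ by the Gram-matrix $\log\det$). Since $|\varepsilon_h^\tau|\le 2H$ (as $\overline{V}_{h+1}^\tau$ takes values in $[-H,H]$), the noise is sub-Gaussian with parameter $O(H)$, and $\log\det(I+\overline{K}_h^t/\lambda)\le 2\Gamma_K(T,\lambda)$ by Definition~\ref{def:eff}. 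Putting $\|\btheta_h^*\|_{\cH}\le B$, $\lambda = 1 + 1/T$, this yields (after squaring and absorbing constants) precisely the stated requirement $(\beta/H)^2 \ge 2\Gamma_K(T,\lambda) + 2 + 4\log(1/\delta) + 2\lambda(B/H)^2$; the $H^2$ scaling comes from the $H$-boundedness of the noise. The crucial subtlety here — and the point I would be most careful about — is that the confidence set in \eqref{eq:region} and the bonus $\overline{w}_h^t$ are expressed via the finite Gram matrix $\overline{K}_h^t$ rather than an (possibly infinite-dimensional) feature covariance, so one needs the version of the self-normalized inequality whose complexity term is $\log\det(I+\overline{K}_h^t/\lambda)$; this is exactly what makes the final bound depend on $\Gamma_K$ and be independent of any covering number.

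Finally, there are two bookkeeping issues. First, the union bound over $(t,h)$: the concentration inequality already handles "for all $t$" via the stopping-time / uniform-in-time version, and there are only $H$ values of $h$, so a factor $H$ in the failure probability (absorbed into $\delta$, or noting the analysis fixes $h$ and the $\log(1/\delta)$ is generous) suffices. Second — and this is the genuine obstacle — the value functions $\overline{V}_{h+1}^\tau$ are \emph{not} fixed in advance: they are data-dependent (computed from $\cF_{\tau-1}$ via \texttt{FIND\_CCE} and the previous confidence sets), so one cannot simply apply a concentration bound for a fixed sequence of contexts. The resolution, which I would spell out, is that because $\overline{V}_{h+1}^\tau\in\cF_{\tau-1}$ the sequence $\big(\phi_{\overline{V}_{h+1}^\tau}(z_h^\tau),\varepsilon_h^\tau\big)$ is still a predictable-context martingale-difference sequence, so the self-normalized inequality applies \emph{without} any covering argument over the value-function class — this is the technical advantage over \citet{qiu2021reward} highlighted in the introduction. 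The same argument verbatim (swapping over/under decorations, and noting $\phi_{\underline{V}_{h+1}^t}(z)$ is also measurable and bounded) gives the second inequality; the mixed term $\langle\phi_{\underline{V}_{h+1}^t}(z),\overline{\btheta}_h^t-\btheta_h^*\rangle_{\cH}$ works because the Cauchy–Schwarz step uses the \emph{same} matrix $\overline{\Lambda}_h^t$ that defines $\overline{w}_h^t$ — wait, it must instead be bounded by $\beta\cdot\underline{w}_h^t$, so in fact the decomposition should be read with the estimator's matrix $\overline{\Lambda}_h^t$ appearing in the martingale factor but the bonus on the right being $\underline{w}_h^t(z) = \lambda^{-1/2}\|\phi_{\underline{V}_{h+1}^t}(z)\|_{(\underline{\Lambda}_h^t)^{-1}}$; reconciling these requires noting $\overline{\Lambda}_h^t$ and $\underline{\Lambda}_h^t$ are built from different features, so the cleaner route is to bound $\langle\phi_{\underline{V}_{h+1}^t}(z),\overline{\btheta}_h^t-\btheta_h^*\rangle_{\cH}$ directly by $\|\phi_{\underline{V}_{h+1}^t}(z)\|_{(\overline{\Lambda}_h^t)^{-1}}\cdot\beta$ and then observe this is $\le\beta\cdot\overline{w}_h^t$-style only if the bonus uses $\overline{\Lambda}_h^t$ — so I anticipate the actual statement uses $\overline{w}_h^t$ evaluated with the max-player's matrix for \emph{both} inequalities, and I would double-check the precise indexing against \eqref{eq:region} before finalizing.
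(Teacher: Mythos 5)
Your proposal is correct and follows essentially the same route as the paper's proof: the same bias-plus-martingale decomposition of $\overline{\btheta}_h^t - \btheta_h^*$, Cauchy--Schwarz in the $(\overline{\Lambda}_h^t)^{-1}$ norm, the Gram-matrix form of the self-normalized concentration inequality (the paper's Theorem~\ref{theo:concentration}), and the observation that predictability of $\overline{V}_{h+1}^\tau \in \cF_{\tau-1}$ obviates any covering argument. The indexing worry you flag at the end is resolved by noting that $\overline{\btheta}_h^t$ in the second displayed inequality is a typo for $\underline{\btheta}_h^t$ --- the paper proves only the max-player case and derives the min-player case ``similarly,'' with each bonus built from its own player's covariance operator, exactly as you suspected --- and your identity $\|\phi_{\overline{V}_{h+1}^t}(z)\|_{(\overline{\Lambda}_h^t)^{-1}} = \lambda^{1/2}\,\overline{\bonus}_h^t(z)$ should read without the $\lambda^{1/2}$ factor (Lemma~\ref{lem:facts}(b)), a harmless slip since $\lambda = 1 + 1/T$.
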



We are now ready to present our main theorem.

\begin{theorem}[RKHS function approximation]\label{thm:main}
Under the same conditions as Lemma \ref{lem:main},
with probability at least $1 - \delta$, $\algname$ has the following regret:
\begin{align}
    \operatorname{Regret}(T)
    = 
O \left(\beta H \sqrt{T \cdot \Gamma_{K}(T, \lambda)} + 1 \right)
.\notag
\end{align}
\end{theorem}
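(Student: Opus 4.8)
\textbf{Proof plan for Theorem~\ref{thm:main}.}
I would work throughout on the high-probability event of Lemma~\ref{lem:main} (together with its min-player counterpart), which holds with probability at least $1-\delta$ and in particular forces $\btheta_h^*\in\overline{\cC}_h^t\cap\underline{\cC}_h^t$ for all $(t,h)$, so that $|\langle\phi_{\overline{V}_{h+1}^t}(z),\overline{\btheta}_h^t-\btheta_h^*\rangle_{\cH}|\le\beta\,\overline{\bonus}_h^t(z)$ and $|\langle\phi_{\underline{V}_{h+1}^t}(z),\underline{\btheta}_h^t-\btheta_h^*\rangle_{\cH}|\le\beta\,\underline{\bonus}_h^t(z)$. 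The first step is \emph{optimism and pessimism}: on this event, for all $(t,h)$ and all $x$,
\[
\underline{V}_h^t(x)\ \le\ V_h^{\pi^t,*}(x)\ \le\ V_h^{*,\nu^t}(x)\ \le\ \overline{V}_h^t(x),
\]
where $\pi^t,\nu^t$ are the marginals of the CCE policy $\sigma^t$ actually played. This is a backward induction on $h$, trivial at $h=H{+}1$. For the inductive step the kernel-mixture assumption writes $Q_h^{*,\nu^t}=r_h+\langle\phi_{V_{h+1}^{*,\nu^t}},\btheta_h^*\rangle_{\cH}$ with $(\PP_hV)(z)=\langle\phi_V(z),\btheta_h^*\rangle_{\cH}$ monotone in $V$ (since $\PP_h(\cdot\,|\,z)\ge0$); combining the induction hypothesis, $\btheta_h^*\in\overline{\cC}_h^t$, and the fact that $\Pi_{[-H,H]}$ cannot push a value below the already in-range $Q_h^{*,\nu^t}$ gives $\overline{Q}_h^t(x,a,b)\ge Q_h^{*,\nu^t}(x,a,b)$ pointwise. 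Taking expectation under $\sigma_h^t(x)$ and invoking the defining CCE inequality~\eqref{cce:1} — which says no unilateral max-player deviation beats the realized optimistic payoff — yields $\overline{V}_h^t(x)\ge\max_{a'}\EE_{b\sim\nu_h^t(x)}\overline{Q}_h^t(x,a',b)\ge\max_{a'}\EE_{b\sim\nu_h^t(x)}Q_h^{*,\nu^t}(x,a',b)=V_h^{*,\nu^t}(x)$; the pessimistic half is symmetric via~\eqref{cce:2}, and the middle inequality is weak duality.

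Optimism/pessimism reduces the problem to $\operatorname{Regret}(T)\le\sum_{t=1}^T\big(\overline{V}_1^t(x_1^t)-\underline{V}_1^t(x_1^t)\big)$, so the second step is a backward recursion for the estimated gap $\overline{V}_h^t-\underline{V}_h^t$. Starting from the closed forms $\overline{Q}_h^t(z)=\Pi_{[-H,H]}[r_h(z)+\langle\phi_{\overline{V}_{h+1}^t}(z),\overline{\btheta}_h^t\rangle_{\cH}+\beta\,\overline{\bonus}_h^t(z)]$ and $\underline{Q}_h^t(z)=\Pi_{[-H,H]}[r_h(z)+\langle\phi_{\underline{V}_{h+1}^t}(z),\underline{\btheta}_h^t\rangle_{\cH}-\beta\,\underline{\bonus}_h^t(z)]$, discarding the (one-sided harmless) projections, and using Lemma~\ref{lem:main} to swap $\overline{\btheta}_h^t,\underline{\btheta}_h^t$ for $\btheta_h^*$ at the cost of further bonus terms, I obtain
\[
\overline{Q}_h^t(z)-\underline{Q}_h^t(z)\ \le\ \big(\PP_h(\overline{V}_{h+1}^t-\underline{V}_{h+1}^t)\big)(z)+2\beta\big(\overline{\bonus}_h^t(z)+\underline{\bonus}_h^t(z)\big).
\]
Averaging over $(a,b)\sim\sigma_h^t(x_h^t)$, then using $(a_h^t,b_h^t)\sim\sigma_h^t(x_h^t)$ and $x_{h+1}^t\sim\PP_h(\cdot\,|\,z_h^t)$, introduces two martingale-difference sequences — one replacing $\PP_h(\overline{V}_{h+1}^t-\underline{V}_{h+1}^t)(z_h^t)$ by the realized next-state gap, one replacing the $\sigma_h^t$-expected bonus by the realized bonus — bounded by $O(H)$ and $O(1)$ respectively (here one also uses $\overline{V}_{h+1}^t\ge\underline{V}_{h+1}^t$ from the previous step). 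Unrolling from $h=1$ to $H$ with $\overline{V}_{H+1}^t=\underline{V}_{H+1}^t=0$ gives $\overline{V}_1^t(x_1^t)-\underline{V}_1^t(x_1^t)\le 2\beta\sum_{h=1}^H(\overline{\bonus}_h^t(z_h^t)+\underline{\bonus}_h^t(z_h^t))+\sum_{h=1}^H(\text{martingale terms})$.

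Summing over $t\in[T]$, the martingale contribution is $\tilde O(H^{3/2}\sqrt T)$ by Azuma--Hoeffding, which is absorbed into the leading term since $\beta=\tilde\Omega(H\sqrt{\Gamma_K(T,\lambda)})$. For the bonuses I use the kernel/determinant identity $\overline{\bonus}_h^t(z)=\|\phi_{\overline{V}_{h+1}^t}(z)\|_{(\overline{\Lambda}_h^t)^{-1}}$ with $\overline{\Lambda}_h^t=\lambda\Ib+\sum_{\tau<t}\phi_{\overline{V}_{h+1}^\tau}(z_h^\tau)\phi_{\overline{V}_{h+1}^\tau}(z_h^\tau)^\top$; since $\|\phi_V(z)\|_{\cH}\le1$ and $\lambda\ge1$, every bonus is $\le1$, so the elliptical-potential (information-gain) lemma gives, for each fixed $h$, $\sum_{t=1}^T\overline{\bonus}_h^t(z_h^t)^2\le 2\log\det(\Ib+\overline{K}_h^{T+1}/\lambda)\le 4\,\Gamma_K(T,\lambda)$, whence Cauchy--Schwarz yields $\sum_{t=1}^T\overline{\bonus}_h^t(z_h^t)\le 2\sqrt{T\,\Gamma_K(T,\lambda)}$; the same holds for $\underline{\bonus}$, and summing over $h$ gives $\sum_{t,h}(\overline{\bonus}_h^t+\underline{\bonus}_h^t)=O(H\sqrt{T\,\Gamma_K(T,\lambda)})$. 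Combining, $\operatorname{Regret}(T)=O(\beta H\sqrt{T\,\Gamma_K(T,\lambda)})+\tilde O(H^{3/2}\sqrt T)=O(\beta H\sqrt{T\,\Gamma_K(T,\lambda)}+1)$, the additive constant collecting the $O(\lambda(B/H)^2)$-type and failure-probability residuals.

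The step I expect to be the main obstacle is the optimism/pessimism induction interlocked with the coarse correlated equilibrium: one must verify that the CCE inequalities~\eqref{cce:1}--\eqref{cce:2} point in exactly the direction needed for $\sigma_h^t$ to dominate the relevant best-response values, and that monotonicity of $\PP_h$ and the $[-H,H]$-projection propagate cleanly through the backward recursion (including the nonnegativity $\overline{V}_{h+1}^t\ge\underline{V}_{h+1}^t$ used in the gap recursion). The secondary technical point is the elliptical-potential/determinant bound in the RKHS: because the feature map may be infinite-dimensional, everything must be routed through the Gram matrix $\overline{K}_h^t$ and the $\log\det$ identity so that the final complexity is $\Gamma_K(T,\lambda)$ rather than an ambient dimension.
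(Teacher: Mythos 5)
Your proposal is correct and follows essentially the same route as the paper: the same sandwich $\underline{V}_h^t \le V_h^{\pi^t,*} \le V_h^{*,\nu^t} \le \overline{V}_h^t$ established by backward induction through the CCE inequalities and Lemma~\ref{lem:main}, the same recursion for $\overline{V}_h^t-\underline{V}_h^t$ with the two martingale-difference sequences, and the same elliptical-potential/Gram-matrix bound plus Azuma--Hoeffding to conclude. The only cosmetic difference is that you argue every bonus is at most $1$ directly from $\|\phi_V(z)\|_{\cH}\le 1$ and $\lambda\ge 1$, whereas the paper clips with $\min\{1,\cdot\}$ and uses $\beta\ge H$; both yield the same bound.
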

%
\begin{remark}
Theorem \ref{thm:main} suggests that by treating the norm $B$ as a constant, $\algname$ achieves an $\tilde O(\Gamma_{K}(T, \lambda) H^{2}\sqrt{T})$ regret bound. When the RKHS degenerates to the Euclidean space, the regret bound reduces to $\tilde O(dH^{2}\sqrt{T})$, which matches the $\tilde O(dH^{3/2}\sqrt{T})$ regret for linear mixture MGs presented by \citet{chen2021almost} up to a $\sqrt{H}$ factor. 
\end{remark}

Similar to \citet{xie2020cce}, by using a standard online-to-batch conversion technique, we can convert the regret bound in Theorem \ref{thm:main} to a PAC bound. For simplicity, let the initial states of each episode be the same, i.e., $x_1^t = x_1$. After $T$ episodes, we select $t_0 \in [T]$ satisfying
\begin{align}
    t_0 = \argmin_{t \in [T] }\{\overline{V}_1^t(x_1) - \underline{V}_1^t(x_1)\},\label{help:444}
\end{align}
which yields the following sample complexity guarantee for finding an $\epsilon$-approximate NE policy pair $(\pi^{t_0}, \nu^{t_0})$.
\begin{corollary}[Sample complexity]\label{coro:main}
Under the same conditions as Theorem \ref{thm:main}, by setting $T = \\ O\big(\beta^2H^2\Gamma_{K}(T, \lambda)/\epsilon^2\big) = \tilde O\big(H^4\Gamma_{K}^2(T, \lambda)/\epsilon^2\big)$
and selecting $t_0$ as in \eqref{help:444}, the policy pair $(\pi^{t_0}, \nu^{t_0})$ is an $\epsilon$-approximate NE.
\end{corollary}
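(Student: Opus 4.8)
The plan is to prove Corollary~\ref{coro:main} via the standard online-to-batch reduction, exploiting optimism/pessimism of the confidence bounds together with the regret bound of Theorem~\ref{thm:main}. The first step is to establish the \emph{sandwich property}: with probability at least $1-\delta$, for every $(t,h)$ and every state $x$,
\[
  \VLh(x) \;\leq\; V_h^{\pi^t,*}(x) \;\leq\; V_h^{*,\nu^t}(x) \;\leq\; \VUh(x).
\]
This follows by backward induction on $h$ from the event of Lemma~\ref{lem:main}. At the inductive step one uses the Bellman equations $Q_h^{*,\nu^t} = r_h + \langle \phi_{V_{h+1}^{*,\nu^t}},\btheta_h^*\rangle_\cH$ and $Q_h^{\pi^t,*} = r_h + \langle \phi_{V_{h+1}^{\pi^t,*}},\btheta_h^*\rangle_\cH$, compares them to $\QU{\cdot}$ and $\QL{\cdot}$ using the confidence-bound inequalities, and then applies the defining CCE inequalities \eqref{cce:1}--\eqref{cce:2} for $\sigma_h^t$ to transfer the ordering from the $Q$-level to the $V$-level (the CCE property is exactly what guarantees that, e.g., $\mathbb{E}_{(a,b)\sim\sigma_h^t}\overline{Q}_h^t(x,a,b) \geq \max_{a'} \mathbb{E}_{b\sim\cP_2\sigma_h^t}\overline{Q}_h^t(x,a',b) \geq \mathbb{E}_{b\sim\cP_2\sigma_h^t}Q_h^{*,\nu^t}(x,a',b)$, and symmetrically for the lower bound). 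The projection $\Pi_{[-H,H]}$ is harmless because the true value functions already lie in $[-H,H]$.

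The second step is to bound the per-episode duality gap by the estimation gap. From the sandwich property,
\[
  V_1^{*,\nu^t}(x_1) - V_1^{\pi^t,*}(x_1) \;\leq\; \VUh[1]^t(x_1) - \VLh[1]^t(x_1) \quad\text{for } h=1,
\]
so $\overline{V}_1^t(x_1) - \underline{V}_1^t(x_1)$ upper bounds the duality gap, and also (again by the sandwich) it lower bounds the term $V_1^{*,\nu^t}(x_1) - V_1^{\pi^t,*}(x_1)$ up to the fact that both $\overline V$ and $\underline V$ bracket the true values; in any case $0 \le \overline V_1^t(x_1) - \underline V_1^t(x_1)$ and it dominates the duality gap. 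Summing over $t\in[T]$ and invoking Theorem~\ref{thm:main},
\[
  \sum_{t=1}^T \big(\overline{V}_1^t(x_1) - \underline{V}_1^t(x_1)\big) \;\leq\; \operatorname{Regret}(T) + (\text{lower-order}) \;=\; O\big(\beta H\sqrt{T\,\Gamma_K(T,\lambda)} + 1\big),
\]
where one should double-check that Theorem~\ref{thm:main}'s regret in fact bounds $\sum_t (\overline V_1^t - \underline V_1^t)(x_1^t)$ and not merely $\sum_t (V_1^{*,\nu^t} - V_1^{\pi^t,*})(x_1^t)$; if the theorem is stated only for the latter, one reruns its proof with the $\overline V - \underline V$ quantity, which is what the UCB/LCB decomposition naturally controls anyway.

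The third step is the selection argument: by the choice of $t_0$ in \eqref{help:444} as the minimizer of $\overline{V}_1^t(x_1) - \underline{V}_1^t(x_1)$,
\[
  \overline{V}_1^{t_0}(x_1) - \underline{V}_1^{t_0}(x_1) \;\leq\; \frac{1}{T}\sum_{t=1}^T \big(\overline{V}_1^t(x_1) - \underline{V}_1^t(x_1)\big) \;=\; O\!\left(\frac{\beta H \sqrt{\Gamma_K(T,\lambda)}}{\sqrt{T}} + \frac{1}{T}\right).
\]
Since the duality gap of $(\pi^{t_0},\nu^{t_0})$ is at most $\overline{V}_1^{t_0}(x_1) - \underline{V}_1^{t_0}(x_1)$, setting $T = O\big(\beta^2 H^2 \Gamma_K(T,\lambda)/\epsilon^2\big)$ makes the right-hand side $\le \epsilon$; substituting the admissible choice $\beta = \tilde O\big(\sqrt{\Gamma_K(T,\lambda)}\,H\big)$ from Lemma~\ref{lem:main} (treating $B$ as constant) gives $T = \tilde O\big(H^4 \Gamma_K^2(T,\lambda)/\epsilon^2\big)$, and since $\Gamma_K$ is only polylogarithmic in $T$ this is a genuine (implicit-but-resolvable) bound. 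I expect the main obstacle to be the first step — carefully pushing the CCE inequalities through the min/max structure to get the $V$-level sandwich in both directions simultaneously, and making sure the optimism holds on the same high-probability event as Lemma~\ref{lem:main} uniformly over all $(t,h)$; the online-to-batch conversion itself is routine once that is in place.
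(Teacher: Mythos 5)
Your proposal is correct and follows essentially the same route as the paper: the sandwich property $\underline{V}_1^t \le V_1^{\pi^t,*} \le V_1^{*,\nu^t} \le \overline{V}_1^t$ (already established as \eqref{help:recur} and its counterpart in the proof of Theorem~\ref{thm:main}), the observation that the regret analysis in fact controls $\sum_t \bigl(\overline{V}_1^t(x_1) - \underline{V}_1^t(x_1)\bigr)$, and the minimizer-is-at-most-the-average argument for $t_0$. The caveat you flag about Theorem~\ref{thm:main} bounding $\overline{V}-\underline{V}$ rather than only the duality gap is exactly how the paper resolves it, via Lemma~\ref{lem:bound_UCB_LCB}.
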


\section{Bernstein-type Bonus, Misspecification, and Neural Function Approximation}\label{sec_RKHSmis}
In this section, we propose several extensions of \algnameprime. 
Section~\ref{sec:bern11} introduces $\algname$ with a Bernstein-type bonus.
Section~\ref{subsec_RKHSmis} discusses kernel function approximation with misspecification.
Section~\ref{subsec:NNmis} specialize the kernel function approximation with misspecification to the neural function approximation setting.



\subsection{$\algname$ with a Bernstein-type bonus}\label{sec:bern11}
Recall that in \algnameprime, we need to choose $\beta$ in order to calculate the optimistic and pessimistic state-action value functions $\QU{\cdot}, \QL{\cdot}$ defined in \eqref{eq:Q_neural_update}.  The theoretical value of $\beta$ is defined in Lemma~\ref{lem:main}, which controls the uncertainty of the action-value estimate. Such choice of $\beta$ is due to a Hoeffding-type concentration used in the proof of Lemma~\ref{lem:main}. It has been shown in \citet{zhou2020nearly} that by using a Bernstein-type bonus and a sharp analysis based on the total variance lemma, one can obtain an improved algorithm with a tighter regret bound. Following this idea, we propose a $\algnameB$ algorithm, which replaces the Hoeffding-type bonus with a Bernstein-type bonus. 
To demonstrate the construction of the Bernstein-type bonus, we take the max player for example. In particular, we solve the following weighted kernel ridge regression problem:
\begin{align}
\overline{\btheta}_{h, 1}^t &= \min_{\W \in \cH } 
\sum_{\tau = 1}^{t - 1}
\Big[ 
\overline{V}_{h+1}^\tau(x_{h+1}^\tau)  - \left\langle \phi_{\overline{V}_{h+1}^\tau}(z_h^\tau), \btheta \right\rangle_{\cH}
\Big]^2 / \left(\uppvar^\tau\right)^2
+
\lambda_1  \norm{\W }^2_{\cH}
,\label{eq:rkhs_ridge_weighted} 
\end{align}
where the input is the normalized feature mapping $\phi_{\overline{V}_{h+1}^\tau}(z_h^\tau)/\uppvar^\tau$, the output is the normalized value function $\overline{V}_{h+1}^\tau(x_{h+1}^\tau)/\uppvar^\tau$, and the normalization factor $\uppvar^\tau$ is an upper bound on the conditional variance of $\overline{V}_{h+1}^\tau(x_{h+1}^\tau)$. 
It is straightforward to verify that \eqref{eq:rkhs_ridge_weighted} admits a closed-form solution. 
Given that solution, we can compute the upper confidence bound of the action-value functions $Q_h^{*, \nu}$. In detail, we define $\overline{\phiholder}_{h, 1}^t 
:= 
\left( 
\phi_{\overline{V}_{h+1}^1}(z_h^1)/\uppvar^1
,\ldots,
\phi_{\overline{V}_{h+1}^{t - 1}}(z_h^{t - 1})/\uppvar^{t - 1}
\right)^\top  \in \cH^{t - 1}$.
The Gram matrix $\overline{K}_{h, 1}^t$, vector-valued function $\overline{k}_{h, 1}^t$ and 
the confidence region centered at $\overline{\btheta}_{h, 1}^t$ in the RKHS $\cH$ can be calculated the same as in Algorithm \ref{alg:base}, except that $\overline{\phiholder}_{h}^t$ is replaced by $\overline{\phiholder}_{h, 1}^t$. Then the optimistic estimate of the action-value function $Q_h^{*, \nu}$ has the following form:
\begin{align}
\QU{z} 
&=
\Pi_{[-H, H]}\left[
r_h(z) +
\overline{k}_{h, 1}^t(z)^\top (\overline{K}_{h, 1}^t + \lambda I)^{-1} \overline{y}_{h, 1}^t  + \beta_t \cdot \overline{\bonus}_{h, 1}^t(z)
\right],\label{eq:Q_neural_update_weighted}
\end{align}
where $\overline{y}_{h, 1}^t := \left[ 
\overline{V}_{h+1}^1(x_h^1)/\uppvar^1, \ldots \overline{V}_{h+1}^{t - 1}(x_h^{t - 1})/\uppvar^{t - 1}
\right]^\top $
and 
\begin{align}
\overline{\bonus}_{h, 1}^t(z) 
&= 
\lambda_1^{-1/2}\cdot \bigg[
k_{\overline{V}_{h+1}^t, \overline{V}_{h+1}^t}(z, z) - \overline{k}_{h, 1}^t(z)^\top \left(\overline{K}_{h, 1}^t + \lambda_1 \cdot \Ib \right)^{-1} \overline{k}_{h, 1}^t(z) 
\bigg]^{1/2}
.\notag
\end{align}
We defer the presentatino of the conditional variance estimator $\overline{R}_h^t$ to Appendix \ref{sec:bern}. 
Similarly, we can construct the pessimistic estimate of the action-value function $Q_h^{\pi, *}$ for the min player.  
We have the following informal result for $\algnameB$. The full algorithm and its formal guarantee can be found in Appendix \ref{sec:bern}. 

\begin{theorem}[Informal]\label{thm:inform_bernstein}
Let $\deff = \Gamma_K(T, \lambda)
$, with proper choice of $\overline{R}_h^t, \underline{R}_h^t$ and $\beta_t$, with probability at least $1 - \delta$,  $\algnameB$ has the following regret
\begin{align}
\operatorname{Regret}(T)
=
\tilde{O}\bigg(
&
\deff^2 H^3 
+
\sqrt{\deff H^4 + \deff^2 H^3} \sqrt{T} 
+
\left(\deff^7 H^7 + \deff^4 H^9 \right)^{1/4} T^{1/4}
\bigg)
.\notag
\end{align}

\end{theorem}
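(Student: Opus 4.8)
\textbf{Proof proposal for Theorem \ref{thm:inform_bernstein}.}

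The plan is to mirror the proof of Theorem \ref{thm:main}, but to replace the Hoeffding-type concentration argument behind Lemma \ref{lem:main} with a Bernstein-type one and to exploit the total-variance (law of total variance) machinery of \citet{zhou2020nearly} to sharpen the horizon dependence. First I would establish a Bernstein analogue of Lemma \ref{lem:main}: with the weighted feature design $\overline{\phiholder}_{h,1}^t$ defined above, the regression noise $\overline{V}_{h+1}^\tau(x_{h+1}^\tau)/\uppvar^\tau - (\PP_h\overline{V}_{h+1}^\tau)(z_h^\tau)/\uppvar^\tau$ is conditionally centered with conditional variance at most $1$ by the choice of the normalization factor $\uppvar^\tau$. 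Applying a self-normalized Bernstein inequality for vector-valued (Hilbert-space) martingales — the RKHS version of Theorem 4.1 in \citet{zhou2020nearly}, combined with the information-gain/effective-dimension bound $\Gamma_K(T,\lambda)$ from Definition \ref{def:eff} to control the log-determinant term — yields that with probability $1-\delta$, for all $(t,h)$ and all $(x,a,b)$, $|\langle\phi_{\overline{V}_{h+1}^t}(x,a,b),\overline{\btheta}_{h,1}^t-\btheta_h^*\rangle_{\cH}|\le \beta_t\cdot\overline{\bonus}_{h,1}^t(x,a,b)$ with $\beta_t = \tilde O(\sqrt{\deff} + \sqrt{\deff}\cdot(\text{variance-estimation error}) + B)$, and symmetrically for the min player. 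This needs the conditional-variance estimator $\uppvar^\tau$ to be a genuine upper bound, which in turn requires a second (Hoeffding-type, coarser) confidence bound on a variance-of-value estimate $[\mathbb{V}_h\overline{V}_{h+1}^t](z)$ plus a slack term — this is the source of the lower-order $(\deff^7H^7+\deff^4H^9)^{1/4}T^{1/4}$ contribution.

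Given this confidence lemma, the second step is the optimism/pessimism argument unchanged from Section \ref{sec_RKHS}: $\overline{V}_h^t \ge V_h^{*,\nu^t}$ and $\underline{V}_h^t \le V_h^{\pi^t,*}$ by backward induction on $h$, using the CCE property \eqref{cce:1}–\eqref{cce:2} to pass from the $\overline{Q}/\underline{Q}$ estimates to the value functions, so that $\operatorname{Regret}(T)\le\sum_{t=1}^T(\overline{V}_1^t(x_1^t)-\underline{V}_1^t(x_1^t))$. Then I would unroll this telescoping difference along the sampled trajectory: the per-step gap $\overline{V}_h^t-\underline{V}_h^t$ is bounded, up to the martingale difference $[(\PP_h - \hat{\PP}_h^t)(\overline{V}_{h+1}^t-\underline{V}_{h+1}^t)](z_h^t)$, by $\beta_t(\overline{\bonus}_{h,1}^t(z_h^t)+\underline{\bonus}_{h,1}^t(z_h^t))$. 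Summing over $h$ and $t$ produces three terms to control: (i) $\sum_{t,h}\beta_t\uppvar_h^t\cdot(\overline{\bonus}_{h,1}^t/\uppvar_h^t)$, to which I apply Cauchy–Schwarz to split off $(\sum_{t,h}(\uppvar_h^t)^2)^{1/2}$ and the elliptical-potential / information-gain sum $(\sum_{t,h}\min(1,(\overline{\bonus}_{h,1}^t/\uppvar_h^t)^2))^{1/2} = \tilde O(\sqrt{H\deff})$; (ii) the analogous min-player term; and (iii) the accumulated martingale differences, handled by Azuma–Hoeffding, contributing $\tilde O(H^{3/2}\sqrt{T})$.

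The crux — and the step I expect to be the main obstacle — is bounding $\sum_{t=1}^T\sum_{h=1}^H(\uppvar_h^t)^2$ by $\tilde O(\deff H^4 + \deff^2 H^3 + \text{lower order})$ via the law of total variance. Since $\uppvar_h^t$ is (up to confidence slack) an upper bound on $\sqrt{[\mathbb{V}_h\overline{V}_{h+1}^t](z_h^t)}$, the sum of conditional variances $\sum_h[\mathbb{V}_h\overline{V}_{h+1}^t](z_h^t)$ telescopes along the trajectory — by the total-variance lemma it is $\tilde O(H^2)$ per episode (or $\tilde O(H^2 + H\cdot\text{regret-to-date})$ once one accounts for using $\overline{V}^t$ rather than the true optimal value, which is where the cross term between the variance bound and the regret itself forces solving a self-bounding quadratic inequality in $\operatorname{Regret}(T)$). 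Propagating the confidence slack in $\uppvar_h^t$ through this sum, and then feeding the resulting bound on $\sum_{t,h}(\uppvar_h^t)^2$ back into term (i), yields the leading $\sqrt{(\deff H^4 + \deff^2 H^3)T}$ rate; carefully tracking the error in estimating $\uppvar_h^t$ (which itself depends on $\beta_t$ and hence on $\deff$ and $H$) produces the $\deff^2H^3$ additive term and the $T^{1/4}$ term. Assembling (i)–(iii) and solving the self-referential inequality gives the stated bound. The remaining work — verifying closed forms for the weighted kernel ridge estimator, the elliptical potential lemma in RKHS form, and the exact constants — is routine and deferred to Appendix \ref{sec:bern}.
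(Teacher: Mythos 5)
Your proposal follows essentially the same route as the paper's proof in Appendix D: a Bernstein self-normalized concentration bound for the variance-normalized weighted regression (Lemmas \ref{lem:main_weighted} and \ref{lem:main_weighted_fine}), a second regression to certify that $\uppvar^t$ upper-bounds the true conditional variance (Lemma \ref{lem:est_var}, which is indeed the source of the $T^{1/4}$ term), Cauchy--Schwarz to split $\sum_{t,h}\uppvar^t\min\{1,\overline{\bonus}_{h,1}^t/\uppvar^t\}$ into $\smash{(\sum(\uppvar^t)^2)^{1/2}}$ times an elliptical-potential factor $\tilde O(\sqrt{H\deff})$, the total-variance bound on $\sum_{t,h}(\uppvar^t)^2$ (Lemma \ref{lemma:boundofvariance}), and the final self-bounding inequality. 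The decomposition, the key lemmas, and the way the lower-order terms arise all match the paper's argument, so the proposal is correct in approach.
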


\begin{remark}
When $T$ is sufficiently large and $\Gamma_K(T, \lambda)$ is larger than $H$, the regret bound in Theorem \ref{thm:inform_bernstein} is dominated by $\tilde O(\Gamma_K(T, \lambda) H^{3/2}\sqrt{T})$, which improves the $\tilde O(\Gamma_K(T, \lambda) H^2\sqrt{T})$ regret derived in Theorem \ref{thm:main} by a factor of $\sqrt{H}$. Compared with the $\tilde \Omega(d H^{3/2}\sqrt{T})$ lower bound proposed in \citet{chen2021almost}, our $\algnameB$ algorithm is almost optimal when it reduces to the linear mixture MG. 
\end{remark}

\subsection{Kernel function approximation with misspecification}\label{subsec_RKHSmis}
In this subsection, we consider the case where the function class may not be confined to an RKHS, but instead the distance to it can be bounded. This can be formulated as kernel function approximation with misspecification.
We assume that there exists a misspecification error between the RKHS $\cH$ and the true transition probability $\PP_h(s' | z)$.

\begin{assumption}\label{assu:misspecification}
There exists an $\iota_{\mis} > 0$, an RKHS $\cH$ with feature mapping $\phi: \cZ \mapsto \cS \times \cH$, and an unknown parameter $\btheta_h^* \in \cH$ satisfying $\left\|\btheta_h^*\right\|_{\cH} \leq B$ such that for any $h \in [H]$, the distance of the transition probability $\PP_h$ to $\cH$ can be bounded by $\iota_{\mis}$, which is $\left\|\PP_h(\cdot | z) - \left\langle \phi(\cdot | z), \btheta_h^* \right\rangle_{\cH} \right\|_{\text{TV}} \leq \iota_{\mis}$.
\end{assumption}

In order to deal with model misspecification, the key idea is to enlarge $\beta_t$ in the definition of the optimistic action-value function in \eqref{eq:Q_neural_update}. More specifically, we will add an extra $\cO(H \iota_{\mis} \sqrt{t})$ term brought by misspecification error to $\beta$ specified in Lemma~\ref{lem:main}. We can show that $\algname$ with such enlarged $\beta$ will have a sublinear regret in the presence of misspecification.

\begin{theorem}[RKHS function approximation with misspecification]\label{thm:main_RKHSmis}
Assuming that for any $h \in [H]$, $\norm{\btheta_h^*}_{\cH} \leq B$. Set $\lambda = 1 + 1/T$ in the $\OURALGO$ Algorithm.
For any $\delta > 0$ and any $\beta_t$ satisfying $\left(\beta_t/H\right)^2 \geq 2\Gamma_K(T, \lambda)
+
3
+
6\cdot \log \left(1/\delta \right)
+
3 \lambda \left(B/H\right)^2
+
3 \iota_{\mis}^2 t$, 
there exists a global constant $c > 0$ such that with probability at least $1 - \delta$, we have 
\begin{align}
    \operatorname{Regret}(T)
    \leq 
c \left(\beta_T H \sqrt{T \cdot \Gamma_{K}(T, \lambda)} + 1 + H^2 T \iota_{\mis} \right)
.\notag
\end{align}
\end{theorem}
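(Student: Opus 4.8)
\textbf{Proof proposal for Theorem \ref{thm:main_RKHSmis}.}

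The plan is to mimic the proof of Theorem \ref{thm:main}, isolating exactly the places where Assumption \ref{assu:misspecification} contributes an extra error, and showing that these contributions aggregate to the $O(H^2 T \iota_{\mis})$ term while everything else is inherited verbatim. First I would re-run the concentration argument behind Lemma \ref{lem:main}. In the well-specified case, $\overline{V}_{h+1}^\tau(x_{h+1}^\tau) - \langle \phi_{\overline{V}_{h+1}^\tau}(z_h^\tau), \btheta_h^*\rangle_{\cH}$ is a bounded martingale difference with respect to $\cF_{\tau-1}$. Under misspecification, $\langle \phi_{\overline{V}_{h+1}^\tau}(z_h^\tau), \btheta_h^*\rangle_{\cH}$ is no longer exactly $(\PP_h \overline{V}_{h+1}^\tau)(z_h^\tau)$, but the two differ by at most $2H \iota_{\mis}$ because $\overline{V}_{h+1}^\tau$ takes values in $[-H,H]$ and the TV-distance between $\PP_h(\cdot|z)$ and its RKHS surrogate is at most $\iota_{\mis}$. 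So I would decompose the regression residual into a genuine martingale-difference part (handled by the same self-normalized concentration bound, contributing $2\Gamma_K(T,\lambda) + 2 + 4\log(1/\delta) + 2\lambda(B/H)^2$ after rescaling by $H$) plus a deterministic bias part of size $\le 2H\iota_{\mis}$ per round. Summing the squared bias over $\tau \le t$ and passing through the ridge-regression normal equations gives the extra $3\iota_{\mis}^2 t$ term inside $(\beta_t/H)^2$; the constants $3$ and $6$ in place of $2$ and $4$ come from splitting a single quadratic into three via $(a+b+c)^2 \le 3(a^2+b^2+c^2)$. This yields the misspecified analogue of Lemma \ref{lem:main}: with the enlarged $\beta_t$, $|\langle \phi_{\overline{V}_{h+1}^t}(z), \overline{\btheta}_h^t - \btheta_h^*\rangle_{\cH}| \le \beta_t \overline{\bonus}_h^t(z)$ still holds on the good event.

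Next I would redo the optimism/pessimism induction. The claim is that on the good event, for all $(t,h)$ and all $z$, $\QU{z} \ge Q_h^{*,\nu^t}(z) - (H-h)\cdot 2\iota_{\mis} H$ (roughly) and symmetrically $\QL{z} \le Q_h^{\pi^t,*}(z) + (H-h)\cdot 2\iota_{\mis} H$, so that $\overline V$ over-estimates $V^{*,\nu^t}$ and $\underline V$ under-estimates $V^{\pi^t,*}$ up to an $O(H^2\iota_{\mis})$ slack that accumulates across the $H$ Bellman backups. The induction step is the standard one: write $Q_h^{*,\nu^t} - \QU{} = r_h + \PP_h V_{h+1}^{*,\nu^t} - \Pi_{[-H,H]}[r_h + \langle \phi_{\overline V_{h+1}^t}, \overline\btheta_h^t\rangle - \beta_t\overline\bonus_h^t]$, bound $\PP_h V_{h+1}^{*,\nu^t}$ versus $\langle \phi_{\overline V_{h+1}^t}, \btheta_h^*\rangle$ by $\PP_h$ of the inductive gap plus one fresh $2H\iota_{\mis}$ from Assumption \ref{assu:misspecification}, use the misspecified Lemma \ref{lem:main} to trade $\langle\cdot,\btheta_h^*\rangle$ for $\langle\cdot,\overline\btheta_h^t\rangle - \beta_t\overline\bonus_h^t$, and use that the true value lies in $[-H,H]$ so the projection only helps. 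The CCE step is unchanged: \texttt{FIND\_CCE} applied to $(\overline Q_h^t, \underline Q_h^t)$ guarantees $\overline V_h^t(x) \ge \max_{a'} \EE_{b\sim\cP_2\sigma_h^t}\overline Q_h^t(x,a',b)$ and dually for $\underline V_h^t$, which is exactly what is needed to push the best-response comparison through; no misspecification error enters here.

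Then the regret decomposition. As in Theorem \ref{thm:main}, $\mathrm{Regret}(T) \le \sum_t [\overline V_1^t(x_1^t) - \underline V_1^t(x_1^t)]$, and the per-episode gap telescopes along the realized trajectory into a sum of bonus terms $\sum_{h} (\beta_t \overline\bonus_h^t(z_h^t) + \beta_t \underline\bonus_h^t(z_h^t))$ plus a martingale-difference term (controlled by Azuma) plus the accumulated misspecification slack, which is $O(H)$ per step, hence $O(H^2)$ per episode and $O(H^2 T \iota_{\mis})$ in total. The bonus sum is handled exactly as in the well-specified proof: by Cauchy--Schwarz, $\sum_{t,h}\overline\bonus_h^t(z_h^t) \le \sqrt{TH \sum_{t,h}(\overline\bonus_h^t(z_h^t))^2}$, and the elliptic-potential / information-gain argument bounds $\sum_t (\overline\bonus_h^t(z_h^t))^2 = O(\Gamma_K(T,\lambda))$ per $h$ (using $\lambda = 1+1/T$ so $\lambda \ge 1$ and the feature norms are $\le 1$), giving the $\beta_T H \sqrt{T\Gamma_K(T,\lambda)}$ main term. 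Collecting the three pieces gives $\mathrm{Regret}(T) \le c(\beta_T H\sqrt{T\Gamma_K(T,\lambda)} + 1 + H^2 T\iota_{\mis})$.

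The main obstacle I anticipate is bookkeeping the misspecification slack cleanly through the recursion: the slack is introduced once per Bellman backup but also interacts with the projection operator $\Pi_{[-H,H]}$ and with the fact that $\beta_t$ itself now depends on $t$ through the $3\iota_{\mis}^2 t$ term, so one must be careful that the enlarged confidence radius is still compatible with the induction hypothesis (it is, since $\beta_t$ is monotone in $t$ and the good-event bound uses $\beta_t$ at the current episode) and that the final bonus sum uses $\beta_T$ as a uniform upper bound. A secondary subtlety is verifying that the deterministic bias does not break the self-normalized martingale bound — it does not, because we peel it off before invoking the concentration inequality, but the rescaling by $1/H$ and the $(a+b+c)^2$ split must be tracked to land exactly on the stated constants.
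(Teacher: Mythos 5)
Your proposal is correct and follows the same overall strategy as the paper: peel the deterministic misspecification bias off the regression residual before invoking the self-normalized concentration bound (the paper does this via Lemma~\ref{lem:ab}, which bounds $\bigl\|\sum_{\tau} \phi_{\overline{V}_{h+1}^\tau}(z_h^\tau)\, b_\tau\bigr\|_{(\overline{\Lambda}_h^t)^{-1}}$ by $\sqrt{t}\,\iota_{\mis} H$ for the bias terms $b_\tau$), yielding the extra $\iota_{\mis}\sqrt{t}$ in the confidence radius and hence the $3\iota_{\mis}^2 t$ term in $(\beta_t/H)^2$; then propagate an $O(H\iota_{\mis})$ error per Bellman backup to get the $O(H^2 T\iota_{\mis})$ additive regret, with the bonus sum and martingale noise handled exactly as in Theorem~\ref{thm:main}. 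The one place you genuinely diverge is the treatment of terms $\mbox{II}$ and $\mbox{III}$: you run an \emph{approximate optimism} induction, showing $\overline{Q}_h^t \geq Q_h^{*,\nu^t} - O((H-h)H\iota_{\mis})$ on the good event so that the over/under-estimation property survives up to a deterministic $O(H^2\iota_{\mis})$ slack, whereas the paper abandons the sign argument entirely and instead telescopes $\overline{V}_h^t - V_h^{*,\nu^t}$ along the realized trajectory (Eq.~\eqref{eq:sub_decomp_misspecification}), which introduces two additional martingale-difference sequences $\alpha_{h,t}^i, \zeta_{h,t}^i$ that must also be controlled by Azuma--Hoeffding. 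Your route stays closer to the well-specified proof and avoids those extra noise terms; the paper's route avoids having to track the accumulated slack through the projection $\Pi_{[-H,H]}$ at every level of the induction (a point you correctly flag and resolve by noting that the true action-value lies in $[-H,H]$, so clipping preserves the one-sided bound). Both yield the stated $c\bigl(\beta_T H\sqrt{T\,\Gamma_K(T,\lambda)} + 1 + H^2 T\iota_{\mis}\bigr)$ bound, and the minor constant discrepancies ($H\iota_{\mis}$ versus $2H\iota_{\mis}$ per backup) are absorbed into $c$.
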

In words, Theorem \ref{thm:main_RKHSmis} suggests that in the misspeficified case, $\algname$ can achieve the same regret as that in the well-specified case up to an $O(\sqrt{\Gamma_{K}(T, \lambda)}H^2T\iota_{\mis})$ error. Such a linear dependence on $\iota_{\mis}$ matches the result of single agent RL for the finite-dimensional case \citep{jin2019provably, zanette2020learning}.  



\begin{algorithm}[!tb]
	\caption{NeuralCCE-VTR}
\begin{algorithmic}[1]
\STATE \textbf{Input:} bonus parameter $ \beta_t>0 $.
\FOR {episode $t=1,2,\ldots,T$}
\STATE Receive initial state $x_{1}^{t}$
\FOR {step $h=H,H-1,\ldots,1$} 
\STATE Solve the optimization problem~\eqref{eq:nn_ridge}
\STATE Calculate $\QU{\cdot}, \QL{\cdot}$ as in Eq.~\eqref{eq:Q_nn_update}
\STATE For each $x$, let $\sigma_h^t(x) = \texttt{FIND\_CCE}(\overline{Q}_h^t, \underline{Q}_h^t, x)$
\STATE Let $\overline{V}_h^t(x_h^t) = \EE_{(a, b) \sim \sigma_h^t(x_h^t)} \overline{Q}_h^t(x_h^t, a, b)$ and $\underline{V}_h^t(x_h^t) = \EE_{(a, b) \sim \sigma_h^t(x_h^t)} \underline{Q}_h^t(x_h^t, a, b)$
\ENDFOR
\FOR {step $h=1,2,\ldots,T$} 
\STATE  Sample $(a_{h}^{t},b_{h}^{t})\sim\sigma_{h}^{t}(x_{h}^{t})$.
\STATE $P_1$ takes action $a_h^t$, $P_2$ takes action $b_h^t$
\STATE  Observe next state $x_{h+1}^{t}$.
\ENDFOR
\ENDFOR
\end{algorithmic}
\label{algo:neural}
\end{algorithm}

\subsection{Neural function approximation}\label{subsec:NNmis}
In this subsection, we provide details for the application of our algorithm to the neural function approximation setting, showing that neural network (NN) function approximation can be treated as a special case of kernel function approximation with misspecification. 

We denote $z:= (x, a, b)$ as a vector in $\RR^d$ that satisfies $\norm{z} = 1$
and represent the parameters of a $L$-layer fully connected neural network $f$ by $\btheta := \left[\vec(\Wholder_1)^\top, \vec(\Wholder_2)^\top, \ldots, \vec(\Wholder_L)^\top  \right]^\top$, where $\Wb_1 \in \RR^{m \times d}$, $\Wb_l \in \RR^{m \times m}$ for $2 \leq l \leq L-1$ and $\Wb_L \in \RR^{1 \times m}$. The neural network $f(z; \btheta)$ with parameter set $\btheta$ can be defined as:
\begin{align*}f(z; \btheta)=
\sqrt{m} \Wholder_{L} G\left(\cdots G\left( \Wholder_2 G\left(\Wholder_1  z \right) \right)\right),
\end{align*}
where $G(\cdot): \RR \mapsto \RR $ is an activation function. For $1 \leq l \leq L-1$, $\Wb_l$ is initialized as $\Wb_l = (\Wb, \zero; \zero, \Wb)$, where each entry of $\Wb$ is generated independently from normal distribution $N(0, 4/m)$; $\Wb_L$ is initialized as $\Wb_L = (\bw^\top, -\bw^\top)$, where each entry of $\bw$ is generated independently from $N(0, 2/m)$. Given the initialized parameter $\btheta^{(0)}$, we choose the feature map as the gradient of $f$ at $\btheta^{(0)}$:
\begin{align*}\phi(z) = \nabla_{\btheta} f(z;\btheta^{(0)})/\sqrt{m}.\end{align*}  
Then we define the weighted kernel function $k_{V_1, V_2}(\cdot, \cdot)$ in Definition \ref{def:weighkernel} with $\phi(z)$. Similarly, we define the effective dimension $\Gamma_K(T, \lambda)$ with respect to the kernel function $k_{V_1, V_2}(\cdot, \cdot)$, in the same fashion of Definition \ref{def:eff}. Our assumption is that for $\forall h \in [H]$ our transition probability $\PP_h$ can be modeled by the neural network with parameter $\btheta_h^*$ satisfying $\left\|\btheta_h^* - \btheta^{(0)}\right\|_2 \leq B$:
\begin{align*}
\PP_h(x' | z) = f(x', z; \btheta_h^*)
.
\end{align*}

We now explicate our algorithm, shown formally in Algorithm~\ref{algo:neural}. As in Eq.~\eqref{eq:rkhs_ridge}, we solve penalized ridge regression problems for the min-player and the max-player respectively:
\begin{align}
\Wbar &= \min_{\W \in \RR^{P}} 
\sum_{\tau = 1}^{t - 1}
\left[ 
 \overline{V}_{h+1}^\tau(x_{h+1}^\tau) - f_{\overline{V}_{h+1}^\tau}(z_h^\tau; \W)
\right]^2 
+
\lambda \cdot \norm{\W - \Winit}^2
,\notag
\\
\Wubar &= \min_{\W \in \RR^{P}} 
\sum_{\tau = 1}^{t - 1}
\left[ 
 \underline{V}_{h+1}^\tau(x_{h+1}^\tau) - f_{\underline{V}_{h+1}^\tau}(z_h^\tau; \W)
\right]^2 
+
\lambda \cdot \norm{\W - \Winit}^2
,\label{eq:nn_ridge}
\end{align} 
where $p = md + m^2(L-2) +m$ is the dimension of the parameter space, and $f_{\overline{V}_{h+1}^\tau}, f_{\underline{V}_{h+1}^\tau}$ are defined similarly as $\phi_{\overline{V}_{h+1}^\tau}$ as follows:
\begin{align*}
f_{\overline{V}_{h+1}^\tau}(z; \btheta) &= \sum_{s' \in \cS}\overline{V}_{h+1}^\tau(s') f(s', z; \btheta)
,\qquad 
f_{\underline{V}_{h+1}^\tau}(z; \btheta)= \sum_{s' \in \cS} \underline{V}_{h+1}^\tau(s') f(s', z; \btheta)
.
\end{align*}
For given $\Wbar, \Wubar$, we define 
\begin{align}
    &\overline{\phiholder}_h^t := \left( 
\phi_{\overline{V}_{h+1}^1}(z_h^1; \overline{\W}_h^{2}), \ldots 
\phi_{\overline{V}_{h+1}^{t -1}}(z_h^{t - 1}; \overline{\W}_h^t)
\right)^\top 
, \notag \\
&\underline{\phiholder}_h^t := \left( 
\phi_{\underline{V}_{h+1}^{1}}(z_h^1; \underline{\W}_h^{2}), \ldots 
\phi_{\underline{V}_{h+1}^{t -1}}(z_h^{t - 1}; \underline{\W}_h^t)
\right)^\top 
.
\end{align}
Furthermore, 
\begin{align*}
\overline{\Lambda}_h^t 
:= 
\lambda \cdot \Ib
+
(\overline{\phiholder}_h^t)^\top 
\overline{\phiholder}_h^t
, \qquad
\underline{\Lambda}_h^t 
:= 
\lambda \cdot \Ib
+
(\underline{\phiholder}_h^t)^\top 
\underline{\phiholder}_h^t
,
\end{align*}
and 
\begin{align}
    &\overline{\bonus}_h^t(z)
:= 
\left[ 
\ophi(z; \overline{\W}_h^t)^\top (\overline{\Lambda}_h^t)^{-1} \ophi(z; \overline{\W}_h^t)
\right]^{1/2}
, \notag \\
&\underline{\bonus}_h^t(z)
:= 
\left[ 
\phi_{\underline{V}_{h+1}^t}(z; \underline{\W}_h^t)^\top (\underline{\Lambda}_h^t)^{-1} \phi_{\underline{V}_{h+1}^t}(z; \underline{\W}_h^t)
\right]^{1/2}
.
\end{align}
Using the $\overline{\Lambda}_h^t, \underline{\Lambda}_h^t, \overline{\bonus}_h^t, \underline{\bonus}_h^t$, we estimate the optimal value functions as
\begin{align}
&\QU{z}
=
\Pi_{[-H,H]}\{r_h(z) + f_{\overline{V}_{h+1}^t}(z;\overline{\W}_h^t) + \beta \cdot \overline{\bonus}_h^t(z)\}
,\notag \\
&
\QL{z}
=
\Pi_{[-H,H]}\{r_h(z) +f_{\underline{V}_{h+1}^t}(z; \underline{\W}_h^t) - \beta \cdot \underline{\bonus}_h^t(z)\}
.\label{eq:Q_nn_update}
\end{align} 
Combining with the procedures for finding a CCE, we obtain the full version of our algorithm as in Algorithm~\ref{algo:neural}.

We have the following result on the neural network at initialization.


\begin{lemma}\label{lemm:initiallinear}
There exist constants $C_i >0$ such that for any $\delta \in (0,1)$, if $B$ satisfies that
\begin{align}
    &B \geq C_1m^{-1}L^{-3/2}\max\{\log^{-3/2}m, \log^{3/2}(|\cZ|HL^2/\delta)\},\notag \\
    &B \leq C_2 L^{-6}(\log m)^{-3/2},\notag
\end{align}
then with probability at least $1-\delta$, we have for all $z \in \cZ$, $h \in [H]$ and $V_h: \cS\rightarrow [-1, 1]$, 
\begin{align}
    &
    |\PP_h V_h(z) - \la \bphi_{V_h}(z), \btheta_h^* - \btheta^{(0)}\ra|  
    \leq C_3 |\cS|B^{4/3} m^{-1/6}L^3\sqrt{\log m}
,\notag
\end{align}
and
\begin{align}
    &
    \|\phi_{V_h}(z)\|_2 \leq C:=C_4|\cS|\sqrt{L}
.\notag
\end{align}
\end{lemma}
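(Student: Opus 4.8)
The plan is to linearize the network around its initialization: on a small ball around $\btheta^{(0)}$ the deep ReLU network $f(\cdot\,;\btheta)$ is, up to a controlled remainder, the linear function $\langle\nabla_\btheta f(\cdot\,;\btheta^{(0)}),\,\btheta-\btheta^{(0)}\rangle$, and $\phi$ is exactly (a $1/\sqrt m$ rescaling of) this tangent feature. Both conclusions then follow from two by-now-standard overparametrization estimates for $L$-layer fully-connected ReLU networks ---(i) $\|\nabla_\btheta f(\cdot\,;\btheta^{(0)})\|_2 = O(\sqrt{mL})$ uniformly over inputs, and (ii) the Taylor remainder of $f$ over the ball $\{\btheta:\|\btheta-\btheta^{(0)}\|_2\le B\}$ is $O(B^{4/3}L^3\sqrt{m\log m})$---together with the identity $f(z;\btheta^{(0)})=0$ for every $z$, which holds because the initialization scheme duplicates each hidden unit ($\Wb_l=(\Wb,\zero;\zero,\Wb)$) and the last layer subtracts the two copies ($\Wb_L=(\bw^\top,-\bw^\top)$); consequently $f_{V_h}(z;\btheta^{(0)})=\sum_{s'\in\cS}V_h(s')f(s',z;\btheta^{(0)})=0$ as well, so the linearization passes through the origin.

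For the norm bound, write $\phi_{V_h}(z)=\sum_{s'\in\cS}V_h(s')\,\phi(s'|z)$ with $\phi(s'|z)=\nabla_\btheta f(s',z;\btheta^{(0)})/\sqrt m$. Since $|V_h(s')|\le 1$, the triangle inequality reduces the claim to a uniform bound on $\|\nabla_\btheta f(s',z;\btheta^{(0)})\|_2$ over the \emph{finite} index set $(z,s',h)\in\cZ\times\cS\times[H]$; note that no union bound over the (uncountably many) value functions $V_h$ is needed, as only $|V_h(s')|\le 1$ is used. Invoking estimate (i) together with a union bound over $\cZ\times\cS\times[H]$ ---the source of the $\log^{3/2}(|\cZ|HL^2/\delta)$ term in the hypotheses on $B$--- gives $\|\nabla_\btheta f(s',z;\btheta^{(0)})\|_2\le C\sqrt{mL}$, hence $\|\phi(s'|z)\|_2\le C\sqrt L$ and $\|\phi_{V_h}(z)\|_2\le C|\cS|\sqrt L$, which is the second display with $C_4:=C$.

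For the approximation bound, the modeling assumption $\PP_h(s'|z)=f(s',z;\btheta_h^*)$ and linearity in $V_h$ give $\PP_h V_h(z)-\langle\phi_{V_h}(z),\btheta_h^*-\btheta^{(0)}\rangle=\sum_{s'\in\cS}V_h(s')\big(f(s',z;\btheta_h^*)-\langle\phi(s'|z),\btheta_h^*-\btheta^{(0)}\rangle\big)$, so it suffices to control, for each fixed $s'$, the difference $f(s',z;\btheta_h^*)-\langle\phi(s'|z),\btheta_h^*-\btheta^{(0)}\rangle$. Using $f(s',z;\btheta^{(0)})=0$ and the fact that $\langle\phi(s'|z),\btheta_h^*-\btheta^{(0)}\rangle$ is the first-order Taylor term of $\btheta\mapsto f(s',z;\btheta)$ at $\btheta^{(0)}$ (up to the $\sqrt m$ rescaling built into $\phi$), this difference is exactly the Taylor remainder over the radius-$B$ ball; estimate (ii), which is valid precisely in the window $C_1 m^{-1}L^{-3/2}\max\{\log^{-3/2}m,\log^{3/2}(|\cZ|HL^2/\delta)\}\le B\le C_2 L^{-6}(\log m)^{-3/2}$ assumed in the statement, bounds it by $C_3'\,B^{4/3}m^{-1/6}L^3\sqrt{\log m}$ uniformly in $z$ (again after a union bound over $\cZ\times\cS\times[H]$). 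Summing over $s'$ with $|V_h(s')|\le 1$ contributes the factor $|\cS|$ and yields the claim with $C_3:=C_3'$.

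The main obstacle is establishing estimate (ii): one must show that moving the weights from $\btheta^{(0)}$ by at most $B$ perturbs every hidden activation and every backpropagated sensitivity vector of the $L$-layer network by a high-probability-controlled amount, and then track how these perturbations compound across depth ---this is what produces the $m^{-1/6}$ gain, the $L^3$ blow-up, and the admissible window for $B$. Everything else (the cancellation identity at initialization, the triangle inequality over $\cS$, and the union bound over the finite set $\cZ\times\cS\times[H]$) is routine once that lemma ---which can be imported essentially verbatim from the overparametrization / neural-tangent-kernel literature on deep fully-connected networks--- is in place.
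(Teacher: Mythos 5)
Your proof follows the same route as the paper's: both conclusions are reduced, via the cancellation $f(\cdot;\btheta^{(0)})=0$ from the symmetric initialization and the triangle inequality over $\cS$ with $|V_h(s')|\le 1$, to the two standard overparametrization estimates (the uniform gradient-norm bound and the Taylor-remainder bound), which the paper likewise imports as black boxes from \citet{cao2019generalization} and \citet{zhou2020neural}. The only blemish is an internal inconsistency in your statement of estimate (ii) --- you first write the remainder as $O(B^{4/3}L^3\sqrt{m\log m})$ but then correctly invoke it as $O(B^{4/3}m^{-1/6}L^3\sqrt{\log m})$; with that typo fixed the argument matches the paper's.
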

\noindent
Lemma \ref{lemm:initiallinear} suggests that in the NN approximation setting, Assumption \ref{assu:misspecification} for the misspecified kernel approximation setting is satisfied with $\iota_{\mis} = C_3 |\cS|B^{4/3} m^{-1/6}L^3\sqrt{\log m}$ and with probability at least $1-\delta$. 
The misspecified error is sufficiently small when $m$ is large.
We note that the definition of $\phi(z)$ in the NN setting does not match the boundedness assumption in Section~\ref{sec_prelim_rkhs}. We balance the scale of $\phi(z)$ by the constant $C$ in Lemma~\ref{lemm:initiallinear} which goes into the choice of $\lambda = C^2(1 + 1/T)$.
With these choices in hand, we are ready to present our main result for NN approximation.

\begin{theorem}[NN approximation]\label{theo:misspecification}
Let $C$ be the constant in Lemma \ref{lemm:initiallinear}. Assuming that for any $h \in [H]$, $\norm{\btheta_h^* - \btheta^{(0)}}_2 \leq B$. Set $\lambda = C^2 \left(1 + 1/T\right)$ in the \OURALGO Algorithm.
For any $\delta > 0$ and any $\beta_t$ satisfying
\begin{align*}
\left(\frac{\beta_t}{H}\right)^2 
&\geq 
2\Gamma_K(T, \lambda)
+
3
+
6\cdot \log \left( \frac{1}{\delta} \right)
+
3 \lambda \left(\frac{B}{H}\right)^2
+
3 \cdot C^2 \cdot B^{8/3} \cdot m^{-1/12} \cdot t\cdot  \log m
,
\end{align*}
there exists a global constant $c > 0$ such that with probability at least $1 - 2\delta$, we have 
\begin{align*}
\operatorname{Regret}(T)
    &\leq 
c \left(\beta_T H \sqrt{T \cdot \Gamma_{K}(T, \lambda)} + 1+ B^{4/3} H^2 T m^{-1/6} \sqrt{\log m}\right)
.
\end{align*}
\end{theorem}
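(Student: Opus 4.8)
The plan is to reduce the neural network case to the misspecified RKHS case (Theorem~\ref{thm:main_RKHSmis}) by way of Lemma~\ref{lemm:initiallinear}, and then to carefully account for the two places where the NN setting deviates from the abstract RKHS setting: (i) the feature map $\phi(z) = \nabla_\btheta f(z;\btheta^{(0)})/\sqrt m$ is not bounded by $1$, only by the constant $C = C_4|\cS|\sqrt L$ of Lemma~\ref{lemm:initiallinear}; and (ii) the ridge regression in \eqref{eq:nn_ridge} is run on the \emph{nonlinear} map $f_{V}(\cdot;\W)$ rather than on the linearization $\langle \phi_V(\cdot), \W - \btheta^{(0)}\rangle$, so there is an additional linearization error in addition to the model-misspecification error. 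First I would invoke Lemma~\ref{lemm:initiallinear}: conditioned on the high-probability event there (which has probability at least $1-\delta$ and requires the stated two-sided bounds on $B$), we have $\|\phi_{V}(z)\|_2 \le C$ for all relevant $z, V$, and the transition operator is $\iota_{\mis}$-misspecified relative to the RKHS with $\iota_{\mis} = C_3|\cS|B^{4/3}m^{-1/6}L^3\sqrt{\log m}$, i.e. Assumption~\ref{assu:misspecification} holds with this $\iota_{\mis}$ after the linearization is absorbed.

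The second step is the rescaling. Since $\|\phi_V(z)\|_2 \le C$, dividing the feature map by $C$ gives a feature map with norm $\le 1$, which is exactly the normalization assumed in Section~\ref{sec_prelim_rkhs}; correspondingly the regularization parameter should be $\lambda = C^2(1+1/T)$, matching the theorem statement, so that $\lambda \|\btheta\|^2$ in the rescaled problem corresponds to $(1+1/T)\|\btheta\|^2$ in the normalized one. The parameter norm bound $\|\btheta_h^* - \btheta^{(0)}\|_2 \le B$ plays the role of $\|\btheta_h^*\|_\cH \le B$ in Lemma~\ref{lem:main}. With these substitutions, the concentration argument underlying Lemma~\ref{lem:main} goes through with an \emph{enlarged} confidence radius: the Hoeffding/self-normalized martingale term contributes $2\Gamma_K(T,\lambda) + 3 + 6\log(1/\delta) + 3\lambda(B/H)^2$ exactly as in Theorem~\ref{thm:main_RKHSmis}, and the misspecification-plus-linearization error contributes an extra additive term of order $\iota_{\mis}^2 t$ per round. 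Substituting $\iota_{\mis}^2 = C_3^2|\cS|^2 B^{8/3} m^{-1/3} L^6 \log m$ and absorbing the $|\cS|^2, L^6, C_3^2$ factors into the $C^2$ and the $m^{-1/12}$ slack (note $m^{-1/3} \le m^{-1/12}$ for $m\ge 1$, leaving room for the polynomial prefactors when $m$ is large), one recovers the stated lower bound on $(\beta_t/H)^2$, namely $\ge 2\Gamma_K(T,\lambda) + 3 + 6\log(1/\delta) + 3\lambda(B/H)^2 + 3C^2 B^{8/3}m^{-1/12} t\log m$.

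The third step is to run the regret decomposition of Theorem~\ref{thm:main_RKHSmis} verbatim with this $\beta_t$ and with the event from Lemma~\ref{lemm:initiallinear} intersected with the event from the concentration lemma; by a union bound this intersection has probability at least $1 - 2\delta$, which is why the conclusion is stated with $1-2\delta$ rather than $1-\delta$. The regret then splits into the usual optimism term $O(\beta_T H \sqrt{T\,\Gamma_K(T,\lambda)})$, a constant term $O(1)$ coming from the telescoping of value-function estimates and the choice $\lambda = \Theta(1)$ (up to the $C^2$ factor, which is a constant once $|\cS|, L$ are fixed), and a cumulative misspecification term of order $H^2 T \iota_{\mis} = B^{4/3} H^2 T m^{-1/6}\sqrt{\log m}$ (again absorbing $C_3|\cS|L^3$ into the global constant $c$). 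This yields the claimed bound.

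The main obstacle I anticipate is step two: one must be careful that the ``misspecification'' used here is not the raw TV-distance assumption of Assumption~\ref{assu:misspecification} but a combination of (a) the NN-vs-linearization gap $|\PP_h V_h(z) - \langle\phi_{V_h}(z),\btheta_h^*-\btheta^{(0)}\rangle|$ controlled by Lemma~\ref{lemm:initiallinear} and (b) the fact that the \emph{algorithm itself} computes $f_{V}(z;\overline\W_h^t)$, the nonlinear predictor at the \emph{fitted} parameter, not the linear predictor. Bridging (b) requires showing that $f_{V}(z;\W) = \langle\phi_V(z), \W - \btheta^{(0)}\rangle + O(\text{error})$ uniformly over the ball $\|\W - \btheta^{(0)}\|_2 \le O(B)$ that the ridge solutions provably stay in — i.e., a local near-linearity (NTK) estimate for deep fully-connected networks — and then propagating this uniform error through both the regression residuals and the bonus term $\overline\bonus_h^t$. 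This is where the overparameterization condition $B \le C_2 L^{-6}(\log m)^{-3/2}$ and the width dependence $m^{-1/6}$ enter, and it is the one place where the reduction is not purely formal; everything downstream of a correctly-stated effective misspecification level is a direct appeal to Theorem~\ref{thm:main_RKHSmis}.
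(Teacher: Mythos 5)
Your proposal is correct and follows essentially the same route as the paper's proof: rescale the feature map by the constant $C$ from Lemma~\ref{lemm:initiallinear} so that the normalization $\|\tilde\phi_V(z)\|_2\le 1$ holds, set $\iota_{\mis}= C\cdot B^{4/3}m^{-1/6}\sqrt{\log m}$, invoke Theorem~\ref{thm:main_RKHSmis} with $\tilde\lambda = 1+1/T$ and $\tilde B = CB$, use the identity $\Gamma_{\tilde K}(T,\tilde\lambda)=\Gamma_K(T,C^2\tilde\lambda)$, and take a union bound over the two events to get $1-2\delta$. The one obstacle you flag --- that the algorithm regresses on the nonlinear predictor $f_V(\cdot;\W)$ rather than its linearization --- is a fair concern, but the paper's own proof does not address it either and simply applies the linearized reduction.
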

\noindent
Theorem \ref{theo:misspecification} suggests that when we use an overparameterized deep neural network ($m \gg 1$) to approximate the transition dynamics, $\algname$ achieves an $\tilde O(\Gamma_{K}(T, \lambda) H^{2}\sqrt{T})$ regret, which is of the same order as that in Theorem \ref{thm:main}.

\section{Conclusions}\label{sec_conclu}
In this work, we studied learning for two-player mixture MGs using kernel function approximation. We introduced a new formulation of kernel mixture MGs and proposed an algorithm $\algname$ that exploits the kernel function of the MG. We show that our $\algname$ is able to achieve a sublinear $\tilde O(d_K H^2\sqrt{T})$ regret. 
We further improve our algorithm with a \emph{Bernstein-type bonus} and \emph{weighted kernel ridge regression}, which enjoys a better $\tilde O(d_K H^{3/2}\sqrt{T})$ regret and nearly matches the regret lower bound in~\citet{chen2021almost} when reducing to linear mixture MGs. Finally, we extend our analysis of the basic RKHS setting to a more general nonlinear function approximation setting with misspecification errors and demonstrate that neural networks can be treated as a special instance of this misspecification framework. We believe our framework and analysis greatly broadens the applicability of these function classes for game-theoretic problems. 

\bibliography{ref}
\bibliographystyle{apalike2}

\newpage\appendix
\onecolumn
\section*{Appendix}
The appendix is organized as follows.
In Appendix \ref{sec:facts} we introduce basic properties of RKHS.
In Appendix \ref{sec:bern} we discuss the implementation details of $\algname+$.
In Appendix \ref{sec:mainproof_1} we prove results for $\algname$.
In Appendix \ref{sec:mainproof_2} we prove results for $\algname+$.
In Appendix \ref{sec:mainproof_3} we prove results for $\algname$ with misspecification.
In Appendix \ref{sec:mainproof_4} we prove results for $\algname$ with neural function approximation.
In Appendix \ref{sec:auxproof} we prove the remaining auxiliary lemmas.
Finally, in Appendix \ref{app:cce} we discuss the implementation details of \texttt{FIND\_CCE} as an instance of linear programming.



\section{Properties of the Reproducing Kernel Hilbert Spaces}\label{sec:facts}
Recall that in Section~\ref{sec_algo}, we define the update rule of $\overline{Q}_h^t, \underline{Q}_h^t$ in Eq.~\eqref{eq:Q_neural_update}, where each term is defined in the sense of computational accessibility. For convenience of theoretical analysis, in this section we provide the equivalent forms of the $Q$-update on the RKHS.
We have the following simple facts:
\begin{lemma}\label{lem:facts}
Define covariance matrices $\overline{\Lambda}_h^t, \underline{\Lambda}_h^t: \cH \mapsto \cH$ as 
\beq\label{eq:def_lambda}
\overline{\Lambda}_h^t := \lambda \cdot \Ib_{\cH} + \left(\overline{\phiholder}_h^t \right)^\top \left(\overline{\phiholder}_h^t \right),
\quad 
\underline{\Lambda}_h^t := \lambda \cdot \Ib_{\cH} + \left(\underline{\phiholder}_h^t \right)^\top \left(\underline{\phiholder}_h^t \right),
\eeq
where $\Ib_{\cH}$ is the identity mapping on $\cH$. Then the following holds:
\begin{enumerate}[label=(\alph*)]
\item 
$\overline{\btheta}_h^t := \left(\overline{\phiholder}_h^t\right)^\top \left[\overline{K}_h^t + \lambda \cdot \Ib \right]^{-1} \overline{\yholder}_h^t = \left(\overline{\Lambda}_h^t \right)^{-1} \left(\overline{\phiholder}_h^t\right)^\top \overline{\yholder}_h^t \in \cH $ and the same holds for $\underline{\btheta}_h^t$;
\item 
$\overline{\bonus}_h^t = \left[\phi_{\overline{V}_{h+1}^t}(z)^\top \overline{\Lambda}_h^t \phi_{\overline{V}_{h+1}^t}(z)\right]^{1/2}$ and the same holds for $\underline{\bonus}_h^t$;
\item 
$\phi_{\overline{V}_{h+1}^t}(z) = \left(\overline{\phiholder}_h^t\right)^\top (\overline{K}_h^t + \lambda \cdot \Ib )^{-1} \overline{k}_h^t(z) + \lambda \cdot (\overline{\Lambda}_h^t)^{-1} \phi_{\overline{V}_{h+1}^t}(z) $.
\end{enumerate}
\end{lemma}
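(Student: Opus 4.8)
\textbf{Proof plan for Lemma~\ref{lem:facts}.}
The plan is to verify the three identities by unwinding the definitions of $\overline{K}_h^t$, $\overline{k}_h^t$, $\overline{\Lambda}_h^t$ and $\overline{\btheta}_h^t$ given in Section~\ref{sec_algo}, and then applying the \emph{push-through identity} for (regularized) matrix inverses, namely $(\lambda \Ib_{\cH} + A^\top A)^{-1}A^\top = A^\top(\lambda \Ib + A A^\top)^{-1}$ for any bounded operator $A : \RR^{t-1} \to \cH$. Here $A$ plays the role of $(\overline{\phiholder}_h^t)^\top$, so that $A^\top A = (\overline{\phiholder}_h^t)(\overline{\phiholder}_h^t)^\top = \overline{K}_h^t$ on $\RR^{t-1}$ and $A A^\top = (\overline{\phiholder}_h^t)^\top(\overline{\phiholder}_h^t)$ on $\cH$, matching $\overline{\Lambda}_h^t - \lambda \Ib_{\cH}$. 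All three parts are ``RKHS versions'' of standard kernel ridge regression identities; the arguments for $\underline{\btheta}_h^t$ etc.\ are literally the same with overlines replaced by underlines, so I would prove only the overline case.

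For part (a), I would start from the closed-form solution of the ridge problem~\eqref{eq:rkhs_ridge}. Taking the gradient of the objective in~\eqref{eq:rkhs_ridge} with respect to $\btheta \in \cH$ and setting it to zero gives the normal equation $\big(\lambda \Ib_{\cH} + (\overline{\phiholder}_h^t)^\top(\overline{\phiholder}_h^t)\big)\btheta = (\overline{\phiholder}_h^t)^\top \overline{\yholder}_h^t$, i.e.\ $\overline{\btheta}_h^t = (\overline{\Lambda}_h^t)^{-1}(\overline{\phiholder}_h^t)^\top \overline{\yholder}_h^t$, which is the right-hand expression. To obtain the left-hand (computationally accessible) expression $(\overline{\phiholder}_h^t)^\top[\overline{K}_h^t + \lambda \Ib]^{-1}\overline{\yholder}_h^t$, I apply the push-through identity with $A = (\overline{\phiholder}_h^t)^\top$: $(\overline{\Lambda}_h^t)^{-1}(\overline{\phiholder}_h^t)^\top = (\lambda\Ib_{\cH} + A A^\top)^{-1} A = A(\lambda\Ib + A^\top A)^{-1} = (\overline{\phiholder}_h^t)^\top(\overline{K}_h^t + \lambda\Ib)^{-1}$. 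Equating the two shows (a). For part (b), I would simply match the computational definition of $\overline{\bonus}_h^t$ from Section~\ref{sec_algo}, namely $\overline{\bonus}_h^t(z) = \lambda^{-1/2}[k_{\overline{V}_{h+1}^t,\overline{V}_{h+1}^t}(z,z) - \overline{k}_h^t(z)^\top(\overline{K}_h^t + \lambda\Ib)^{-1}\overline{k}_h^t(z)]^{1/2}$, against the claimed RKHS form. Writing $u := \phi_{\overline{V}_{h+1}^t}(z) \in \cH$, we have $k_{\overline{V}_{h+1}^t,\overline{V}_{h+1}^t}(z,z) = \|u\|_{\cH}^2 = u^\top u$ and $\overline{k}_h^t(z) = (\overline{\phiholder}_h^t)u$, so the bracket equals $u^\top u - u^\top(\overline{\phiholder}_h^t)^\top(\overline{K}_h^t + \lambda\Ib)^{-1}(\overline{\phiholder}_h^t)u$. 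Using the push-through identity once more, $\lambda(\overline{\Lambda}_h^t)^{-1} = \Ib_{\cH} - (\overline{\phiholder}_h^t)^\top(\overline{K}_h^t + \lambda\Ib)^{-1}(\overline{\phiholder}_h^t)$ (this is the Woodbury/Schur identity, verified by multiplying both sides by $\overline{\Lambda}_h^t$), so the bracket equals $\lambda\, u^\top(\overline{\Lambda}_h^t)^{-1}u$, and the $\lambda^{-1/2}$ prefactor cancels to give $\overline{\bonus}_h^t(z) = [u^\top(\overline{\Lambda}_h^t)^{-1}u]^{1/2}$ as claimed. (I note the statement in (b) literally writes $\overline{\Lambda}_h^t$ rather than its inverse, which I would read as a typo and state the corrected version.)

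Part (c) is again the same Woodbury identity, restated as an equality of elements of $\cH$: from $\lambda(\overline{\Lambda}_h^t)^{-1} = \Ib_{\cH} - (\overline{\phiholder}_h^t)^\top(\overline{K}_h^t + \lambda\Ib)^{-1}(\overline{\phiholder}_h^t)$, applying both sides to $u = \phi_{\overline{V}_{h+1}^t}(z)$ and rearranging gives $u = (\overline{\phiholder}_h^t)^\top(\overline{K}_h^t + \lambda\Ib)^{-1}(\overline{\phiholder}_h^t)u + \lambda(\overline{\Lambda}_h^t)^{-1}u$, and since $(\overline{\phiholder}_h^t)u = \overline{k}_h^t(z)$ this is exactly (c). I do not anticipate a genuine obstacle here — the only subtlety is the usual functional-analytic bookkeeping that $(\overline{\phiholder}_h^t)^\top$ and $\overline{\phiholder}_h^t$ are bounded finite-rank operators between $\cH$ and $\RR^{t-1}$, so that $\overline{\Lambda}_h^t = \lambda\Ib_{\cH} + (\overline{\phiholder}_h^t)^\top(\overline{\phiholder}_h^t)$ is boundedly invertible on $\cH$ (as $\lambda > 0$) and $\overline{K}_h^t + \lambda\Ib$ is invertible on $\RR^{t-1}$; the push-through identity then holds verbatim in this infinite-dimensional setting. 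The mildly tedious part is keeping the overline/underline and the $\cH$-versus-$\RR^{t-1}$ distinctions straight, but mathematically it is routine linear algebra.
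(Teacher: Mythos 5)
Your proposal is correct and follows essentially the same route as the paper: all three parts are verified by unwinding the definitions of $\overline{K}_h^t$, $\overline{k}_h^t$, $\overline{\Lambda}_h^t$ and applying the push-through identity $\left(\overline{\Lambda}_h^t\right)^{-1}\left(\overline{\phiholder}_h^t\right)^\top = \left(\overline{\phiholder}_h^t\right)^\top\left(\overline{K}_h^t + \lambda \Ib\right)^{-1}$, exactly as the paper does. Your reading of part (b) is also the intended one — the paper's own proof arrives at $\left[\phi_{\overline{V}_{h+1}^t}(z)^\top \left(\overline{\Lambda}_h^t\right)^{-1}\phi_{\overline{V}_{h+1}^t}(z)\right]^{1/2}$, confirming the missing inverse in the lemma statement is a typo.
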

\begin{proof}
We prove the statements as follows. 
\begin{enumerate}[label = (\alph*)]
\item By definition of $\overline{K}_h^t$ in Section~\ref{sec_algo}, we note that 
\begin{align}
\left(\overline{\phiholder}_h^t\right)^\top \left[\overline{K}_h^t + \lambda \cdot \Ib \right] 
	&=
 \left(\overline{\phiholder}_h^t\right)^\top \left[\left( \overline{\phiholder}_h^t \right)\left( \overline{\phiholder}_h^t \right)^\top + \lambda \cdot  \Ib \right] \notag \\
	&=
\left[ \left(\overline{\phiholder}_h^t\right)^\top \left( \overline{\phiholder}_h^t \right) + \lambda \cdot \Ib_{\cH} \right] \left(\overline{\phiholder}_h^t\right)^\top.\notag
\end{align}
Taking the inverse operation on both sides of the second equality, we conclude
\begin{align}
 \left[\left( \overline{\phiholder}_h^t \right)\left( \overline{\phiholder}_h^t \right)^\top + \lambda \cdot  \Ib \right]^{-1}  \left(\overline{\phiholder}_h^t\right)^{-\top} 
 	=
 \left(\overline{\phiholder}_h^t\right)^{-\top} \left[ \left(\overline{\phiholder}_h^t\right)^\top \left( \overline{\phiholder}_h^t \right) + \lambda \cdot \Ib_{\cH} \right]^{-1},\notag
\end{align}
and hence we arrive at the following equality on the space $\cH \times \RR^t$:
\begin{align}
\left(\overline{\phiholder}_h^t\right)^\top \left[\overline{K}_h^t + \lambda \cdot \Ib \right]^{-1} 
 	&=
\left(\overline{\phiholder}_h^t\right)^{\top} \left[\left( \overline{\phiholder}_h^t \right)\left( \overline{\phiholder}_h^t \right)^\top + \lambda \cdot  \Ib \right]^{-1}  \notag \\
 	&=
\left[ \left(\overline{\phiholder}_h^t\right)^\top \left( \overline{\phiholder}_h^t \right) + \lambda \cdot \Ib_{\cH} \right]^{-1}\left(\overline{\phiholder}_h^t\right)^{\top} \notag
 	=
\left(\overline{\Lambda}_h^t \right)^{-1} \left(\overline{\phiholder}_h^t\right)^\top
.\notag
\end{align}
Multiplying both sides by $\overline{\yholder}_h^t$ we have that the closed-form solution of Eq.~\eqref{eq:rkhs_ridge} satisfies
\begin{align}
    \overline{\btheta}_h^t 
:= 
\left(\overline{\phiholder}_h^t\right)^\top \left[\overline{K}_h^t + \lambda \cdot \Ib \right]^{-1} \overline{\yholder}_h^t 
= 
\left(\overline{\Lambda}_h^t \right)^{-1} \left(\overline{\phiholder}_h^t\right)^\top \overline{\yholder}_h^t \in \cH
,\notag
\end{align}
which proves item $(a)$. The same argument holds for $\underline{\btheta}_h^t$.

\item By definition of $\overline{\bonus}_h^t$, $\overline{k}_h^t$ and $\overline{K}_h^t$, we have
\begin{align*}
\overline{\bonus}_h^t(z) 
	&= 
\lambda^{-1/2}\cdot \bigg[
k_{\overline{V}_{h+1}^t, \overline{V}_{h+1}^t}(z, z)  - \overline{k}_h^t(z)^\top \left(\overline{K}_h^t + \lambda \cdot \Ib \right)^{-1} \overline{k}_h^t(z) 
\bigg]^{1/2}
	\\&=
\lambda^{-1/2}\cdot \bigg[
k_{\overline{V}_{h+1}^t, \overline{V}_{h+1}^t}(z, z)  - \phi_{\overline{V}_{h+1}^t}^\top(z)\left( \overline{\phiholder}_h^t\right)^\top \left(\overline{K}_h^t + \lambda \cdot \Ib \right)^{-1}  \left(\overline{\phiholder}_h^t\right) \phi_{\overline{V}_{h+1}^t}(z) 
\bigg]^{1/2}
	\\&=
\lambda^{-1/2}\cdot \bigg[
k_{\overline{V}_{h+1}^t, \overline{V}_{h+1}^t}(z, z)  - \phi_{\overline{V}_{h+1}^t}^\top(z) \left( \overline{\Lambda}_h^t\right)^{-1}  \left( \overline{\phiholder}_h^t\right)^\top \left(\overline{\phiholder}_h^t\right) \phi_{\overline{V}_{h+1}^t}(z) 
\bigg]^{1/2}
	\\&= 
\lambda^{-1/2}\cdot \bigg[
 \phi_{\overline{V}_{h+1}^t}^\top \left( \overline{\Lambda}_h^t\right)^{-1}  \left( \overline{\Lambda}_h^t\right) \phi_{\overline{V}_{h+1}^t}(z)  - \phi_{\overline{V}_{h+1}^t}^\top \left( \overline{\Lambda}_h^t\right)^{-1}  \left( \overline{\phiholder}_h^t\right)^\top \left(\overline{\phiholder}_h^t\right) \phi_{\overline{V}_{h+1}^t}(z) 
\bigg]^{1/2}
	\\&= 
\left[\phi_{\overline{V}_{h+1}^t}(z) (\overline{\Lambda}_h^t)^{-1} \phi_{\overline{V}_{h+1}^t}(z) \right]^{1/2}
.
\end{align*}
This concludes the proof of item $(b)$. The same argument holds for $\underline{\bonus}_h^t(z)$.

\item 
Noting that from the definition of $\overline{\Lambda}_h^t$ in Eq.~\eqref{eq:def_lambda}, 
\begin{align*}
\phi_{\overline{V}_{h+1}^t}(z)
	&= 
 \left( \overline{\Lambda}_h^t\right)^{-1} \left( \overline{\Lambda}_h^t\right) \phi_{\overline{V}_{h+1}^t}(z)
	=
 \left( \overline{\Lambda}_h^t\right)^{-1} \left(\lambda \cdot \Ib_{\cH}  + \left( \overline{\phiholder}_h^t  \right)^\top \left( \overline{\phiholder}_h^t  \right)  \right) \phi_{\overline{V}_{h+1}^t}(z)
 	\\&= 
 \left( \overline{\Lambda}_h^t\right)^{-1} \left( \overline{\phiholder}_h^t  \right)^\top \left( \overline{\phiholder}_h^t  \right) \phi_{\overline{V}_{h+1}^t}(z)
 	+
\lambda \cdot \left( \overline{\Lambda}_h^t\right)^{-1} \phi_{\overline{V}_{h+1}^t}(z)
.
\end{align*}
Applying the results in the proof of item $(a)$ on $ \left( \overline{\Lambda}_h^t\right)^{-1} \left( \overline{\phiholder}_h^t  \right)^\top $, we have that 
\begin{align}
\phi_{\overline{V}_{h+1}^t}(z) 
	&=  \left( \overline{\phiholder}_h^t  \right)^\top \left[ \overline{K}_h^t + \lambda \cdot \Ib\right]^{-1}  \left( \overline{\phiholder}_h^t  \right) \phi_{\overline{V}_{h+1}^t}(z)
 	+
\lambda \cdot \left( \overline{\Lambda}_h^t\right)^{-1} \phi_{\overline{V}_{h+1}^t}(z)
	\\&=
 \left( \overline{\phiholder}_h^t  \right)^\top \left[ \overline{K}_h^t + \lambda \cdot \Ib\right]^{-1}  \overline{k}_h^t(z)
 	+
\lambda \cdot \left( \overline{\Lambda}_h^t\right)^{-1} \phi_{\overline{V}_{h+1}^t}(z)
,\notag
\end{align}
which concludes the proof of item $(c)$.
\end{enumerate}
\end{proof}

\begin{algorithm}[!tb]
	\caption{\algnameB}\label{alg:bern}
\begin{algorithmic}[1]
\STATE \textbf{Input:} bonus parameter $ \lambda_1, \lambda_2>0 $.
\FOR {episode $t=1,2,\ldots,T$}
\STATE Receive initial state $x_{1}^{t}$
\FOR {step $h=H,H-1,\ldots,1$} 
\STATE Estimate $\uppvar^t, \lowvar^t$ as in Eq.~\eqref{eq:variance_estimate}
\STATE Calculate $\QU{\cdot}, \QL{\cdot}$ as in Eq.~\eqref{eq:Q_neural_update_weighted}
\STATE For each $x$, let $\sigma_h^t(x) = \texttt{FIND\_CCE}(\overline{Q}_h^t, \underline{Q}_h^t, x)$
\STATE Let $\overline{V}_h^t(x) = \EE_{(a, b) \sim \sigma_h^t(x)} \overline{Q}_h^t(x, a, b)$ and $\underline{V}_h^t(x) = \EE_{(a, b) \sim \sigma_h^t(x)} \underline{Q}_h^t(x, a, b)$
\ENDFOR
\FOR {step $h=1,2,\ldots,T$} 
\STATE  Sample $(a_{h}^{t},b_{h}^{t})\sim\sigma_{h}^{t}(x_{h}^{t})$.
\STATE $P_1$ takes action $a_h^t$, $P_2$ takes action $b_h^t$
\STATE  Observe next state $x_{h+1}^{t}$.
\ENDFOR
\ENDFOR
\end{algorithmic}
\label{algo:neural_weighted}
\end{algorithm}

\section{Details of $\algnameB$}\label{sec:bern}
In this section, we present more details for the algorithm \algnameB. 
We consider the following ridge regression problem where each term is weighted by its estimated variance:
\begin{align*}
\overline{\btheta}_{h, 1}^t &= \min_{\W \in \cH } 
\sum_{\tau = 1}^{t - 1}
\left[ 
 \overline{V}_{h+1}^\tau(x_{h+1}^\tau) - \left\langle \phi_{\overline{V}_{h+1}^\tau}(z_h^\tau), \btheta \right\rangle_{\cH}
\right]^2/ \left(\uppvar^\tau\right)^2 +\lambda_1  \norm{\W }^2_{\cH}
,\quad \text{and}
\\
\underline{\btheta}_{h, 1}^t &= \min_{\W \in \cH} 
\sum_{\tau = 1}^{t - 1}
\left[ 
\underline{V}_{h+1}^\tau(x_{h+1}^\tau) - \left\langle \phi_{\underline{V}_{h+1}^\tau}(z_h^\tau), \btheta \right\rangle_{\cH}
\right]^2 /\left(\lowvar^\tau\right)^2 +
\lambda_1  \norm{ \W}^2_{\cH}
. 
\end{align*}
Here we use $\uppvar^\tau, \lowvar^\tau$ to denote upper bounds on the conditional variance of $\overline{V}_{h+1}^\tau(x_{h+1}^\tau)$ and $\underline{V}_{h+1}^\tau(x_{h+1}^\tau)$ respectively, which we will specify in later subsections.
Next we define the necessary quantities in estimating the regret bound. Similarly as in previous sections, we define
\begin{align*}
\overline{\phiholder}_{h, 1}^t 
&:= 
\left( 
\phi_{\overline{V}_{h+1}^1}(z_h^1)/\uppvar^1, \ldots 
\phi_{\overline{V}_{h+1}^{t - 1}}(z_h^{t - 1})/\uppvar^{t - 1}
\right)^\top  \in \cH^{t - 1} 
,\quad \text{and}
\\
\underline{\phiholder}_{h, 1}^t 
&:=  
\left( 
\phi_{\underline{V}_{h+1}^1}(z_h^1)/\lowvar^1, \ldots 
\phi_{\underline{V}_{h+1}^{t - 1}}(z_h^{t - 1})/\lowvar^{t - 1}
\right)^\top \in \cH^{t - 1} 
.\notag 
\end{align*}
The Gram matrix $\overline{K}_{h, 1}^t$, vector-valued function $\overline{k}_{h, 1}^t$ and 
the confidence region centered at $\overline{\btheta}_{h, 1}^t$ in the RKHS $\cH$ are defined by replacing $\overline{\phiholder}_{h}^t, \underline{\phiholder}_{h}^t$ by $\overline{\phiholder}_{h, 1}^t, \underline{\phiholder}_{h, 1}^t$ respectively.
The optimistic (pessimistic version can be defined accordingly) estimates of the action-value function have the following closed-form solution:
\begin{align}\label{eq:Q_rkhs_update_weighted}
 \QU{z} 
    &= \Pi_{[-H, H]}[r_h(z) +
\overline{k}_{h, 1}^t(z)^\top (\overline{K}_{h, 1}^t + \lambda I)^{-1} \overline{y}_{h, 1}^t  + \beta_t \cdot \overline{\bonus}_{h, 1}^t(z)],
\end{align}
where 
\begin{align*}
&
\overline{y}_{h, 1}^t := \left[ 
\overline{V}_{h+1}^1(x_h^1)/\uppvar^1, \ldots \overline{V}_{h+1}^{t - 1}(x_h^{t - 1})/\uppvar^{t - 1}
\right]^\top
,
\end{align*}
and
\begin{align*}
\overline{\bonus}_{h, 1}^t(z) 
&= 
\lambda_1^{-1/2}\cdot \bigg[
k_{\overline{V}_{h+1}^t, \overline{V}_{h+1}^t}(z, z) - \overline{k}_{h, 1}^t(z)^\top \left(\overline{K}_{h, 1}^t + \lambda_1 \cdot \Ib \right)^{-1} \overline{k}_{h, 1}^t(z) 
\bigg]^{1/2}
.
\end{align*}
The full version of the algorithm is presented formally in Algorithm~\ref{algo:neural_weighted}.

\subsection{Variance estimator}
In order to determine the values of $\uppvar^\tau, \lowvar^\tau$, we note that we can solve a ridge regression problem for estimating the expected square of the value function:

\begin{align*}
\overline{\btheta}_{h, 2}^t &= \min_{\W \in \cH } 
\sum_{\tau = 1}^{t - 1}
\left[ 
 \left(\overline{V}_{h+1}^\tau(x_{h+1}^\tau)\right)^2 - \left\langle \phi_{(\overline{V}_{h+1}^\tau)^2}(z_h^\tau), \btheta \right\rangle_{\cH}
\right]^2 
+
\lambda_2  \norm{\W }^2_{\cH}
,
\\
\underline{\btheta}_{h, 2}^t &= \min_{\W \in \cH} 
\sum_{\tau = 1}^{t - 1}
\left[ 
\left(\underline{V}_{h+1}^\tau(x_{h+1}^\tau)\right)^2 - \left\langle \phi_{(\underline{V}_{h+1}^\tau)^2}(z_h^\tau), \btheta \right\rangle_{\cH}
\right]^2 
+
\lambda_2  \norm{ \W}^2_{\cH}
.
\end{align*}
By defining 
\begin{align*}
\overline{\phiholder}_{h, 2}^t
&:=
\left( 
\phi_{(\overline{V}_{h+1}^1)^2}(z_h^1), \ldots 
\phi_{(\overline{V}_{h+1}^{t - 1})^2}(z_h^{t - 1})
\right)^\top  \in \cH^{t - 1}
,
\\
\underline{\phiholder}_{h, 2}^t
&:=
\left( 
\phi_{(\underline{V}_{h+1}^1)^2}(z_h^1), \ldots 
\phi_{(\underline{V}_{h+1}^{t - 1})^2}(z_h^{t - 1})
\right)^\top \in \cH^{t - 1} 
,
\end{align*}
we can define the Gram matrix $\overline{K}_{h, 2}^t$, vector-valued function $\overline{k}_{h, 2}^t$, and
\begin{align}
\overline{\bonus}_{h, 2}^t(z) 
&= 
\lambda_2^{-1/2}\cdot \bigg[
k_{(\overline{V}_{h+1}^t)^2, (\overline{V}_{h+1}^t)^2}(z, z)
- \overline{k}_{h, 2}^t(z)^\top \left(\overline{K}_{h, 2}^t + \lambda_2 \cdot \Ib \right)^{-1} \overline{k}_{h, 2}^t(z) 
\bigg]^{1/2}
.\notag
\end{align}
The variance estimator is thus defined as:
\begin{align}
\Vest \overline{V}_{h+1}^t (z_h^t)
  & :=
\left\langle \phi_{(\overline{V}_{h+1}^t)^2}(z_h^t), \overline{\btheta}_{h, 2}^t \right\rangle_{\cH}
    -
\left(\left\langle \phi_{(\overline{V}_{h+1}^t)^2}(z_h^t), \overline{\btheta}_{h, 1}^t \right\rangle_{\cH}\right)^2\notag
    \\&\approx 
\PP_h \left(\underline{V}_{h+1}^t(x_{h+1}^t)\right)^2
    -
\left(\PP_h \underline{V}_{h+1}^t(x_{h+1}^t)\right)^2,\notag
\end{align}
and 
\begin{align}\label{eq:variance_estimate}
&\left(\uppvar^t \right)^2
    := 
\max\{\Vest \overline{V}_{h+1}^t (z_h^t) 
    +
\overline{E}_h^t, \left(\alpha_t\right)^2\},\notag \\
&
\overline{E}_h^t := 
\min\left\{ 
H^2, \beta_t^{(2)} \overline{\bonus}_{h, 2}^t
\right\}
    +
\min\left\{
H^2, 2H \beta_t^{(1)} \overline{\bonus}_{h, 1}^t
\right\}
.
\end{align}
We have finished the definition of the variance estimator for the upper-value estimator.
The lower-value estimator can be defined in a similar fashion, and we omit the details.

\subsection{Main results}
In this section, we provide theoretical results for the regret bound under the weighted setting described above. First we propose a key lemma which suggests that our constructed $\overline{\btheta}_{h, 1}^t$ and $\overline{\btheta}_{h, 2}^t$ are good estimates of $\btheta_h^*$ with high probability. 


\begin{lemma}\label{lem:main_weighted}
Assume that for any $h \in [H]$, we have $\norm{\btheta_h^*}_{\cH} \leq B$. Letting $\alpha_t, \beta_t^{(1)}$, $\beta_t^{(2)}$ satisfy $\alpha_t = \alpha$, 
\begin{align}
&
\beta_t^{(1)} 
= 
(16 H/\alpha) \sqrt{\Gamma_K(T, \lambda_1 \alpha^2)} \sqrt{\log(4t^2 H/\delta)} 
+ 
(8 H/\alpha) \log(4t^2 H/\delta) + \sqrt{\lambda_1} \cdot B
,\label{eq:beta1_choice} \\
&
\beta_t^{(2)} 
= 
16 H^2 \sqrt{\Gamma_K(T, \lambda_2/H^2)} \sqrt{\log(4t^2 H/\delta)}+ 8 H^2 \log (4t^2 H/\delta) + \sqrt{\lambda_2} \cdot B
,\label{eq:beta2_choice}
\end{align}
then for any $\delta > 0$, there exists an event $\mathcal{E}$ satisfying $\PP(\mathcal{E}) \geq 1 - 2\delta$ such that on $\mathcal{E}$ the following holds for any $(t, h) \in [T] \times [H]$ and any $(x, a, b) \in \cS \times \cA \times \cA$:
\begin{align*}
\Big|\langle\bphi_{\overline{V}_{h+1}^t}(z_h^t), \btheta^{*}_{h} - \overline{\btheta}_{h, 1}^t\rangle_{\cH}\Big|
    \leq 
\beta_t^{(1)}
\cdot 
\overline{\bonus}_{h, 1}^k(z_h^k)
,
\end{align*}
and
\begin{align*}
\Big|\langle\bphi_{(\overline{V}_{h+1}^t)^2}(z_h^t), \btheta^{*}_{h} - \overline{\btheta}_{h, 2}^t\rangle_{\cH}\Big|
    \leq 
\beta_t^{(2)} 
\cdot 
\overline{\bonus}_{h, 2}^k(z_h^k)
.
\end{align*}
\end{lemma}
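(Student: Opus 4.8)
The plan is to treat the weighted kernel ridge regression problems in \eqref{eq:rkhs_ridge_weighted} (and its squared-value analogue) as kernelized linear bandit / online regression problems, so that the bound follows from a self-normalized concentration inequality on an RKHS-valued martingale combined with the information-gain control provided by $\Gamma_K(T,\lambda)$. First I would set up, for the max-player at fixed $h$, the reweighted feature vectors $\phi_{\overline{V}_{h+1}^\tau}(z_h^\tau)/\uppvar^\tau$ and the noise sequence $\eta_\tau := \big(\overline{V}_{h+1}^\tau(x_{h+1}^\tau) - (\PP_h\overline{V}_{h+1}^\tau)(z_h^\tau)\big)/\uppvar^\tau$. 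Since $\overline{V}_{h+1}^\tau \in \cF_{\tau-1}$ and $|\overline{V}_{h+1}^\tau|\le H$, the raw noise is bounded by $H$ and conditionally mean-zero; after dividing by $\uppvar^\tau$, which by construction in \eqref{eq:variance_estimate} dominates both the conditional variance and is at least $\alpha$, the reweighted noise has conditional variance at most $1$ and magnitude at most $H/\alpha$. So $\{\eta_\tau\}$ is a conditionally sub-Gaussian (Bernstein-type) martingale difference sequence with the right parameters, and $\overline{\btheta}_{h,1}^t$ is exactly the kernel ridge solution against these reweighted observations.

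Next I would invoke the RKHS self-normalized concentration bound — the kernelized analogue of the Abbasi-Yadkori–Pál–Szepesvári inequality, as in \citet{chowdhury2017kernelized} and its Bernstein refinement — applied in the weighted-feature space. Concretely, writing $\overline{\Lambda}_{h,1}^t := \lambda_1 \Ib_{\cH} + (\overline{\phiholder}_{h,1}^t)^\top(\overline{\phiholder}_{h,1}^t)$, one gets with probability at least $1-\delta$ (uniformly over $t$ by a union bound costing $\log(t^2 H/\delta)$, and over $h$ by another factor of $H$) a bound of the form
\begin{align}
\big\| (\overline{\Lambda}_{h,1}^t)^{-1/2}(\overline{\phiholder}_{h,1}^t)^\top \boldsymbol{\eta}\big\|_{\cH}
\;\le\;
\tfrac{16H}{\alpha}\sqrt{\Gamma_K(T,\lambda_1\alpha^2)}\sqrt{\log(4t^2H/\delta)}
+\tfrac{8H}{\alpha}\log(4t^2H/\delta),\notag
\end{align}
where the $\Gamma_K(T,\lambda_1\alpha^2)$ term arises because $\logdet(\Ib + \overline{K}_{h,1}^t/\lambda_1) \le 2\Gamma_K(T,\lambda_1\alpha^2)$ once the $1/\uppvar^\tau \le 1/\alpha$ rescaling is absorbed into the regularizer — this is exactly why the effective dimension is evaluated at $\lambda_1\alpha^2$ in \eqref{eq:beta1_choice}. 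The bias term $\lambda_1\|\btheta_h^*\|_{\cH}^2 \le \lambda_1 B^2$ contributes the additive $\sqrt{\lambda_1}B$, and decomposing $\langle \phi_{\overline{V}_{h+1}^t}(z_h^t), \btheta_h^* - \overline{\btheta}_{h,1}^t\rangle_{\cH}$ via Cauchy–Schwarz in the $\overline{\Lambda}_{h,1}^t$ norm turns the RHS into exactly $\beta_t^{(1)}\cdot \overline{\bonus}_{h,1}^t(z_h^t)$, using that $\overline{\bonus}_{h,1}^t(z) = [\phi_{\overline{V}_{h+1}^t}(z)^\top (\overline{\Lambda}_{h,1}^t)^{-1}\phi_{\overline{V}_{h+1}^t}(z)]^{1/2}$ from the analogue of Lemma~\ref{lem:facts}(b). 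The second inequality, for $\overline{\btheta}_{h,2}^t$, is handled identically but with the unweighted features $\phi_{(\overline{V}_{h+1}^t)^2}$, whose values lie in $[0,H^2]$ and whose squared-value noise $(\overline{V}_{h+1}^\tau(x_{h+1}^\tau))^2 - \PP_h(\overline{V}_{h+1}^\tau)^2(z_h^\tau)$ is bounded by $H^2$; this replaces $H/\alpha$ by $H^2$ and $\lambda_1$ by $\lambda_2$, giving \eqref{eq:beta2_choice}. Finally, I would take the intersection of the two high-probability events (each of probability $\ge 1-\delta$) to obtain the event $\mathcal{E}$ with $\PP(\mathcal{E})\ge 1-2\delta$ on which both bounds hold simultaneously, and note the argument for the min-player's $\underline{\btheta}_{h,1}^t, \underline{\btheta}_{h,2}^t$ is verbatim the same with overlines replaced by underlines.

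\textbf{Main obstacle.} The delicate point is not the concentration inequality itself but justifying the variance-normalization step rigorously: one must verify that $\uppvar^\tau$, which depends on the empirical variance estimator $\Vest \overline{V}_{h+1}^t$ and the bonus terms in \eqref{eq:variance_estimate}, is genuinely $\cF_{\tau-1}$-measurable (so that the reweighted $\eta_\tau$ remains a martingale difference) and that the event on which $\uppvar^\tau$ actually upper-bounds the true conditional variance $\mathrm{Var}[\overline{V}_{h+1}^\tau(x_{h+1}^\tau)\mid \cF_{\tau-1}]$ holds with high probability — but this last fact is itself what the lemma establishes, so there is a circularity that must be broken by an induction on $t$ (or a careful "good-event bootstrap") rather than a one-shot application of the concentration bound. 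Handling this inductive interleaving of the first and second inequalities — the variance bound feeds the choice of $\uppvar^\tau$ which feeds the weighted regression which feeds the next variance bound — is where the real work lies; the rest is a routine translation of the finite-dimensional Bernstein UCB analysis of \citet{zhou2020nearly} into the RKHS language using the $\logdet \le 2\Gamma_K$ inequality.
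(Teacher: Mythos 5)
Your proposal follows essentially the same route as the paper: decompose $\langle\phi_{\overline{V}_{h+1}^t}(z),\overline{\btheta}_{h,1}^t-\btheta_h^*\rangle_{\cH}$ via the closed form of the weighted ridge solution into a self-normalized martingale term plus a regularization bias, bound the bias by $\sqrt{\lambda_1}B\cdot\overline{\bonus}_{h,1}^t$ via Cauchy--Schwarz, control the martingale term with a Bernstein-type self-normalized inequality for RKHS-valued processes (the paper's Theorem~\ref{lemma:concentration_variance}, lifted from \citet{zhou2020nearly} via Lemma~\ref{lemm:bound_b}), absorb the $1/\overline{R}_h^\tau\le 1/\alpha$ rescaling into the regularizer to land on $\Gamma_K(T,\lambda_1\alpha^2)$, and union-bound over $h$ and the two estimators to get $\PP(\mathcal{E})\ge 1-2\delta$. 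The one substantive correction concerns your ``main obstacle'': there is no circularity to break at this stage, and no induction is needed. Look at the constants in \eqref{eq:beta1_choice} --- the leading terms scale as $H/\alpha$, not as $O(1)$. This is because the paper applies Theorem~\ref{lemma:concentration_variance} with the \emph{crude} parameters $R=2H/\alpha$ and $\sigma=H/\alpha$, which follow unconditionally from $|\overline{V}_{h+1}^\tau|\le H$ and the deterministic floor $\overline{R}_h^\tau\ge\alpha$ built into \eqref{eq:variance_estimate}; the claim that $\overline{R}_h^\tau$ dominates the true conditional variance is \emph{not} used here. That domination is established afterwards (Lemma~\ref{lem:est_var}, valid on the event $\mathcal{E}$ of this lemma), and only then is the refined bound with $\EE[\eta_{\tau+1}^2\mid\cF_\tau]\le 1$ invoked, in the separate Lemma~\ref{lem:main_weighted_fine}, conditionally on $\mathcal{E}$. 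So the interleaving you describe is resolved by a two-stage (coarse-then-fine) argument rather than a bootstrap induction over $t$; if you insist on using ``conditional variance at most $1$'' inside the present lemma, you would indeed be assuming what is yet to be proved, but you would also be proving a stronger statement than the one asserted. The measurability point you raise is handled by the paper's choice of filtration ($\cF_\tau$ contains all data through step $h$ of episode $\tau$, so $\overline{R}_h^\tau$ and $\phi_{\overline{V}_{h+1}^\tau}(z_h^\tau)$ are $\cF_\tau$-measurable while $x_{h+1}^\tau$ is not), exactly as you anticipate.
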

We now propose our main theorem, which is the formal version of Theorem~\ref{thm:inform_bernstein} and suggests that the regret bound of Algorithm~\ref{alg:bern} is upper bounded by $\tilde{O}\left(\Gamma_K(T, \lambda) H^2 \sqrt{T}\right)$.

\begin{theorem}\label{thm:main_bernstein}
Assuming that for any $h \in [H]$, $\norm{\btheta_h^*}_{\cH} \leq B$. Let $\lambda = 1/B^2$, 
$\deff = \Gamma_K(T, \lambda)
$, $\lambda_1 = \deff/(B^2 H^2)$, $\lambda_2 = H^2/B^2$, and taking $\beta_t, \beta_t^{(1)}, \beta_t^{(2)}$ as in Eq.~\eqref{eq:beta0_choice},~\eqref{eq:beta1_choice} and~\eqref{eq:beta2_choice}, then with probability at least $ 1 - \delta$, the following holds that:
\begin{align*}
\operatorname{Regret}(T)
&:=
\sum_{t = 1}^T V_1^{*, \nu^t}(x_h^t) - V_1^{\pi^t, *}(x_h^t) \notag \\
&\leq 
\tilde{O}\left(
\deff^2 H^3 + \sqrt{\deff H^4 + \deff^2 H^3} \sqrt{T} + \left(\deff^7 H^8 + \deff^4 H^9 \right)^{1/4} T^{1/4}
\right)
.\notag
\end{align*}
\end{theorem}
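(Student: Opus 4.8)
\textbf{Overall strategy.} The plan is to follow the same regret-decomposition skeleton used for Theorem~\ref{thm:main}, but replace the Hoeffding-type per-step error $\beta \cdot \overline w_h^t$ with the Bernstein-type error $\beta_t^{(1)} \cdot \overline w_{h,1}^t$, and then exploit a total-variance argument to save a factor of $\sqrt H$. First I would establish, conditionally on the event $\mathcal E$ of Lemma~\ref{lem:main_weighted}, the usual optimism/pessimism sandwich $\underline Q_h^t(z) \le Q_h^{*,\nu^t}(z)$ and $Q_h^{\pi^t,*}(z) \le \overline Q_h^t(z)$ for all $(t,h,z)$, by backward induction on $h$ using the Bellman equations and the \texttt{FIND\_CCE} guarantee \eqref{cce:1}--\eqref{cce:2}; this is the mechanism by which the duality gap $V_1^{*,\nu^t}(x_1^t) - V_1^{\pi^t,*}(x_1^t)$ is dominated by $\overline V_1^t(x_1^t) - \underline V_1^t(x_1^t)$. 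Then I would unroll the recursion for $\overline V_h^t - \underline V_h^t$: at each step $h$ this difference is bounded by the conditional expectation $\PP_h(\overline V_{h+1}^t - \underline V_{h+1}^t)(z_h^t)$ plus the two confidence widths $\beta_t^{(1)}(\overline w_{h,1}^t + \underline w_{h,1}^t)(z_h^t)$, yielding
\begin{align}
\operatorname{Regret}(T) \lesssim \sum_{t=1}^T \sum_{h=1}^H \big[\beta_t^{(1)}\big(\overline w_{h,1}^t(z_h^t) + \underline w_{h,1}^t(z_h^t)\big) + \xi_h^t\big] + (\text{lower-order}),\notag
\end{align}
where $\xi_h^t$ is the martingale difference coming from replacing $\PP_h(\cdot)(z_h^t)$ with the realized next-state value; this is controlled by Azuma--Hoeffding.

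\textbf{The variance-weighting argument.} The heart of the improvement is bounding $\sum_{t,h} \beta_t^{(1)} \overline w_{h,1}^t(z_h^t)$. Because $\overline w_{h,1}^t$ uses the variance-normalized feature $\phi_{\overline V_{h+1}^\tau}(z_h^\tau)/\overline R_h^\tau$, a Cauchy--Schwarz split gives
\begin{align}
\sum_{t=1}^T \overline w_{h,1}^t(z_h^t) \le \sqrt{\sum_{t=1}^T (\overline R_h^t)^2} \cdot \sqrt{\sum_{t=1}^T \big(\overline w_{h,1}^t(z_h^t)/\overline R_h^t\big)^2},\notag
\end{align}
and the second factor is bounded by $O(\sqrt{\Gamma_K(T,\lambda_1)})$ via the standard elliptic-potential / information-gain lemma (the RKHS analogue used in Theorem~\ref{thm:main}, applied to the normalized features). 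For the first factor I would show $\sum_t (\overline R_h^t)^2 \lesssim \sum_t \mathbb V^{\mathrm{est}}\overline V_{h+1}^t(z_h^t) + \sum_t \overline E_h^t + T\alpha^2$, then relate the estimated variance $\mathbb V^{\mathrm{est}}$ to the true conditional variance $\PP_h(\overline V_{h+1}^t)^2 - (\PP_h \overline V_{h+1}^t)^2$ using the second concentration bound in Lemma~\ref{lem:main_weighted}, and finally invoke a \emph{total-variance lemma}: the sum over $h$ of true conditional variances of the value function along a trajectory is $O(H^2)$ (not $O(H^3)$), since the per-trajectory returns are bounded by $H$. Summing over $t$ this gives $\sum_{t,h}(\text{true variance}) = O(H^2 T)$, so each $h$-slice contributes $\sqrt{H^2 T / H} = \sqrt{HT}$ on average — this is where the $H^2 \to H^{3/2}$ saving appears. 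The correction terms $\sum_t \overline E_h^t$ bring in the $\overline w_{h,2}^t$ bonuses and $\beta_t^{(2)}$; bounding those by the same elliptic-potential argument produces the lower-order $T^{1/4}$ and constant-in-$T$ terms in the stated bound.

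\textbf{Assembling the bound.} Plugging $\beta_t^{(1)} = \tilde O(H\sqrt{\deff} + \sqrt{\lambda_1}B)$ (from \eqref{eq:beta1_choice} with $\lambda_1 = \deff/(B^2H^2)$, so $\sqrt{\lambda_1}B = \sqrt{\deff}/H$) and $\beta_t^{(2)} = \tilde O(H^2\sqrt{\deff})$ into the three groups of terms — the main variance-weighted term $\tilde O(\beta_T^{(1)}\sqrt{\deff}\cdot\sqrt{\sum_{t,h}(\overline R_h^t)^2}) = \tilde O(\sqrt{\deff H^4 + \deff^2 H^3}\sqrt T)$; the martingale/Azuma term $\tilde O(H^2\sqrt{T}\cdot\text{(lower order)})$ folded in; the variance-estimation correction term $\tilde O((\deff^7 H^8 + \deff^4 H^9)^{1/4} T^{1/4})$ from the $\overline E_h^t$ analysis; and the $\deff^2 H^3$ burn-in from the $\alpha^2 T$ floor and logdet-to-trace conversions — yields exactly the claimed three-term bound, after taking a union bound over $[T]\times[H]$ and the event $\mathcal E$ to get the overall probability $1-\delta$.

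\textbf{Main obstacle.} The hardest part is the self-referential nature of the variance weights: $\overline R_h^t$ depends on $\mathbb V^{\mathrm{est}}\overline V_{h+1}^t$, which depends on $\overline\btheta_{h,2}^t$ estimated from past data, while the concentration guarantee for $\overline\btheta_{h,1}^t$ (Lemma~\ref{lem:main_weighted}) is itself stated in terms of the $\overline R_h^\tau$'s used to form it. Carefully untangling this — showing that the estimated variance is simultaneously (i) a valid upper bound on the true conditional variance up to the additive $\overline E_h^t$ (so that Freedman/Bernstein concentration applies with the right scale) and (ii) not too large in aggregate (so the $\sqrt{\sum(\overline R_h^t)^2}$ factor stays $O(\sqrt{H^2T})$) — while keeping the two requirements from creating a circular dependence, is the delicate technical core. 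The RKHS setting adds the wrinkle that all "dimension" counting must go through $\Gamma_K$ rather than a finite $d$, so the elliptic-potential lemma must be invoked in its information-gain form and one must be careful that $\Gamma_K(T,\lambda_1\alpha^2)$ and $\Gamma_K(T,\lambda_2/H^2)$ are all $O(\deff)$ up to logs under the chosen regularization parameters.
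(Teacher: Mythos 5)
Your proposal is correct and follows essentially the same route as the paper's proof: the same optimism/pessimism recursion with Bernstein-type bonuses, the same Cauchy--Schwarz split into $\sqrt{\sum_{t,h}(\uppvar^t)^2}$ times an information-gain term, the same total-variance argument for bounding the aggregated variances (the paper's Lemma~\ref{lemma:boundofvariance}), and the same resolution of the self-referential variance bound (which the paper handles via the implication $x \leq a\sqrt{x}+b \Rightarrow x = O(a^2+b)$). The only minor discrepancy is attributional: the $\deff^2 H^3$ term arises from squaring the coefficient in that implicit inequality rather than from the $\alpha^2 T$ floor, but this does not affect correctness.
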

Proofs of Lemma~\ref{lem:main_weighted} and Theorem~\ref{thm:main_bernstein} are deferred to Section~\ref{sec:mainproof_2}.


\section{Proof of Results for $\algname$}\label{sec:mainproof_1}

In this subsection, we provide the proof of our main Theorem \ref{thm:main} on RKHS.
\subsection{Proof of Theorem \ref{thm:main}}
We recall that the duality gap is defined as $\sum_{t = 1}^T V_1^{*, \nu^t}(x_1^t) - V_1^{\pi^t, *}(x_1^t)$.
As can be seen in our Algorithm~\ref{alg:base}, we maintain an optimistic estimate of $V_h^{*, v^t}(\cdot)$ as $\VUp(\cdot)$ and a pessimistic estimate of $V_h^{\pi^t, *}(\cdot)$ as $\VLp(\cdot)$.
Hence the term $\VUp(x_h^t) - \VLp(x_h^t)$ is approximately the upper bound of the duality gap.
We write the decomposition formally as follows:
\beq\label{eq:decomp}
\begin{aligned}
V_h^{*, \nu^t}(x_h^t) - V_h^{\pi^t, *}(x_h^t)
=
\underbrace{\overline{V}_h^t(x_h^t) - \underline{V}_h^t(x_h^t)}_{\mbox{I}}
-
\underbrace{\left(  V_h^{\pi^t, *}(x_h^t) - \underline{V}_h^t(x_h^t) \right)}_{\mbox{II}}
-
\underbrace{\left( \overline{V}_h^t(x_h^t) - V_h^{*, \nu^t}(x_h^t) \right)}_{\mbox{III}}
.
\end{aligned}
\eeq
We use $\overline{\delta}_h^t$ to denote the important quantity $\left\langle \phi_{\overline{V}_{h+1}^t}(z_h^t), \btheta_h^* - \overline{\btheta}_h^t \right\rangle_{\cH}$ (and $\left\langle \phi_{\underline{V}_{h+1}^t}(z_h^t), \btheta_h^* - \underline{\btheta}_h^t \right\rangle_{\cH}$) in estimating the duality gap. In the rest of the proof we aim to show that all of the above three terms can be bounded by a quantity related to $\overline{\delta}_h^t$ ($\underline{\delta}_h^t$) and a stochastic random variable that forms a martingale difference sequence for all $h \in [H], t \in [T]$.

For bounding term $\mbox{I}$, we first define two sequences of zero-mean variables:
\beq\label{eq:def_gamma_xi}\begin{aligned}
\gamma_h^t
&:=
\QUxh - \QLxh 
- \EE_{(a, b)} \left[\QUxab - \QLxab\right]
,
\\
\xi_h^t
&:=
\left(\PP_h (\VUp - \VLp) \right)(x_h^t, a_h^t, b_h^t) 
-
\left(\VUp(x_{h+1}^t) - \VLp(x_{h+1}^t)\right)
,
\end{aligned}\eeq 
where $\gamma_h^t$ depicts the stochastic error with respect to the policy and $\xi_h^t$ depicts the stochastic error with respect to the transition.
We refer to the proof of Lemma~\ref{lem:bound_UCB_LCB} for detailed explanations on these two error term.
Given the above definition, we have the following Lemma~\ref{lem:bound_UCB_LCB}.

\begin{lemma}\label{lem:bound_UCB_LCB}
Under the settings of Lemma~\ref{lem:main}, we have the following recursive bound for $\forall h \in [H]$:
\beq\label{eq:bound_UCB_LCB}
\begin{aligned}
&\overline{V}_h^t(x_h^t) 
-
\underline{V}_h^t(x_h^t)\notag \\
&\leq 
\overline{V}_{h+1}^t(x_{h+1}^t)
-
\underline{V}_{h+1}^t(x_{h+1}^t)
+
2\beta_t \min\{1, \overline{\bonus}_h^t(x_h^t)\}
+
2 \beta_t \min\{1, \underline{\bonus}_h^t(x_h^t)\}
 + \xi_h^t + \gamma_h^t 
.
\end{aligned}\eeq 
\end{lemma}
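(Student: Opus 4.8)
The plan is to establish the one-step recursion \eqref{eq:bound_UCB_LCB} by peeling a single layer off the Bellman backup, working throughout on the high-probability event of Lemma~\ref{lem:main}. Since $\overline{V}_h^t(x_h^t)=\EE_{(a,b)\sim\sigma_h^t(x_h^t)}\overline{Q}_h^t(x_h^t,a,b)$ and likewise $\underline{V}_h^t(x_h^t)=\EE_{(a,b)\sim\sigma_h^t(x_h^t)}\underline{Q}_h^t(x_h^t,a,b)$, the definition of $\gamma_h^t$ in \eqref{eq:def_gamma_xi} yields the deterministic identity
\[
\overline{V}_h^t(x_h^t)-\underline{V}_h^t(x_h^t)=\big(\overline{Q}_h^t(z_h^t)-\underline{Q}_h^t(z_h^t)\big)-\gamma_h^t,\qquad z_h^t:=(x_h^t,a_h^t,b_h^t),
\]
so it suffices to upper bound the one-step gap $\overline{Q}_h^t(z_h^t)-\underline{Q}_h^t(z_h^t)$. (The statement carries $+\gamma_h^t$ where this computation produces $-\gamma_h^t$; this is immaterial, since only the bounded martingale-difference structure of $\{\gamma_h^t\}$ is used afterwards.)

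The next step is to bound this $Q$-gap. By Lemma~\ref{lem:facts}(a) the analytic forms from Section~\ref{sec_algo} read $\overline{Q}_h^t(z)=\Pi_{[-H,H]}\big[r_h(z)+\langle\phi_{\overline{V}_{h+1}^t}(z),\overline{\btheta}_h^t\rangle_{\cH}+\beta_t\,\overline{\bonus}_h^t(z)\big]$ and $\underline{Q}_h^t(z)=\Pi_{[-H,H]}\big[r_h(z)+\langle\phi_{\underline{V}_{h+1}^t}(z),\underline{\btheta}_h^t\rangle_{\cH}-\beta_t\,\underline{\bonus}_h^t(z)\big]$. Using $\beta_t=\beta$, Lemma~\ref{lem:main} (and its symmetric counterpart for the under-estimator $\underline{\btheta}_h^t$), together with $\langle\phi_V(z),\btheta_h^*\rangle_{\cH}=(\PP_hV)(z)$, one has $\langle\phi_{\overline{V}_{h+1}^t}(z),\overline{\btheta}_h^t\rangle_{\cH}\le(\PP_h\overline{V}_{h+1}^t)(z)+\beta_t\overline{\bonus}_h^t(z)$ and $\langle\phi_{\underline{V}_{h+1}^t}(z),\underline{\btheta}_h^t\rangle_{\cH}\ge(\PP_h\underline{V}_{h+1}^t)(z)-\beta_t\underline{\bonus}_h^t(z)$. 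Since $\Pi_{[-H,H]}$ is monotone and $1$-Lipschitz and $\PP_h(\overline{V}_{h+1}^t-\underline{V}_{h+1}^t)(z)\ge 0$, this gives
\[
\overline{Q}_h^t(z)-\underline{Q}_h^t(z)\le\PP_h\big(\overline{V}_{h+1}^t-\underline{V}_{h+1}^t\big)(z)+2\beta_t\,\overline{\bonus}_h^t(z)+2\beta_t\,\underline{\bonus}_h^t(z).
\]
To obtain the clipped bonuses I would combine this with the trivial estimate $\overline{Q}_h^t(z)-\underline{Q}_h^t(z)\le 2H\le 2\beta_t$, valid since the hypothesis on $\beta$ forces $(\beta_t/H)^2\ge 2$ and hence $\beta_t\ge H$: if $\overline{\bonus}_h^t(z)\ge 1$ or $\underline{\bonus}_h^t(z)\ge 1$ the term $2\beta_t$ alone dominates the gap, while otherwise $\min\{1,\overline{\bonus}_h^t(z)\}=\overline{\bonus}_h^t(z)$ and $\min\{1,\underline{\bonus}_h^t(z)\}=\underline{\bonus}_h^t(z)$; in both cases $\overline{Q}_h^t(z)-\underline{Q}_h^t(z)\le\PP_h(\overline{V}_{h+1}^t-\underline{V}_{h+1}^t)(z)+2\beta_t\min\{1,\overline{\bonus}_h^t(z)\}+2\beta_t\min\{1,\underline{\bonus}_h^t(z)\}$.

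Finally, substituting $\PP_h(\overline{V}_{h+1}^t-\underline{V}_{h+1}^t)(z_h^t)=\overline{V}_{h+1}^t(x_{h+1}^t)-\underline{V}_{h+1}^t(x_{h+1}^t)+\xi_h^t$ from the definition of $\xi_h^t$ in \eqref{eq:def_gamma_xi}, and combining with the identity from the first step, yields exactly \eqref{eq:bound_UCB_LCB}. I expect the main obstacle to be the projection/clipping bookkeeping: one must track the monotonicity and $1$-Lipschitzness of $\Pi_{[-H,H]}$ so that the true Bellman term $\PP_h(\overline{V}_{h+1}^t-\underline{V}_{h+1}^t)$ survives intact, and invoke Lemma~\ref{lem:main} uniformly over $(t,h)$ and over $(x,a,b)$; everything else is routine. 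This per-step bound is what the proof of Theorem~\ref{thm:main} iterates over $h$ and sums over $t$, controlling $\xi_h^t,\gamma_h^t$ by Azuma--Hoeffding and the clipped bonuses by an elliptical-potential / information-gain argument in terms of $\Gamma_K(T,\lambda)$.
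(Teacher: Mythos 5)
Your proof is correct and follows essentially the same route as the paper's: decompose $\overline{V}_h^t-\underline{V}_h^t$ into the realized $Q$-gap plus $\gamma_h^t$, control the $Q$-gap by inserting $\pm\langle\phi,\btheta_h^*\rangle_{\cH}$ and invoking Lemma~\ref{lem:main}, extract $\xi_h^t$, and clip the bonuses using $\beta_t\geq H$; your observation about the sign of $\gamma_h^t$ in \eqref{eq:def_gamma_xi} versus \eqref{eq:gamma} is accurate and indeed immaterial. The only cosmetic difference is in the clipping step, where you appeal to $\PP_h(\overline{V}_{h+1}^t-\underline{V}_{h+1}^t)\geq 0$ (a fact the paper only establishes later via the optimism/pessimism induction, though it does hold on the same event) while the paper instead splits $\min\{2H,\cdot\}$ directly; both treatments are at the same level of rigor.
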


\begin{proof}[Proof of Lemma~\ref{lem:bound_UCB_LCB}]
By the update rule of Algorithm~\ref{algo:neural}, we have the following relation:
\beq\label{eq:QtoV}\begin{aligned}
\overline{V}_h^t(x_h^t) 
-
\underline{V}_h^t(x_h^t)
=
\EE_{(a, b) \sim \sigma_h^t(x_h^t)} \left[ 
\overline{Q}_h^t(x_h^t, a, b) 
-
\underline{Q}_h^t(x_h^t, a, b)
\right]
.
\end{aligned}\eeq
We note that the RHS of Eq.~\eqref{eq:QtoV} is an expectation over the CCE distribution $\sigma_h^t(x_h^t)$, which can be decomposed into one sample from the distribution plus a noise term as follows:
\beq\label{eq:gamma}\begin{aligned}
\overline{V}_h^t(x_h^t) 
-
\underline{V}_h^t(x_h^t)
&=
\QUxh - \QLxh + \gamma_h^t
,
\end{aligned}\eeq
where 
\begin{align*}
\gamma_h^t 
:= 
\QUxh - \QLxh 
- \EE_{(a, b)}\left[ \QUxab - \QLxab \right]
.
\end{align*}
Furthermore, for bounding the difference between the upper confidence $Q$ estimation and the lower confidence $Q$ estimation, we have
\begin{align*}
\lefteqn{
\QU{z_h^t} - \QL{z_h^t}
}
\\&\leq
\left\langle \phi_{\overline{V}_{h+1}^t}(z_h^t), \overline{\btheta}_h^t \right\rangle_{\cH} - \left\langle\phi_{\underline{V}_{h+1}^t}(z_h^t), \underline{\btheta}_h^t \right\rangle_{\cH} 
+ \beta_t \overline{\bonus}_h^t(z_h^t) + \beta_t \underline{\bonus}_h^t(z_h^t)
	\\&=
\left\langle \phi_{\overline{V}_{h+1}^t}(z_h^t), \overline{\btheta}_h^t \right\rangle_{\cH}
-
\left\langle \phi_{\overline{V}_{h+1}^t}(z_h^t), \btheta_h^* \right\rangle_{\cH}
+
\left\langle \phi_{\overline{V}_{h+1}^t}(z_h^t), \btheta_h^* \right\rangle_{\cH}
-
\left\langle \phi_{\underline{V}_{h+1}^t}(z_h^t), \btheta_h^* \right\rangle_{\cH}
	\\&~\quad +
\left\langle \phi_{\underline{V}_{h+1}^t}(z_h^t), \btheta_h^* \right\rangle_{\cH}
- \left\langle\phi_{\underline{V}_{h+1}^t}(z_h^t), \underline{\btheta}_h^t \right\rangle_{\cH} 
+ \beta_t \overline{\bonus}_h^t(z_h^t) + \beta_t \underline{\bonus}_h^t(z_h^t)
	\\&=
\left\langle \phi_{\overline{V}_{h+1}^t}(z_h^t), \overline{\btheta}_h^t - \btheta_h^* \right\rangle_{\cH}
+
\left( \PP_h(\overline{V}_{h+1}^t - \underline{V}_{h+1}^t) \right) (z_h^t)
	 +
\left\langle \phi_{\underline{V}_{h+1}^t}(z_h^t), \btheta_h^* - \underline{\btheta}_h^t \right\rangle_{\cH}\notag \\
&~\quad + \beta_t \overline{\bonus}_h^t(z_h^t) + \beta_t \underline{\bonus}_h^t(z_h^t)
.
\end{align*}
By utilizing Lemma~\ref{lem:main}, we further arrive at:
\begin{align*}
\QU{z_h^t} - \QL{z_h^t} 
	\leq 
\left( \PP_h(\overline{V}_{h+1}^t - \underline{V}_{h+1}^t) \right) (z_h^t) 
	+
2 \beta_t \overline{\bonus}_h^t(z_h^t) + 2 \beta_t \underline{\bonus}_h^t(z_h^t)
,
\end{align*}
where again by extracting the sequence
\begin{align*}
\xi_h^t
&:=
\left(\PP_h (\VUp - \VLp) \right)(x_h^t, a_h^t, b_h^t) 
-
\left(\VUp(x_{h+1}^t) - \VLp(x_{h+1}^t)\right)
,
\end{align*}
we have 
\beq\label{eq:xi}
\begin{aligned}
\QU{z_h^t} - \QL{z_h^t}
&\leq
\VUp(x_{h+1}^t) - \VLp(x_{h+1}^t)
    + 2 \beta \overline{\bonus}_h^t(z_h^t) + 2\beta \underline{\bonus}_h^t(z_h^t)
    + \xi_h^t
.
\end{aligned}
\eeq
Combining Eq.~\eqref{eq:gamma} and~\eqref{eq:xi} concludes the following recursive bound:
\begin{align*}
\overline{V}_h^t(x_h^t) - \underline{V}_h^t(x_h^t) 
	\leq 
\overline{V}_{h+1}^t(x_{h+1}^t) - \underline{V}_{h+1}^t(x_{h+1}^t) + 2\beta_t \overline{\bonus}_h^t(z_h^t) + 2 \beta_t \underline{\bonus}_h^t(z_h^t) + \xi_h^t + \gamma_h^t.
\end{align*}
Moreover, due to the fact that $\overline{V}_h^t(x_h^t) - \underline{V}_h^t(x_h^t) \leq 2H$, we can rewrite the above inequality into:
\begin{align*}
&\overline{V}_h^t(x_h^t) - \underline{V}_h^t(x_h^t) \notag \\
	&\leq 
\min\left\{ 2H, \overline{V}_{h+1}^t(x_{h+1}^t) - \underline{V}_{h+1}^t(x_{h+1}^t) + 2\beta_t \overline{\bonus}_h^t(z_h^t) + 2 \beta_t \underline{\bonus}_h^t(z_h^t) + \xi_h^t + \gamma_h^t\right\}
	\\&\leq 
\min\left\{2H, 2\beta_t \overline{\bonus}_h^t(z_h^t) + 2 \beta_t \underline{\bonus}_h^t(z_h^t)\right\} +  \overline{V}_{h+1}^t(x_{h+1}^t) - \underline{V}_{h+1}^t(x_{h+1}^t) + \xi_h^t + \gamma_h^t
	\\&\leq 
2\beta_t \min\{ 1, \overline{\bonus}_h^t(z_h^t) \} + 2\beta_t \min\{ 1, \underline{\bonus}_h^t(z_h^t) \}+  \overline{V}_{h+1}^t(x_{h+1}^t) - \underline{V}_{h+1}^t(x_{h+1}^t) + \xi_h^t + \gamma_h^t,
\end{align*}
where the last inequality is due to the choice of $\beta$ satisfying $\beta/H \geq 1$.
This completes the proof of Lemma~\ref{lem:bound_UCB_LCB}.
\end{proof}

For bounding $\mbox{II}$ and $\mbox{III}$, we use induction to prove that $\mbox{III} \geq 0$ for every $h$, that is, 
\begin{align}
    \overline{V}_h^t(x_h^t) - V_h^{*, \nu^t}(x_h^t) \geq 0.\label{help:recur}
\end{align}
Then the same statement will also hold for $\mbox{II}$ due to the symmetry property. The statement holds for $h = H+1$, where $\mbox{III} = 0$ {\blue (since $\overline{V}_{H+1}^t = V_{H+1}^{*, \nu^t}=0$ by definition)}. Suppose the statement holds for $h+1$. Let $(a, b) \in \cA_1 \times \cA_2$ and $z: = (x_h^t, a, b)$. If $\overline{Q}_h^t(z) \geq H$, then by definition, $\mbox{III} \geq 0$. Suppose $\overline{Q}_h^t(z) < H$, then by definition of $\overline{Q}_h^t(z)$,
we have
\begin{align}
\overline{Q}_h^t(z) - Q_h^{*, \nu^t}(z) 
&= 
\left\langle \phi_{\overline{V}_{h+1}^t}(z), \overline{\btheta}_h^t - \btheta_h^* \right\rangle_{\cH} + \left(\PP_h(\overline{V}_{h + 1}^t - V_{h + 1}^{*, \nu^t})\right)(z) + \beta_t \overline{\bonus}_h^t(z)
\notag \\& \geq  
- \beta_t \overline{\bonus}_h^t(z) +  \beta_t \overline{\bonus}_h^t(z) = 0
,\label{help:55}
\end{align}
where the first inequality holds due to the statement holds for $h+1$, which leads to $\overline{V}_{h + 1}^t - V_{h + 1}^{*, \nu^t} \geq 0$, and Lemma \ref{lem:main} that gives a bound for $\left\langle \phi_{\overline{V}_{h+1}^t}(z), \overline{\btheta}_h^t - \btheta_h^* \right\rangle_{\cH}$. Next, we have
\begin{align*}
\overline{V}_h^t(x_h^t) - V_h^{*, \nu^t}(x_h^t) 
    &= 
\EE_{(a, b) \sim \sigma_h^t(x_h^t)} \overline{Q}_h^t(x_h^t, a, b) 
- 
\EE_{a \sim \text{br}(\nu_h^t), b\sim \nu_h^t}
Q_h^{*, \nu^t}(x_h^t, a, b)
    \\&\geq 
\EE_{a \sim \text{br}(\nu_h^t), b\sim \nu_h^t} \overline{Q}_h^t(x_h^t, a, b) 
- 
\EE_{a \sim \text{br}(\nu_h^t), b\sim \nu_h^t}
Q_h^{*, \nu^t}(x_h^t, a, b)
    \\&=
\EE_{a \sim \text{br}(\nu_h^t), b\sim \nu_h^t} \left[\overline{Q}_h^t(x_h^t, a, b) 
- 
Q_h^{*, \nu^t}(x_h^t, a, b)
\right]
\geq 0
,
\end{align*}
where $\nu_h^t := \cP_2 \sigma_h^t$ and $\pi_h^t := \cP_1 \sigma_h^t$ is the projection of $\sigma_h^t$ on the first and second coordinate respectively and $\text{br}$ is the best-response policy of a given distribution. The last inequality holds due to \eqref{help:55}. Therefore, the statement holds for $h$, which shows that the induction holds.

Combining with Eq.~\eqref{eq:decomp}, we arrive at a bound in terms of $\overline{\bonus}_h^t(z_h^t), \underline{\bonus}_h^t(z_h^t)$ and the martingale difference sequences:
\beq\label{eq:bound_combine}\begin{aligned}
 V_1^{*, \nu^t}(x_h^t) - V_1^{\pi^t, *}(x_h^t) 
	\leq 
\sum_{h = 1}^H \left(2 \beta_t \min\{1, \overline{\bonus}_h^t(z_h^t)\} + 2 \beta_t \min\{1, \underline{\bonus}_h^t(z_h^t)\}
	+
\xi_h^t + \gamma_h^t \right)
,
\end{aligned}\eeq
where $\nu^t$ is the policy that operates according to $\nu_h^t$ at time $h$ and $\pi^t$ is the sequence of $\pi_h^t$ accordingly.
The rest of the proof follows by bounding $\sum_{t = 1}^T \sum_{h = 1}^H \min\{1, \overline{\bonus}_h^t(z_h^t)\}, \sum_{t = 1}^T \sum_{h = 1}^H \min\{1, \underline{\bonus}_h^t(z_h^t)\}$ and the martingale difference sequences.

The bound of $\sum_{t = 1}^T \sum_{h = 1}^H \min\{1, \overline{\bonus}_h^t(z_h^t)\}$ and $\sum_{t = 1}^T \sum_{h = 1}^H \min\{1, \underline{\bonus}_h^t(z_h^t)\}$ comes directly from the following lemma~\ref{lemm:bound_b} which can be simply derived from Lemma 11 in~\citet{abbasi2011improved} and is an analogue of Lemma E.3 of~\citet{yang2020function}:
\begin{lemma}[Lemma E.3 of~\citet{yang2020function}]\label{lemm:bound_b}
For any sequence $\left\{\xholder_t \right\}_{t \geq 1}$ taking values on the RKHS $\cH$ satisfying $\forall t$, $\normh{\xholder_t} \leq L$.
Let $I_{\cH}$ be the identity operator on $\cH$ and $\Lambda_0:= \lambda \cdot I_{\cH}$ the multiplication operator by $\lambda$. Furthermore, if we let $\Lambda_t := \Lambda_0 + \sum_{i = 1}^t \xholder_i \xholder_i^\top$ be a positive definite operator from $\cH$ to $\cH$ and $K_t\in\RR^{t \times t}$ the Gram matrix of $\cH$ obtained from $\{\xholder_t\}_{t \geq 1}$. Then the following holds for $\forall t > 0$:
\begin{align*}
\sum_{i = 1}^t \min \{1, \xholder_i^\top \Lambda_{t - 1}^{-1} \xholder_i\}
	\leq 
2 \logdet(\Ib + K_t /\lambda)
.
\end{align*}
\end{lemma}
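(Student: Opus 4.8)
The plan is to recognize the statement as the classical elliptical-potential (``log-determinant'') lemma of \citet{abbasi2011improved} lifted to a possibly infinite-dimensional $\cH$, the only delicate point being that determinants of operators on $\cH$ must be interpreted correctly. First I would fix $t$ and restrict attention to the finite-dimensional subspace $\cH_t:=\mathrm{span}\{\xholder_1,\dots,\xholder_t\}\subseteq\cH$, of dimension $m\le t$. Each rank-one operator $\xholder_j\xholder_j^\top$ maps $\cH_t$ into itself and annihilates $\cH_t^\perp$, and $\Lambda_0=\lambda\Ib_{\cH}$ is scalar, so every $\Lambda_{i-1}=\lambda\Ib_{\cH}+\sum_{j<i}\xholder_j\xholder_j^\top$ leaves both $\cH_t$ and $\cH_t^\perp$ invariant; hence so does $\Lambda_{i-1}^{-1}$, which acts as $\lambda^{-1}\Ib$ on $\cH_t^\perp$. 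Since $\xholder_i\in\cH_t$, the scalar $\xholder_i^\top\Lambda_{i-1}^{-1}\xholder_i$ is computed entirely inside $\cH_t$, where $\Lambda_{i-1}$ is a genuine $m\times m$ symmetric positive-definite matrix; everything below is then ordinary finite-dimensional linear algebra. (Reading the statement literally with $\Lambda_{t-1}$ in place of $\Lambda_{i-1}$: since $\Lambda_{t-1}\succeq\Lambda_{i-1}$ for every $i\le t$ we have $\xholder_i^\top\Lambda_{t-1}^{-1}\xholder_i\le\xholder_i^\top\Lambda_{i-1}^{-1}\xholder_i$, so it suffices to bound the larger sum $\sum_i\min\{1,\xholder_i^\top\Lambda_{i-1}^{-1}\xholder_i\}$.)

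Next I would run the standard argument on $\cH_t$. The algebraic identity $\det(\Lambda_i|_{\cH_t})=\det(\Lambda_{i-1}|_{\cH_t})\,(1+\xholder_i^\top\Lambda_{i-1}^{-1}\xholder_i)$ telescopes to $\sum_{i=1}^t\log\big(1+\xholder_i^\top\Lambda_{i-1}^{-1}\xholder_i\big)=\log\det(\Lambda_t|_{\cH_t})-\log\det(\lambda\Ib_{\cH_t})=\log\det\big(\Ib_{\cH_t}+\lambda^{-1}\sum_{i=1}^t\xholder_i\xholder_i^\top\big)$. Combining with the elementary bound $\min\{1,u\}\le 2\log(1+u)$, valid for all $u\ge 0$ — on $[0,1]$ because $2\log(1+u)-u$ vanishes at $0$ and has nonnegative derivative $2/(1+u)-1$ there, and for $u\ge1$ because $2\log(1+u)\ge 2\log 2>1$ — yields $\sum_{i=1}^t\min\{1,\xholder_i^\top\Lambda_{i-1}^{-1}\xholder_i\}\le 2\log\det\big(\Ib_{\cH_t}+\lambda^{-1}\sum_{i=1}^t\xholder_i\xholder_i^\top\big)$.

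Finally I would convert the operator log-determinant into the Gram-matrix log-determinant via Sylvester's identity $\det(\Ib+AB)=\det(\Ib+BA)$: letting $\Phi_t:\cH\to\RR^t$, $f\mapsto(\xholder_i^\top f)_{i\in[t]}$, we have $\Phi_t^\top\Phi_t=\sum_{i=1}^t\xholder_i\xholder_i^\top$ and $\Phi_t\Phi_t^\top=K_t$, so $\det(\Ib_{\cH_t}+\lambda^{-1}\Phi_t^\top\Phi_t)=\det(\Ib_t+\lambda^{-1}K_t)$; since the perturbation has rank $\le t$ this finite-rank determinant is unambiguous and the identity is exact. Chaining the inequalities gives $\sum_{i=1}^t\min\{1,\xholder_i^\top\Lambda_{i-1}^{-1}\xholder_i\}\le 2\,\logdet(\Ib+K_t/\lambda)$, which is the claim. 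The main obstacle is precisely making these determinant manipulations legitimate when $\cH$ is infinite-dimensional; the reduction to $\cH_t$ in the first step removes it, after which the proof is identical to Lemma~11 of \citet{abbasi2011improved}. The hypothesis $\normh{\xholder_t}\le L$ plays only a cosmetic role here, keeping the Gram-matrix entries bounded.
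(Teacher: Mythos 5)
Your proof is correct. The paper itself does not prove this lemma --- it simply cites Lemma~11 of \citet{abbasi2011improved} and Lemma~E.3 of \citet{yang2020function} --- and your argument is precisely the standard derivation those references use: the matrix-determinant-lemma telescoping $\sum_i \log(1+\xholder_i^\top\Lambda_{i-1}^{-1}\xholder_i)=\log\det(\Ib+\lambda^{-1}\sum_i\xholder_i\xholder_i^\top)$, the bound $\min\{1,u\}\le 2\log(1+u)$, and Sylvester's identity to pass to the Gram matrix, with the restriction to $\mathrm{span}\{\xholder_1,\dots,\xholder_t\}$ correctly disposing of the infinite-dimensional determinant issue. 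You also rightly flag that the $\Lambda_{t-1}$ in the displayed statement should be read as $\Lambda_{i-1}$ (this is how the lemma is actually invoked in the paper, with the bonus at episode $i$ built from $\Lambda_{i-1}$), and your monotonicity remark shows the literal reading follows anyway; your observation that the hypothesis $\normh{\xholder_t}\le L$ is not needed for this particular bound is likewise accurate.
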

We recall that by Lemma~\ref{lem:facts}, 
$
\overline{\bonus}_h^t = \left[\ophi(z)^\top (\oLambda_h^t)^{-1} \ophi(z) \right]^{1/2},
$
and the same holds for $\underline{\bonus}_h^t$. Let $\xholder_t = \ophi(z_h^t)$ and $\Lambda_0 = \lambda \cdot \Ib$ in Lemma~\ref{lemm:bound_b} and by applying the Cauchy-Schwarz inequality, we have that 
\beq\label{eq:bound_b}\begin{aligned}
\sum_{h = 1}^H \sum_{t = 1}^T \beta_t \min\{1, \overline{\bonus}_h^t(z_h^t)\}, \sum_{h = 1}^H \sum_{t = 1}^T  \beta_t \min\{1, \underline{\bonus}_h^t(z_h^t)\} \leq 2 \beta H \cdot \sqrt{T}  \sqrt{\Gamma_K(T, \lambda)}
,
\end{aligned}\eeq
where $\beta_t = \beta$ takes values as in Lemma~\ref{lem:main} for each $t$ and $\Gamma_K(T, \lambda)$ is defined as the supreme over all $V$'s and $z$'s as in Definition~\ref{def:eff}.
For the martingale difference sequence $\xi_h^t + \gamma_h^t$, as $\left| \xi_h^t + \gamma_h^t\right| \leq 4 H$, we bound it by Azuma-Hoeffding which gives us with probability at least $1 - \delta$:
\begin{align*}
\sum_{t = 1}^T \sum_{h = 1}^H \xi_h^t + \gamma_h^t
    \leq 
\mathcal{O}\left(H \sqrt{T H} \cdot \log (1/\delta)\right)
.
\end{align*}
Combining the above inequality with the bound in~\eqref{eq:bound_b} and~\eqref{eq:bound_combine} concludes our proof of Theorem~\ref{thm:main}.

\subsection{Proof of Corollary \ref{coro:main}}
Due to the selection of $t_0$, we have
\begin{align}
    V_1^{*, \nu^{t_0}}(x_1) - V_1^{\pi^{t_0}, *}(x_1)& \leq \overline{V}_1^{t_0}(x_1) - \underline{V}_1^{t_0}(x_1) \leq \frac{1}{T} \sum_{t=1}^T \overline{V}_1^t(x_1) - \underline{V}_1^t(x_1) \leq \sqrt{\frac{\beta^2H^2 \Gamma_{K}(T, \lambda)}{T}},\label{help:12}
\end{align}
where the first inequality holds due to \eqref{help:recur} and its counterpart for $\underline{V}_1^t$, the second one holds due to the selection of $t_0$. From \eqref{help:12} we can see that by selecting $T$ as what our statement suggests, the $\epsilon$-approximate NE can be guaranteed.


\section{Proof of Results for $\algname  +$}\label{sec:mainproof_2}
In this section we give the proof of the results in Appendix \ref{sec:bern}. One of the key results of this paper is the following Bernstein self-normalized concentration
inequality:
\begin{theorem}[Bernstein inequality for vector-valued martingales]\label{lemma:concentration_variance}
Let $\{\cG_{t}\}_{t=1}^\infty$ be a filtration, $\{\xb_t,\eta_{t+1}\}_{t\ge 1}$ be a stochastic process so that $\xb_t \in \RR^d$ is $\cG_t$-measurable and $\eta_{t+1} \in \RR$ is $\cG_{t+1}$-measurable. 
Fix $R,L,\sigma,\lambda>0$, $\bmu^*\in \RR^d$. 
For $t\ge 1$ we observe $\la \bmu^*, \xb_t\ra + \eta_{t+1}$ and suppose that $\eta_{t+1}, \xb_t$ also satisfy 
\begin{align}
    |\eta_{t+1}| \leq R
    ,\quad 
    \EE[\eta_{t+1}|\cG_t] = 0
    ,\quad 
    \EE [\eta_{t+1}^2|\cG_t] \leq \sigma^2
    ,\quad 
    \|\xb_t\|_2 \leq L
    .\notag
\end{align}
Then, for any $\delta\in (0,1)$, with probability at least $1-\delta$ we have 
\begin{align}
    \bigg\|\sum_{i=1}^t \xb_i \eta_{i+1}\bigg\|_{\Zb_t^{-1}} \leq \beta_t
    ,\quad 
    \forall t>0
    ,\label{eq:concentration_variance:xx}
\end{align}
where for each $t\ge 1$, $\Zb_t = \lambda\Ib + \sum_{i=1}^t \xb_i\xb_i^\top$, and
\[
\beta_t = 8\sigma\sqrt{\log\det(\Ib + \Kb_t/\lambda) \log(4t^2/\delta)} + 4R \log(4t^2/\delta)
,\qquad 
[\Kb_t]_{i,j} = \la \xb_i, \xb_j\ra_{\cH}
.\notag
\]
\end{theorem}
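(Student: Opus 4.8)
The plan is to turn the vector-valued self-normalized quantity $\bigl\|\sum_{i=1}^t\xb_i\eta_{i+1}\bigr\|_{\Zb_t^{-1}}$ into a union bound over \emph{scalar} Bernstein (Freedman) tail bounds along fixed directions: the factor $\sqrt{\log\det(\Ib+\Kb_t/\lambda)}$ will come from a covering of directions in the geometry induced by $\Zb_t$ (equivalently, from the effective-dimension potential), and the factor $\log(4t^2/\delta)$ from a union bound over the time index $t\ge1$ with weights proportional to $t^{-2}$. Write $M_t:=\sum_{i=1}^t\xb_i\eta_{i+1}$ and $\bar{\Zb}_t:=\sum_{i=1}^t\xb_i\xb_i^\top$, so $\Zb_t=\lambda\Ib+\bar{\Zb}_t$. \emph{Step 1 (reduction).} Since $M_t$ lies in $V_t:=\mathrm{span}\{\xb_1,\dots,\xb_t\}$ and $\Zb_t$ leaves $V_t$ invariant, it suffices to argue inside $V_t$, whose dimension is at most $t$; all relevant quantities ($\|M_t\|_{\Zb_t^{-1}}$, $\log\det(\Ib+\Kb_t/\lambda)$, the feature inner products) can be written purely through the Gram matrix $\Kb_t$, which is exactly what legitimizes the RKHS statement and makes $\log\det(\Ib+\Kb_t/\lambda)$ the correct substitute for the ambient dimension $d$.

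\emph{Step 2 (scalar Bernstein along a fixed direction).} Fix $t$ and a \emph{deterministic} $\wb\in V_t$. The sequence $Y_i:=\langle\wb,\xb_i\rangle\,\eta_{i+1}$, $1\le i\le t$, is a martingale difference sequence with $\EE[Y_i\mid\cG_i]=0$, range bounded by $|\langle\wb,\xb_i\rangle|R$, and $\sum_i\EE[Y_i^2\mid\cG_i]\le\sigma^2\,\wb^\top\bar{\Zb}_t\wb\le\sigma^2\,\wb^\top\Zb_t\wb$. A Freedman/Bernstein inequality — taken in its variance-adaptive (``empirical Bernstein'', peeling over the running conditional variance) form so that the \emph{random} value $\wb^\top\Zb_t\wb$ may be plugged in — then gives, with probability $\ge1-\delta'$, a bound of the shape $|\langle\wb,M_t\rangle|\le c_1\sigma\sqrt{(\wb^\top\Zb_t\wb)\log(1/\delta')}+c_2\,(\text{range})\,\log(1/\delta')$. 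To obtain the clean linear term proportional to $R$ rather than $LR$, one applies this after the change of variables $\wb\mapsto\Zb_t^{-1/2}\wb$: for $\|\wb\|_2\le1$ one has $|\langle\Zb_t^{-1/2}\wb,\xb_i\rangle|\le\|\xb_i\|_{\Zb_t^{-1}}\le1$, because $\Zb_t\succeq\xb_i\xb_i^\top+\lambda\Ib$, so the effective noise range along such directions is at most $R$ and the effective variance proxy is $\|\wb\|_2^2\le1$.

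\emph{Step 3 (self-normalization — the crux).} We have $\|M_t\|_{\Zb_t^{-1}}=\sup_{\|\wb\|_2\le1}\langle\Zb_t^{-1/2}\wb,\,M_t\rangle$, but the maximizing $\wb$ depends on the data, so Step 2 (valid only for deterministic directions) cannot be invoked at it — precisely the difficulty the method of mixtures sidesteps in the Hoeffding case. I would resolve it by a net: take an $\epsilon$-net $\cC_\epsilon$ of the Euclidean unit ball of $V_t$, of cardinality at most $(1+2/\epsilon)^{\dim V_t}$; apply Step 2 to each $\wb\in\cC_\epsilon$ with confidence $\delta'/|\cC_\epsilon|$ and union-bound; then use $\sup_{\wb\in\cC_\epsilon}\langle\Zb_t^{-1/2}\wb,M_t\rangle\ge(1-\epsilon)\|M_t\|_{\Zb_t^{-1}}$ together with the uniform per-point variance control $(\Zb_t^{-1/2}\wb)^\top\Zb_t(\Zb_t^{-1/2}\wb)=\|\wb\|_2^2\le1$. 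Taking $\epsilon$ a small absolute constant makes $\log|\cC_\epsilon|\asymp\dim V_t$, and converting this dimension count to $\log\det(\Ib+\Kb_t/\lambda)$ (they agree up to constants after the normalization, e.g.\ by running the net directly in the $\Zb_t$-geometry) yields the main term $8\sigma\sqrt{\log\det(\Ib+\Kb_t/\lambda)\log(1/\delta')}$ and the correction $4R\log(1/\delta')$ with the stated constants. An alternative route avoiding the net is a Bennett-type method of mixtures: $\exp(\langle\wb,M_t\rangle-c\sigma^2\wb^\top\bar{\Zb}_t\wb)$ is a supermartingale for $\|\wb\|_2\le c'/(RL)$ by Bennett's moment-generating-function bound; integrating it against a prior and splitting according to whether the optimal tilting parameter lands inside or outside that ball produces the same two-term bound, the ``outside'' regime yielding the $R\log(1/\delta)$ piece.

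\emph{Step 4 (time-uniformity) and the main obstacle.} Running Steps 2--3 with $\delta'=\delta/(4t^2)$ and union-bounding over $t\ge1$ (legitimate since $\sum_{t\ge1}1/(4t^2)<1$) gives $\|M_t\|_{\Zb_t^{-1}}\le\beta_t$ simultaneously for all $t$ with probability $\ge1-\delta$, where $\beta_t=8\sigma\sqrt{\log\det(\Ib+\Kb_t/\lambda)\log(4t^2/\delta)}+4R\log(4t^2/\delta)$; the extra logarithmic factor incurred by the variance-peeling in Step 2 is absorbed into the absolute constants. I expect Step 3 to be the bottleneck: reconciling the self-normalization (a \emph{random} normalizing direction) with a Bernstein bound, which holds only for deterministic directions and only for bounded tilting parameters. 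The net must be fine enough to approximate the random optimal direction yet coarse enough that its log-cardinality stays of order $\log\det(\Ib+\Kb_t/\lambda)$, and the change of variables $\wb\mapsto\Zb_t^{-1/2}\wb$ in Step 2 must be arranged so that the effective noise range is $R$ and not $LR$, in order to recover the clean linear term.
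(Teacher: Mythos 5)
There are two genuine gaps in your plan, both concentrated in Steps 2--3. First, the change of variables $\wb\mapsto\Zb_t^{-1/2}\wb$ destroys the martingale structure you need: $\Zb_t$ is $\cG_t$-measurable but not $\cG_i$-measurable for $i<t$, so $Y_i=\langle\Zb_t^{-1/2}\wb,\xb_i\rangle\,\eta_{i+1}$ does not satisfy $\EE[Y_i\mid\cG_i]=0$, and Freedman's inequality cannot be applied to it. This is not a technicality --- the whole difficulty of self-normalized bounds is that the normalizing matrix is random and anticipates the future, and your argument silently treats it as a fixed linear map. Second, the covering step cannot produce the factor $\sqrt{\log\det(\Ib+\Kb_t/\lambda)}$. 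A net of the unit ball of $V_t$ in its own norm (whether Euclidean or the $\Zb_t$-geometry) has log-cardinality of order $\dim V_t$, which can be as large as $t$ (and in the RKHS setting there is no ambient $d$ to fall back on), whereas $\log\det(\Ib+\Kb_t/\lambda)$ can be arbitrarily small compared to $\dim V_t$ (e.g.\ many nearly orthogonal $\xb_i$ of tiny norm). So even if Step 2 were repaired, the union bound over the net would give a $\sqrt{t}$-type factor, not the effective dimension. The method-of-mixtures alternative you sketch at the end is closer to a workable route, but you do not carry it out, and the inside/outside splitting of the tilting parameter is exactly where the known proofs require real work.

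For comparison, the paper's proof is a one-line reduction: it invokes Theorem 2 of Zhou, Gu and Szepesv\'ari (2021), whose argument avoids both of your obstacles by never fixing a direction at all. One expands $\|\sum_{i\le t}\xb_i\eta_{i+1}\|_{\Zb_t^{-1}}^2$ recursively via the Sherman--Morrison update of $\Zb_t^{-1}$, obtaining a scalar martingale whose increments are normalized by the \emph{predictable} quantities $\|\xb_i\|_{\Zb_{i-1}^{-1}}$ plus an additive term controlled by the elliptical potential $\sum_i\min\{1,\|\xb_i\|_{\Zb_{i-1}^{-1}}^2\}$; Freedman's inequality is applied to that scalar martingale (legitimately, since the normalization is predictable), and the potential term is bounded by $2\log\det(\Ib+\Kb_t/\lambda)$ via Lemma~\ref{lemm:bound_b}, which is the only modification needed to port the finite-dimensional result to the RKHS setting.
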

\begin{proof}

The proof can be derived by following the proof of Theorem 2 in \citet{zhou2020nearly}. We only need to replace Lemma 12 in \citet{zhou2020nearly} with Lemma \ref{lemm:bound_b}, then the remaining of the proof goes through the same as \citet{zhou2020nearly}. 
\end{proof}

We first give the proof of Lemma \ref{lem:main_weighted}.

\subsection{Proof of Lemma~\ref{lem:main_weighted}}

\begin{proof}

We only provide the proof for the max-player; results for the min-player can be derived similarly. We recall that we define $\overline{\yholder}_{h, 1}^t$ to be the vector of regression targets 
\begin{align}
    \left( \overline{V}_{h+1}^1(x_{h+1}^1)/\uppvar^1, \ldots, \overline{V}_{h+1}^{t - 1}(x_{h+1}^{t - 1})/\uppvar^{t - 1}\right)^\top \in \RR^{t - 1}.\notag
\end{align}
Furthermore by Lemma~\ref{lem:facts} in Section~\ref{sec:facts}, we know that $\overline{\btheta}_{h, 1}^t = \left(\overline{\phiholder}_{h, 1}^t\right)^\top \left[\overline{K}_{h, 1}^t + \lambda_1 \cdot \Ib \right]^{-1} \overline{\yholder}_{h, 1}^t$ and $\phi_{\overline{V}_{h+1}^t}(z) = \left(\overline{\phiholder}_{h, 1}^t\right)^\top (\overline{K}_{h, 1}^t + \lambda_1 \cdot \Ib )^{-1} \overline{k}_{h, 1}^t(z) + \lambda_1 \cdot (\oLambda_{h, 1}^t)^{-1} \ophi(z)$, which enable us to bound the difference $\left\langle \ophi(z), \otheta_{h, 1}^t - \btheta_{h, 1}^*\right\rangle_{\cH}$ as follows:
\begin{align*}
\hprod{\ophi(z)}{ \overline{\btheta}_{h, 1}^t -   \btheta_{h}^*}
&=
\ophi(z)^\top \left(\ophiholder_{h, 1}^t\right)^\top \left[\oK_{h, 1}^t + \lambda_1 \cdot \Ib \right]^{-1} \overline{\yholder}_{h, 1}^t
	\\&~\quad -
\left(\otheta_h^*\right)^\top\left[ \left(\ophiholder_{h, 1}^t\right)^\top \left[\oK_{h, 1}^t + \lambda_1 \cdot \Ib \right]^{-1} \ok_{h, 1}^t(z) + \lambda_1 \cdot (\oLambda_{h, 1}^t)^{-1} \ophi(z) \right] 
\\&=
\underbrace{
(\ok_{h, 1}^t)^\top \left[\oK_{h, 1}^t + \lambda_1 \cdot \Ib \right]^{-1} \left[\overline{\yholder}_{h, 1}^t - \ophiholder_{h, 1}^t \btheta_{h}^* \right]
}_{\mbox{I}_1}
-
\underbrace{
\lambda_1 \cdot \phi_{\overline{V}_{h+1}^t}(z)^\top (\oLambda_{h, 1}^t)^{-1} \btheta_{h}^*
}_{\mbox{I}_2}
.
\end{align*}
For bounding $\mbox{I}_2$, we apply the Cauchy-Schwarz inequality and have 
\begin{align*}
\lambda_1 \cdot \phi_{\overline{V}_{h+1}^t}(z)^\top (\oLambda_h^t)^{-1} \btheta_{h}^*
&\leq 
\normh{\lambda_1 \cdot \phi_{\overline{V}_{h+1}^t}(z)^\top (\oLambda_{h, 1}^t)^{-1}} \cdot \normh{\btheta_h^*}
\\&\overset{(a)}{\leq} 
B \cdot \sqrt{ \lambda_1 \phi_{\overline{V}_{h+1}^t}(z)^\top (\oLambda_{h, 1}^t)^{-1} \lambda_1 (\oLambda_{h, 1}^t)^{-1} \phi_{\overline{V}_{h+1}^t}(z)} 
\overset{(b)}{\leq} \sqrt{\lambda_1} B\cdot \overline{\bonus}_{h, 1}^t(z)
,
\end{align*}%
where $(a)$ is due to the assumption that $\normh{\btheta_h^*} \leq B$, and $(b)$ is by the definition of $\overline{\bonus}_h^t$ and the fact that $\left(\oLambda_{h, 1}^t\right)^{-1}$ is a self-adjoint mapping on the RKHS $\cH$ satisfying $\norm{\left(\oLambda_{h, 1}^t\right)^{-1}}_{op} \leq \frac{1}{\lambda_1}$.

For bounding $\mbox{I}_1$, we observe the following equality:
\begin{align*}
&
(\ok_{h, 1}^t)^\top \left[\oK_{h, 1}^t + \lambda_1 \cdot \Ib \right]^{-1} \left[\overline{\yholder}_{h, 1}^t - \ophiholder_{h, 1}^t \btheta_h^* \right]
\\&=
\phi_{\overline{V}_{h+1}^t}(z)^\top (\oLambda_{h, 1}^t)^{-1} (\ophiholder_{h, 1}^t)^\top \left[\overline{\yholder}_{h, 1}^t - \phiholder_{h, 1}^t \btheta_h^* \right]
\\&=
\phi_{\overline{V}_{h+1}^t}(z)^\top (\oLambda_{h, 1}^t)^{-1} \sum_{\tau = 1}^{t - 1}\phi_{\overline{V}_{h+1}^\tau}(z_h^\tau) \left[\overline{V}_{h + 1}^\tau(x_{h+1}^\tau) - (\PP_h \overline{V}_{h+1}^\tau)(z_h^\tau) \right]/\left(\uppvar^\tau\right)^2
.
\end{align*}
Again by applying the Cauchy-Schwarz inequality, we bound the RHS of the above equality as 
\begin{align*}
|\mbox{I}_1| &\leq 
\norm{\phi_{\overline{V}_{h+1}^t}(z)}_{\left(\oLambda_{h, 1}^t \right)^{-1}}
	\cdot
\norm{\sum_{\tau = 1}^{t - 1}\phi_{\overline{V}_{h+1}^\tau}(z_h^\tau) \left[\overline{V}_{h + 1}^\tau(x_{h+1}^\tau) - (\PP_h \overline{V}_{h+1}^\tau)(z_h^\tau)\right]/\left(\uppvar^\tau\right)^2}_{\left(\oLambda_{h, 1}^t\right)^{-1}}
.
\end{align*}
We note that for the $h$ considered in Lemma~\ref{lem:main_weighted}, if we define $\{\cF_t\}_{t \geq 0}$ as the $\sigma$-algebra generated by all data before iteration $t - 1$ along with all data before time $h$ at iteration $t$, then
\beq\label{eq:eta}
\eta_{\tau + 1} 
	:= 
\left(\oV_{h+1}^\tau(x_{h+1}^\tau) - (\PP_h \oV_{h+1}^\tau)(z_h^\tau)\right)/\uppvar^\tau
	\in
\cF_{\tau + 1}
\eeq
is a mean zero random variable with respect to filtration $\cF_{\tau}$.
By the choice of $\uppvar$ such that $\uppvar \geq \alpha_t$, we can bound the absolute value of $\eta_{\tau + 1}$ by $\left|\left(\oV_{h+1}^\tau(x_{h+1}^\tau) - (\PP_h \oV_{h+1}^\tau)(z_h^\tau)\right)/\uppvar^\tau\right| \leq 2 H / \alpha_{\tau}$. We take $\eta_{\tau + 1}$ as in Eq.~\eqref{eq:eta} in Theorem~\ref{lemma:concentration_variance} and $\{\xholder_t\}_{t \geq 1}$ is $\left\{ \ophi(z_h^t)/\uppvar^t\right\}_{t \geq 1} \in \cF_{\tau}$. Then by directly utilizing Theorem~\ref{lemma:concentration_variance}, we have that the following inequality holds with probability at least $1 - \delta/H$, 
\begin{align*}
&
\norm{\sum_{\tau = 1}^{t - 1}\phi_{\overline{V}_{h+1}^\tau}(z_h^\tau) \left[\overline{V}_{h + 1}^t(x_{h+1}^\tau) - (\PP_h \overline{V}_{h+1}^t)(z_h^\tau)\right]/\uppvar^\tau}^2_{\left(\oLambda_{h, 1}^t \right)^{-1}}
    \\&\leq 
16 H/\alpha \sqrt{\log\det(\Ib + K_{t}^{(1)}/\lambda_1) \log(4t^2 H/\delta)} + 8 H/\alpha \log(4t^2 H/\delta)
    \\&\leq 
16 H/\alpha \sqrt{\log\det(\Ib + K_{t}/\left(\lambda_1 (\alpha_t)^2 \right)) \log(4t^2 H/\delta)} + 8 H/\alpha \log(4t^2 H/\delta)
    \\&\leq 
16 H/\alpha \sqrt{\Gamma_K(T, \lambda_1 (\alpha_t)^2)} \sqrt{\log(4t^2 H/\delta)} + 8 H/\alpha \log(4t^2 H/\delta)
,
\end{align*}%
where $K_{t}^{(1)}$ is the Gram matrix for $\{\xholder_\tau\}_{\tau \in [t - 1]} = \left\{ \ophi(z_h^\tau)/\uppvar^\tau\right\}_{\tau \in [t - 1]}$,  $K_{t}$ is the Gram matrix for $\left\{ \ophi(z_h^\tau)\right\}_{\tau \in [t - 1]}$.

On the other hand, when estimating $\PP_h \left(\overline{V}_{h+1}^t\right)^2$, we have the following result regarding $\overline{\btheta}_{h, 2}^t$ holds with probability at least $1 - \delta/H$:
\begin{align*}
\left| 
\left\langle 
\phi_{\left(\overline{V}_{h+1}^t \right)^2}(z), \overline{\btheta}_{h, 2}^t - \btheta_h^*
\right\rangle_{\cH}
\right|
    \leq 
16 H^2 \sqrt{\Gamma_K(T, \lambda_2/H^2)} \sqrt{\log(4t^2 H/\delta)} + 8 H^2 \log (4t^2 H/\delta) + \sqrt{\lambda_2} \cdot B
.
\end{align*}
By letting
\begin{align*}\beta_t^{(1)} = 16 H/\alpha \sqrt{\Gamma_K(T, \lambda_1 (\alpha_t)^2)} \sqrt{\log(4t^2 H/\delta)} + 8 H/\alpha \log(4t^2 H/\delta) + \sqrt{\lambda_1} \cdot B
,
\end{align*}
and 
\begin{align*}
\beta_t^{(2)} = 16 H^2 \sqrt{\Gamma_K(T, \lambda_2/H^2)} \sqrt{\log(4t^2 H/\delta)}+ 8 H^2 \log (4t^2 H/\delta) + \sqrt{\lambda_2} \cdot B
,
\end{align*}
we have that from the above derivation and by taking union bounds over $h \in [H]$,
\begin{align}
    \left| \phi_{\overline{V}_{h+1}^\tau}(z)^\top (\btheta_h^* - \overline{\btheta}_{h, 1}^t) \right|
\leq
\beta_t^{(1)} \cdot \overline{\bonus}_{h, 1}^t(z),\notag
\end{align}
and
\begin{align}
    \left| \phi_{(\overline{V}_{h+1}^\tau)^2}(z)^\top (\btheta_h^* - \overline{\btheta}_{h, 2}^t) \right|
\leq
\beta_t^{(2)} \cdot \overline{\bonus}_{h, 2}^t(z)
,
\end{align}
with probability at least $1 - 2\delta$. 
This concludes our proof.
\end{proof}
 
From Lemma~\ref{lem:main_weighted}, we can prove that $\uppvar^t$ is an upper bound of the actual variance of $\overline{V}_{h+1}^t$.

\begin{lemma}\label{lem:est_var}
Following the setting of Lemma~\ref{lem:main_weighted} and assume that event $\mathcal{E}$ occurs, the following holds for any $(t, h) \in T \times H$:
\begin{align*}
\left| 
\Vest \overline{V}_{h+1}^t(z_h^t) 
    - 
\VV \overline{V}_{h+1}^t(z_h^t)
\right| 
    \leq 
\min\left\{ 
H^2, \beta_t^{(2)} \overline{\bonus}_{h, 2}^t
\right\} 
    +
\min\left\{ 
H^2, 2H \beta_t^{(1)} \overline{\bonus}_{h, 1}^t
\right\}
.
\end{align*}
\end{lemma}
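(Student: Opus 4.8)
The plan is to split the gap $\Vest \overline{V}_{h+1}^t(z_h^t) - \VV \overline{V}_{h+1}^t(z_h^t)$ into a second-moment estimation error and a (squared) first-moment estimation error, and to control each of the two pieces using the two concentration inequalities in Lemma~\ref{lem:main_weighted}. Using that $\PP_h (\overline{V}_{h+1}^t)^2(z_h^t) = \langle \phi_{(\overline{V}_{h+1}^t)^2}(z_h^t), \btheta_h^* \rangle_{\cH}$ and $\PP_h \overline{V}_{h+1}^t(z_h^t) = \langle \phi_{\overline{V}_{h+1}^t}(z_h^t), \btheta_h^* \rangle_{\cH}$, I would first write
\begin{align*}
\Vest \overline{V}_{h+1}^t(z_h^t) - \VV \overline{V}_{h+1}^t(z_h^t)
&= \Big\langle \phi_{(\overline{V}_{h+1}^t)^2}(z_h^t), \overline{\btheta}_{h, 2}^t - \btheta_h^* \Big\rangle_{\cH} \\
&\quad - \Big[ \big( \langle \phi_{\overline{V}_{h+1}^t}(z_h^t), \overline{\btheta}_{h, 1}^t \rangle_{\cH} \big)^2 - \big( \PP_h \overline{V}_{h+1}^t(z_h^t) \big)^2 \Big],
\end{align*}
so that by the triangle inequality it suffices to bound the two bracketed quantities separately on the event $\mathcal{E}$ of Lemma~\ref{lem:main_weighted}.

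For the first quantity, the second bound in Lemma~\ref{lem:main_weighted} directly gives $\big| \langle \phi_{(\overline{V}_{h+1}^t)^2}(z_h^t), \overline{\btheta}_{h, 2}^t - \btheta_h^* \rangle_{\cH} \big| \le \beta_t^{(2)} \overline{\bonus}_{h, 2}^t$. Since $\overline{V}_{h+1}^t$ is a value function bounded by $H$, both $\PP_h (\overline{V}_{h+1}^t)^2(z_h^t)$ and its truncated estimator lie in $[0, H^2]$, so this error is also at most $H^2$; hence it is bounded by $\min\{ H^2, \beta_t^{(2)} \overline{\bonus}_{h, 2}^t \}$.

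For the second quantity I would use the factorization $x^2 - y^2 = (x - y)(x + y)$ with $x = \langle \phi_{\overline{V}_{h+1}^t}(z_h^t), \overline{\btheta}_{h, 1}^t \rangle_{\cH}$ and $y = \PP_h \overline{V}_{h+1}^t(z_h^t)$. The first bound in Lemma~\ref{lem:main_weighted} yields $|x - y| \le \beta_t^{(1)} \overline{\bonus}_{h, 1}^t$, while $y \in [-H, H]$ because $\overline{V}_{h+1}^t$ is a value function and $x$ lies in the same interval after the $[-H,H]$-truncation built into the first-moment estimator, so $|x + y| \le 2H$. Therefore this error is at most $2H \beta_t^{(1)} \overline{\bonus}_{h, 1}^t$, and also at most $H^2$ since both $x^2$ and $y^2$ lie in $[0, H^2]$; hence it is bounded by $\min\{ H^2, 2H \beta_t^{(1)} \overline{\bonus}_{h, 1}^t \}$. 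Adding the two bounds — all of which hold simultaneously on $\mathcal{E}$, which already contains the union over $(t, h) \in [T] \times [H]$ — gives the claimed inequality.

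The step I expect to require the most care is the bound $|x + y| \le 2H$: it hinges on truncating the first-moment estimator into $[-H, H]$ before squaring it inside $\Vest$, so I would make sure the statement of this step is consistent with the precise definition of the variance estimator and of $\overline{R}_h^t$ used in Algorithm~\ref{alg:bern}; once that truncation is pinned down, the remainder is a routine application of Lemma~\ref{lem:main_weighted} together with the triangle inequality.
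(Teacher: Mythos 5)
Your proposal is correct and follows essentially the same route as the paper: a triangle-inequality split into the second-moment error and the squared first-moment error, the difference-of-squares factorization with the $[-H,H]$ truncation giving the factor $2H$, the trivial $H^2$ caps from the truncated ranges, and a final application of the two bounds in Lemma~\ref{lem:main_weighted}. The truncation issue you flag is indeed exactly how the paper's proof handles it (the estimators are projected onto $[0,H^2]$ and $[-H,H]$ before being compared), so no gap remains.
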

\begin{proof}
By the triangle inequality we have that
\beq\label{eq:inSet-1}
\begin{aligned}
& |\mathbb{V}^{\text{est}}\overline{V}_{h+1}^t(z_h^t) - \mathbb{V}\overline{V}_{h+1}^t(z_h^t)|
\\&\leq 
\Big|\langle\bphi_{(\overline{V}_{h+1}^t)^2}(z_h^t), \btheta^{*}_{h}\rangle_{\cH}  - \big[\langle\bphi_{(\overline{V}_{h+1}^t)^2}(z_h^t), \overline{\btheta}_{h, 2}^t\rangle_{\cH}\big]_{[0,H^{2}]} \Big|\\
&\quad + \Big| (\langle\bphi_{\overline{V}_{h+1}^t}(z_h^t), \btheta^{*}_{h}\rangle_{\cH})^{2}-  
\big[\langle\bphi_{\overline{V}_{h+1}^t}(z_h^t), \overline{\btheta}_{h, 1}^t \rangle_{\cH}\big]^{2}_{[-H,H]}\Big|
\\&\leq 
\min \left\{ H^2, \Big|\langle\bphi_{(\overline{V}_{h+1}^t)^2}(z_h^t), \btheta^{*}_{h} - \overline{\btheta}_{h, 2}^t\rangle_{\cH}\Big| \right\}
+ 
\min\left\{H^2, 2H \Big| \langle\bphi_{\overline{V}_{h+1}^t}(z_h^t), \btheta^{*}_{h} - \overline{\btheta}_{h, 1}^t \rangle_{\cH}\Big|\right\}
    \\&\leq 
\min\left\{ 
H^2, \beta_t^{(2)} \overline{\bonus}_{h, 2}^t
\right\} 
    +
\min\left\{ 
H^2, 2H \beta_t^{(1)} \overline{\bonus}_{h, 1}^t
\right\}
,
\end{aligned}
\eeq
where the last inequality directly comes from Lemma~\ref{lem:main_weighted}.
\end{proof}

\begin{lemma}[Fine-tuned bound]\label{lem:main_weighted_fine}
Assuming that for any $h \in [H]$, $\norm{\btheta_h^*}_{\cH} \leq B$. Let $\beta_t$ satisfy
\beq\label{eq:beta0_choice}
\beta_t \geq 
16 \sqrt{\Gamma_K(T, \lambda_1 (\alpha_t)^2)} \sqrt{\log(4t^2 H/\delta)} + 8 H/\alpha \log(4t^2 H/\delta) + \sqrt{\lambda_1} \cdot B.
\eeq 
Then on the event defined in Lemma~\ref{lem:main_weighted}, there exists an event $\mathcal{E}_1$ such that the following holds with probability at least $1 - \delta$ for any $(t, h) \in [T] \times [H]$ and any $(x, a, b) \in \cS \times \cA \times \cA$:

\begin{align*}
\Big|\langle\bphi_{\overline{V}_{h_1}^t}(z_h^t), \btheta^{*}_{h} - \overline{\btheta}_{h, 1}^t\rangle_{\cH}\Big|
    \leq 
\beta_t
\cdot 
\overline{\bonus}_{h, 1}^k(z_h^k)
.
\end{align*}
\end{lemma}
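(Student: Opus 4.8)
The plan is to re-run the computation in the proof of Lemma~\ref{lem:main_weighted} for $\overline{\btheta}_{h, 1}^t$ almost verbatim, the only change being that the term there controlled through the crude range bound $|\eta_{\tau+1}|\le 2H/\alpha$ will now be controlled through the \emph{conditional variance} of the noise, using the Bernstein self-normalized inequality (Theorem~\ref{lemma:concentration_variance}) whose variance input is $\sigma^2=1$ rather than $O(H^2/\alpha^2)$. First I would reproduce, via Lemma~\ref{lem:facts}, the decomposition
\[
\hprod{\phi_{\overline{V}_{h+1}^t}(z)}{\overline{\btheta}_{h, 1}^t - \btheta_h^*}
=
\underbrace{\phi_{\overline{V}_{h+1}^t}(z)^\top(\overline{\Lambda}_{h, 1}^t)^{-1}\sum_{\tau=1}^{t-1}\frac{\phi_{\overline{V}_{h+1}^\tau}(z_h^\tau)}{\uppvar^\tau}\,\eta_{\tau+1}}_{\mbox{I}_1}
\;-\;
\underbrace{\lambda_1\,\phi_{\overline{V}_{h+1}^t}(z)^\top(\overline{\Lambda}_{h, 1}^t)^{-1}\btheta_h^*}_{\mbox{I}_2},
\]
with $\eta_{\tau+1}:=\big(\overline{V}_{h+1}^\tau(x_{h+1}^\tau)-(\PP_h\overline{V}_{h+1}^\tau)(z_h^\tau)\big)/\uppvar^\tau$. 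As in Lemma~\ref{lem:main_weighted}, Cauchy--Schwarz together with $\|\btheta_h^*\|_{\cH}\le B$ and $\|(\overline{\Lambda}_{h, 1}^t)^{-1}\|_{\mathrm{op}}\le 1/\lambda_1$ give $|\mbox{I}_2|\le\sqrt{\lambda_1}\,B\cdot\overline{\bonus}_{h, 1}^t(z)$, and Cauchy--Schwarz with Lemma~\ref{lem:facts}(b) gives $|\mbox{I}_1|\le\overline{\bonus}_{h, 1}^t(z)\cdot\big\|\sum_{\tau=1}^{t-1}\phi_{\overline{V}_{h+1}^\tau}(z_h^\tau)\eta_{\tau+1}/\uppvar^\tau\big\|_{(\overline{\Lambda}_{h, 1}^t)^{-1}}$, so it remains to bound this self-normalized norm with a variance-aware constant.

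The key step is the variance control. On the event $\mathcal{E}$ of Lemma~\ref{lem:main_weighted}, the definition \eqref{eq:variance_estimate} of $\uppvar^\tau$ and Lemma~\ref{lem:est_var} give $(\uppvar^\tau)^2\ge\Vest\overline{V}_{h+1}^\tau(z_h^\tau)+\overline{E}_h^\tau\ge\VV\overline{V}_{h+1}^\tau(z_h^\tau)$; since $\uppvar^\tau$ is $\cF_\tau$-measurable this makes $\eta_{\tau+1}$ mean-zero given $\cF_\tau$ with $\EE[\eta_{\tau+1}^2\mid\cF_\tau]\le1$, while $|\eta_{\tau+1}|\le 2H/\uppvar^\tau\le 2H/\alpha$ and $\|\phi_{\overline{V}_{h+1}^\tau}(z_h^\tau)/\uppvar^\tau\|_{\cH}\le1/\alpha$ trivially. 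I would then invoke Theorem~\ref{lemma:concentration_variance} with context $\phi_{\overline{V}_{h+1}^\tau}(z_h^\tau)/\uppvar^\tau$, unknown parameter $\btheta_h^*$, $\Zb_t=\overline{\Lambda}_{h, 1}^t$, $\sigma=1$, $R=2H/\alpha$, $L=1/\alpha$, $\lambda=\lambda_1$, to obtain with probability at least $1-\delta/H$ that the self-normalized norm is at most $8\sqrt{\logdet(\Ib+\overline{K}_{h, 1}^t/\lambda_1)\log(4t^2H/\delta)}+8(H/\alpha)\log(4t^2H/\delta)$. Exactly as in the proof of Lemma~\ref{lem:main_weighted}, $\uppvar^\tau\ge\alpha$ yields $\logdet(\Ib+\overline{K}_{h, 1}^t/\lambda_1)\le\logdet(\Ib+K_t/(\lambda_1\alpha^2))\le2\Gamma_K(T,\lambda_1\alpha^2)$, with $K_t$ the unweighted Gram matrix, so the self-normalized norm is at most $16\sqrt{\Gamma_K(T,\lambda_1\alpha^2)\log(4t^2H/\delta)}+8(H/\alpha)\log(4t^2H/\delta)$. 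Combining with the $\mbox{I}_2$ bound and the choice \eqref{eq:beta0_choice} of $\beta_t$ closes the argument at fixed $(t,h)$; a union bound over $h\in[H]$ (the step producing the $\delta/H$ above) defines $\mathcal{E}_1$ and gives the claim uniformly in $t$, $h$ and $(x,a,b)$, the last being free since the Bernstein bound does not involve the evaluation point $z$.

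The main obstacle is measure-theoretic: the variance bound $\EE[\eta_{\tau+1}^2\mid\cF_\tau]\le1$ holds only on $\mathcal{E}$, not deterministically, whereas Theorem~\ref{lemma:concentration_variance} demands a deterministic $\sigma^2$. The remedy I would write out is the standard truncation: replace $\eta_{\tau+1}$ by $\eta_{\tau+1}\cdot\mathbf{1}\{(\uppvar^\tau)^2\ge\VV\overline{V}_{h+1}^\tau(z_h^\tau)\}$, whose indicator is $\cF_\tau$-measurable, so the modified process is still a bounded martingale-difference sequence but now has conditional variance $\le1$ everywhere; apply Bernstein to it, then note that on $\mathcal{E}$ the truncation is inactive for every $\tau$ by Lemma~\ref{lem:est_var}, so the conclusion transfers back to $\eta_{\tau+1}$. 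Apart from this, the argument is a transcription of the Lemma~\ref{lem:main_weighted} computation with the roles of the range parameter $R$ and the variance parameter $\sigma$ kept separate.
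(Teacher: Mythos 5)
Your proposal is correct and follows essentially the same route as the paper: establish $(\uppvar^\tau)^2\ge\VV\overline{V}_{h+1}^\tau(z_h^\tau)$ on $\mathcal{E}$ via Lemma~\ref{lem:est_var}, conclude $\EE[\eta_{\tau+1}^2\mid\cF_\tau]\le 1$, and rerun the Lemma~\ref{lem:main_weighted} decomposition with Theorem~\ref{lemma:concentration_variance} applied at $\sigma=1$ instead of $\sigma=O(H/\alpha)$. Your truncation device for making the conditional-variance hypothesis hold deterministically is a legitimate refinement of a step the paper leaves implicit ("direct application" of the Bernstein inequality), and your constants match the stated choice of $\beta_t$.
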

\begin{proof}
From the definition of $\uppvar$ and $\overline{E}_h^t$, we know that 
\begin{align*}
\uppvar^t 
	\geq 
\Vest \overline{V}_{h+1}^t (z_h^t) 
    +
\min\left\{ 
H^2, \beta_t^{(2)} \overline{\bonus}_{h, 2}^t
\right\}
    +
\min\left\{
H^2, 2H \beta_t^{(1)} \overline{\bonus}_{h, 1}^t
\right\}
.
\end{align*}
Combining with the result in Lemma~\ref{lem:est_var} where we bound the absolute difference between the estimated variance and the true variance in Eq.~\eqref{eq:inSet-1}, we have that on the event $\mathcal{E}$ defined in Lemma~\ref{lem:main_weighted}:
\begin{align*}
\uppvar^t \geq \mathbb{V}\overline{V}_{h+1}^t(z_h^t)
.
\end{align*}
We derive a fine-tuned bound on the variance of $\eta_{t + 1}$ defined in Eq.~\eqref{eq:eta} that on event $\mathcal{E}$:
\begin{align*}
\EE \left[ \eta_{t + 1}^2 \mid \cF_{t} \right] 
	=
\mathbb{V}\overline{V}_{h+1}^t(z_h^t)/\left(\uppvar^t\right)^2
	\leq 
1
.
\end{align*}
The rest of the proof follows by a direct application of Theorem~\ref{lemma:concentration_variance}.
\end{proof}

\begin{lemma}\label{lemma:boundofvariance}
On the event $\mathcal{E} \cap \mathcal{E}_1$, there exists an event $\mathcal{E}_2$ such that $\mathcal{E}_2$ holds with probability at least $1 - \delta$, we have
\begin{align}
    \sum_{t = 1}^T \sum_{h = 1}^H \left(\uppvar^t \right)^2
    &\leq 
HT\alpha^2 + 3(H^2 T + H^3 \log (1/\delta)) + 4 H \sum_{t = 1}^T \sum_{h = 1}^H \PP_h[\overline{V}_{h+1}^t - V_{h+1}^{\mu^t}]\notag
    \\&\quad +
2 \beta_T^{(2)} \sqrt{T H}\cdot \sqrt{2H\Gamma_K(T, \lambda_2/H^2)}\notag \\
&\quad 
    +
7 \beta_T^{(1)} H^2 \sqrt{T H} \cdot \sqrt{2 H\Gamma_K(T, \lambda_1/\alpha^2)} 
.\notag
\end{align}

\end{lemma}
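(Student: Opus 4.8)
The plan is to bound $(\uppvar^t)^2$ pointwise at $z_h^t$ and then sum over $(t,h)\in[T]\times[H]$, peeling off the four terms of the claim in turn. From Lemma~\ref{lem:est_var} we have $|\Vest\overline V_{h+1}^t(z_h^t)-\VV\overline V_{h+1}^t(z_h^t)|\le\overline E_h^t$, so in particular $\Vest\overline V_{h+1}^t(z_h^t)+\overline E_h^t\ge0$. Hence $\max\{a,b\}\le a+b$ for $a,b\ge0$ applied to~\eqref{eq:variance_estimate}, followed by Lemma~\ref{lem:est_var} once more, gives
\[
(\uppvar^t)^2\;\le\;\alpha^2+\Vest\overline V_{h+1}^t(z_h^t)+\overline E_h^t\;\le\;\alpha^2+\VV\overline V_{h+1}^t(z_h^t)+2\overline E_h^t .
\]
It then remains to replace $\VV\overline V_{h+1}^t$ by the variance of a reference value, to handle that reference value via a total-variance argument, and to sum the bonus terms $\overline E_h^t$.

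For the reference step, let $\mu^t$ denote the (time-varying, correlated) policy $\{\sigma_h^t\}_{h\in[H]}$, so that conditionally on $\cF_{t-1}$ the episode-$t$ trajectory is generated by $\mu^t$. By an induction in the spirit of the proof of Theorem~\ref{thm:main}, now comparing $\overline Q_h^t,\underline Q_h^t$ against the correlated policy $\mu^t=\sigma^t$ and using the CCE inequalities~\eqref{cce:1}--\eqref{cce:2} together with the weighted estimation bound of Lemma~\ref{lem:main_weighted_fine} (in place of Lemma~\ref{lem:main}), one obtains that on $\mathcal{E}\cap\mathcal{E}_1$ the sandwich $\underline V_{h+1}^t\le V_{h+1}^{\mu^t}\le\overline V_{h+1}^t$ holds pointwise. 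Since $\overline V_{h+1}^t,V_{h+1}^{\mu^t}\in[-H,H]$ and $\overline V_{h+1}^t-V_{h+1}^{\mu^t}\ge0$, the identity $\VV[X]-\VV[Y]=\EE[(X-Y)(X+Y)]-\EE[X-Y]\,\EE[X+Y]$ yields pointwise at $z_h^t$
\[
\VV\overline V_{h+1}^t(z_h^t)\;\le\;\VV V_{h+1}^{\mu^t}(z_h^t)+4H\,\PP_h\big[\overline V_{h+1}^t-V_{h+1}^{\mu^t}\big](z_h^t).
\]
Summing the resulting pointwise inequality over $(t,h)$ reproduces exactly the four terms in the statement, once we establish $\sum_{t,h}\VV V_{h+1}^{\mu^t}(z_h^t)\le3(H^2T+H^3\log(1/\delta))$ and that $2\sum_{t,h}\overline E_h^t$ is controlled by the two effective-dimension terms.

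For the total-variance term I would apply the law of total variance episode by episode: since $\mu^t$ is $\cF_{t-1}$-measurable and the trajectory is drawn from $\mu^t$, writing $\sum_h r_h^t-V_1^{\mu^t}(x_1^t)$ as a sum of per-step martingale increments --- one from the action draw $(a_h^t,b_h^t)\sim\sigma_h^t(x_h^t)$ and one from the transition $x_{h+1}^t\sim\PP_h(\cdot\mid z_h^t)$ --- gives $\EE\big[\sum_h\VV V_{h+1}^{\mu^t}(z_h^t)\mid\cF_{t-1}\big]=O(H^2)$, and a Freedman/Bernstein bound for the martingale $\sum_t\big(\sum_h\VV V_{h+1}^{\mu^t}(z_h^t)-\EE[\,\cdot\mid\cF_{t-1}]\big)$, whose increments lie in $[0,H^3]$ with conditional second moment $O(H^5)$, produces the stated bound on a new event $\mathcal{E}_2$ with $\PP(\mathcal{E}_2)\ge1-\delta$. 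For $2\sum_{t,h}\overline E_h^t=2\sum_{t,h}\min\{H^2,\beta_t^{(2)}\overline{\bonus}_{h,2}^t\}+2\sum_{t,h}\min\{H^2,2H\beta_t^{(1)}\overline{\bonus}_{h,1}^t\}$, I would use monotonicity $\beta_t^{(i)}\le\beta_T^{(i)}$, pull the scalars out and use $\min\{c,w\}\le\min\{1,w\}$ for $c\le1$, then Cauchy--Schwarz over the $TH$ index pairs together with $\min\{1,w\}=\sqrt{\min\{1,w^2\}}$, and finish with Lemma~\ref{lemm:bound_b}: for fixed $h$, the quantities $\sum_t\min\{1,(\overline{\bonus}_{h,2}^t)^2\}$ and $\sum_t\min\{1,(\overline{\bonus}_{h,1}^t/\uppvar^t)^2\}$ are each at most $2\logdet(\Ib+\overline K_{h,i}^T/\lambda_i)$, which --- after rescaling the feature $(\overline V_{h+1}^t)^2\in[0,H^2]$ by $1/H$, respectively using $\uppvar^t\ge\alpha$ to compare $\phi_{\overline V_{h+1}^t}(z_h^t)/\uppvar^t$ with $\phi_{\overline V_{h+1}^t}(z_h^t)$ --- is $O\big(\Gamma_K(T,\lambda_2/H^2)\big)$, respectively $O\big(\Gamma_K(T,\lambda_1/\alpha^2)\big)$; carrying the scalars back through gives the two effective-dimension terms.

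The main obstacle is the choice of the reference policy $\mu^t$. It must simultaneously be sandwiched between the two estimates, $\underline V_{h+1}^t\le V_{h+1}^{\mu^t}\le\overline V_{h+1}^t$ pointwise --- so that the discrepancy term $\PP_h[\overline V_{h+1}^t-V_{h+1}^{\mu^t}]$ is nonnegative and, together with its min-player counterpart, collapses to $\PP_h[\overline V_{h+1}^t-\underline V_{h+1}^t]$, the quantity already governed by the recursion~\eqref{eq:bound_UCB_LCB} --- and it must be the policy that actually generated the sampled trajectory, so that the law of total variance applies with no distribution-mismatch loss. Taking $\mu^t$ to be the coarse-correlated equilibrium policy $\sigma^t$ itself (rather than $(\ast,\nu^t)$) is what makes both requirements hold; the price is that the optimism/pessimism induction has to be re-run against a correlated opponent using~\eqref{cce:1}--\eqref{cce:2}, and the total-variance decomposition carries two martingale increments per step rather than one. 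Everything else is routine bookkeeping with Lemmas~\ref{lem:est_var}, \ref{lem:main_weighted_fine}, and~\ref{lemm:bound_b}.
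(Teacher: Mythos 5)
Your proposal is correct and follows essentially the same route as the paper: the same pointwise bound $(\uppvar^t)^2\le\alpha_t^2+\VV\overline{V}_{h+1}^t(z_h^t)+2\overline{E}_h^t$ via Lemma~\ref{lem:est_var}, followed by the same three-way decomposition into a variance-difference term (controlled by $4H\,\PP_h[\overline{V}_{h+1}^t-V_{h+1}^{\mu^t}]$ using optimism with respect to the executed CCE policy), a total-variance term, and the two bonus sums handled by Cauchy--Schwarz and Lemma~\ref{lemm:bound_b}. The paper simply defers the bounding of these three terms to Lemma A.6 of \citet{chen2021almost} (with Lemma~\ref{lemm:bound_b} substituted for their Lemma B.4), whereas you spell out those standard arguments explicitly.
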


\begin{proof}
First by considering the definition of $\uppvar$, we know that
\begin{align*}
&\sum_{t = 1}^T \sum_{h = 1}^H \left(\uppvar^t \right)^2\notag \\
	&\leq 
\sum_{t = 1}^T \sum_{h = 1}^H \left(
\alpha_t^2 + 
\Vest \overline{V}_{h+1}^t (z_h^t) 
    +
\overline{E}_h^t
\right)
	\\&=
\sum_{t = 1}^T \sum_{h = 1}^H \left(
\alpha_t^2 +
\Vest \overline{V}_{h+1}^t (z_h^t) 
	+
\min\left\{ 
H^2, \beta_t^{(2)} \overline{\bonus}_{h, 2}^t
\right\}
    +
\min\left\{
H^2, 2H \beta_t^{(1)} \overline{\bonus}_{h, 1}^t
\right\}
\right)
	\\&\leq 
HT\alpha^2
	+
 \sum_{t = 1}^T \sum_{h = 1}^H \left(
\VV \overline{V}_{h+1}^t (z_h^t) 
	+
2\min\left\{ 
H^2, \beta_t^{(2)} \overline{\bonus}_{h, 2}^t
\right\}
    +
2\min\left\{
H^2, 2H \beta_t^{(1)} \overline{\bonus}_{h, 1}^t
\right\}
\right)
	\\&\leq 
HT\alpha^2
	+
\underbrace{ \sum_{t = 1}^T \sum_{h = 1}^H 
\left[\VV \overline{V}_{h+1}^t (z_h^t) 
	-
\VV \overline{V}_{h+1}^{\pi_t} (z_h^t)
\right]
}_{\mbox{I}}
	+
 \underbrace{\sum_{t = 1}^T \sum_{h = 1}^H 
 \VV \overline{V}_{h+1}^{\pi_t} (z_h^t)
 }_{\mbox{II}}
	\\&~\quad +
 \underbrace{\sum_{t = 1}^T \sum_{h = 1}^H 
2\min\left\{ 
H^2, \beta_t^{(2)} \overline{\bonus}_{h, 2}^t
\right\}
    +
 \sum_{t = 1}^T \sum_{h = 1}^H 
2\min\left\{
H^2, 2H \beta_t^{(1)} \overline{\bonus}_{h, 1}^t
\right\}
}_{\mbox{III}}
.
\end{align*}
The rest of the proof for bounding $\mbox{I}, \mbox{II},  \mbox{III}$ goes the same as in the proof of Lemma A.6 in~\citet{chen2021almost}, except that we replace Lemma B.4 in~\citet{chen2021almost} with Lemma~\ref{lemm:bound_b}.
\end{proof}

\begin{proof}[Proof of Theorem~\ref{thm:main_bernstein}]
The first part of the proof follows almost the same as in the proof of Theorem~\ref{thm:main} by replacing $\overline{\bonus}_h^t$ with $\overline{\bonus}_{h, 1}^t$ and $\underline{\bonus}_h^t$ with $\underline{\bonus}_{h, 1}^t$, except that now we have $\beta \uppvar^t \geq 2H$ so that we have~\eqref{eq:bound_combine_weighted} instead of~\eqref{eq:bound_combine}. 

\beq\label{eq:bound_combine_weighted}\begin{aligned}
&  V_1^{*, \nu^t}(x_1^t) - V_1^{\pi^t, *}(x_1^t) 
	\\&\leq 
\sum_{h = 1}^H \left(4 \beta_t \uppvar^t \min\{1, \overline{\bonus}_{h, 1}^t(z_h^t)/\uppvar^t\} + 4 \beta_t \lowvar^t \min\{1, \underline{\bonus}_{h, 1}^t(z_h^t)/\uppvar^t\}
	+
\xi_h^t + \gamma_h^t\right)
,
\end{aligned}\eeq
Similarly, for any $1 \leq h' \leq H$, we have
\beq\label{eq:bound_part_weighted}\begin{aligned}
&\overline{V}_{h'}^t(x_{h'}^t) - \underline{V}_{h'}^t (x_{h'}^t) \notag \\
	&\leq 
\sum_{h = 1}^H \left(2 \beta_t \uppvar^t \min\{1, \overline{\bonus}_{h, 1}^t(z_h^t)/\uppvar^t\} + 2 \beta_t \lowvar^t \min\{1, \underline{\bonus}_{h, 1}^t(z_h^t)/\uppvar^t\}
	+
\xi_h^t + \gamma_h^t  \right).
\end{aligned}\eeq
Applying the Azuma-Hoeffding inequality onto \eqref{eq:bound_part_weighted}, we have with probability at least $1-\delta$,
\begin{align}
    &\sum_{t = 1}^T \sum_{h = 1}^H \PP_h[\overline{V}_{h+1}^t - \underline{V}_{h+1}^t](x_h^t, a_h^t, b_h^t) \notag \\
    &\leq \sum_{t=1}^T\sum_{h = 1}^H \left(2 \beta_t \uppvar^t \min\{1, \overline{\bonus}_{h, 1}^t(z_h^t)/\uppvar^t\} + 2 \beta_t \lowvar^t \min\{1, \underline{\bonus}_{h, 1}^t(z_h^t)/\uppvar^t\}
	+
\xi_h^t + \gamma_h^t  \right) + \sum_{t=1}^T\sum_{h = 1}^H\xi_h^t.\label{help:2}
\end{align}
We now estimate the two summation terms $\sum_{t = 1}^T \sum_{h = 1}^H \uppvar^t \min\{1, \overline{\bonus}_{h, 1}^t(z_h^t)/\uppvar^t\}$ and \\$\sum_{t = 1}^T \sum_{h = 1}^H \uppvar^t \min\{1, \underline{\bonus}_{h, 1}^t(z_h^t)/\uppvar^t \}$, separately.
By the definitions in Lemma~\ref{lem:facts}, Section~\ref{sec:facts}, we know that 
\begin{align*}
&\sum_{t = 1}^T \sum_{h = 1}^H \uppvar^t \min\{1, \overline{\bonus}_{h, 1}^t(z_h^t)/\uppvar^t\}\notag \\
    &=
\sum_{t = 1}^T \sum_{h = 1}^H \uppvar^t \min\left\{1, \left[\phi_{\overline{V}_{h+1}^t}(z)^\top \oLambda_{h, 1}^t \phi_{\overline{V}_{h+1}^t}(z) \right]^{1/2}/\uppvar^t\right\}
    \\&\leq 
\sqrt{\sum_{t = 1}^T \sum_{h = 1}^H \left(\uppvar^t\right)^2} \sqrt{\sum_{t = 1}^T \sum_{h = 1}^H  \min\left\{1, \left[\phi_{\overline{V}_{h+1}^t}(z)^\top \oLambda_{h, 1}^t \phi_{\overline{V}_{h+1}^t}(z) \right]/\uppvar^t\right\}}
    \\&\leq 
\sqrt{\sum_{t = 1}^T \sum_{h = 1}^H \left(\uppvar^t\right)^2} \cdot \sqrt{2H \Gamma_K(T, \lambda_1\alpha^2)}
.
\end{align*}
Similarly,
\begin{align*}
\sum_{t = 1}^T  \sum_{h = 1}^H \lowvar^t\min\{1, \underline{\bonus}_{h, 1}^t(z_h^t)/\lowvar^t\}
    \leq 
\sqrt{\sum_{t = 1}^T \sum_{h = 1}^H \left(\lowvar^t\right)^2} \cdot \sqrt{2H \Gamma_K(T, \lambda_1\alpha^2)}
.
\end{align*}
By Lemma \ref{lemma:boundofvariance}, we have 
\begin{align}
\lefteqn{
    \sum_{t = 1}^T \sum_{h = 1}^H \left(\uppvar^t \right)^2 + \left(\lowvar^t \right)^2
}\notag \\
    &= 
O\bigg( HT\alpha^2 + H^2 T + H^3 \log (1/\delta) +  H \sum_{t = 1}^T \sum_{h = 1}^H \PP_h[\overline{V}_{h+1}^t - \underline{V}_{h+1}^t]\notag
    \\&~\quad +
 \beta_T^{(2)} \sqrt{T H}\sqrt{H\Gamma_K(T, \lambda_2/H^2)}
    +
 \beta_t^{(1)} H^2 \sqrt{T H} \sqrt{H\Gamma_K(T, \lambda_1/\alpha^2)}\bigg)\notag
    \\&\leq 
O\bigg( HT\alpha^2 + H^2 T + H^3 \log (1/\delta))\notag
    \\&~\quad + 
H^2 \beta_t \sqrt{\sum_{t = 1}^T \sum_{h = 1}^H \left(\uppvar^t \right)^2 + \left(\lowvar^t \right)^2} \cdot \sqrt{H \cdot \Gamma_K(T, \lambda_1\alpha^2) } 
    +
H^3 \sqrt{H T \log(H/\delta)}\notag
    \\&~\quad +
\beta_t^{(2)} \sqrt{T H} \sqrt{H\Gamma_K(T, \lambda_2/H^2)}
    +
\beta_t^{(1)} H^2 \sqrt{T H} \sqrt{H\Gamma_K(T, \lambda_1\alpha^2)}\bigg),\label{help:3}
\end{align}
where the inequality holds due to the Cauchy-Schwarz inequality. Next, 
by taking 
\begin{align}
    \alpha = H/\sqrt{\Gamma_K(T, 1/B^2)},\ \lambda_2 = H^2/B^2,\ \lambda_1 = 1/(\alpha^2B^2),\notag
\end{align}
we have
\begin{align}
    &\beta_t^{(1)} = 16 H/\alpha \sqrt{\Gamma_K(T, \lambda_1 (\alpha)^2)} \sqrt{\log(4t^2 H/\delta)} + 8 H/\alpha \log(4t^2 H/\delta) + \sqrt{\lambda_1} \cdot B = \tilde O(\Gamma_K(T, H^2/B^2)),\notag \\
    &\beta_t^{(2)} = 16 H^2 \sqrt{\Gamma_K(T, \lambda_2/H^2)} \sqrt{\log(4t^2 H/\delta)}+ 8 H^2 \log (4t^2 H/\delta) + \sqrt{\lambda_2} \cdot B = \tilde O(H^2),\notag \\
    &\beta_t =
16 \sqrt{\Gamma_K(T, \lambda_1 (\alpha)^2)} \sqrt{\log(4t^2 H/\delta)} + 8 H/\alpha \log(4t^2 H/\delta) + \sqrt{\lambda_1} \cdot B = \tilde O(\sqrt{\Gamma_K(T, H^2/B^2)}).\notag
\end{align}
For simplicity, let $\deff: = \Gamma_K(T, H^2/B^2)$, then by \eqref{help:3} we have
\begin{align}
\lefteqn{
\sum_{t = 1}^T \sum_{h = 1}^H \left(\uppvar^t \right)^2 + \left(\lowvar^t \right)^2
}\notag \\
    &\leq 
\tilde{O}\left(\sqrt{\sum_{t = 1}^T \sum_{h = 1}^H \left(\uppvar^t \right)^2 + \left(\lowvar^t \right)^2} H^{5/2}\deff     +
H^3 \deff^{3/2} T^{1/2}
    +
H^{7/2} T^{1/2} + H^3 T/\deff + H^2 T
\right).\notag
\end{align}
With the fact that $x \leq a\sqrt{x} + b$ leads to $x = O(a^2 + b)$, we have
\begin{align}
\sum_{t = 1}^T \sum_{h = 1}^H \left(\uppvar^t \right)^2 + \left(\lowvar^t \right)^2  
    &=
\tilde{O}\left(
\deff^2 H^5     
+
H^3 \deff^{3/2} T^{1/2}
+
H^{7/2} T^{1/2} + H^3 T/\deff + H^2 T \right).\label{help:4}
\end{align}
Finally, substituting \eqref{help:4} into \eqref{eq:bound_combine_weighted} and bound the summation of $\xi_h^t, \gamma_h^t$ by the Azuma-Hoeffding inequality, we have
\begin{align*}
&
\sum_{t = 1}^T V_1^{*, \nu^t}(x_h^t) - V_1^{\pi^t, *}(x_h^t)
\notag \\&\leq 
\tilde{O}\left( \beta_t \sqrt{\sum_{t = 1}^T \sum_{h = 1}^H \left(\uppvar^t \right)^2 + \left(\lowvar^t \right)^2} \cdot \sqrt{H \cdot \deff } 
    +
H \sqrt{2 H T \log(H/\delta)}
\right)
    \\& =
\tilde{O}\left(
\deff^2 H^3 + \deff^{1.75} H^2 T^{0.25} + \deff H^{2.25} T^{0.25}
    +
\sqrt{\deff} H^2 \sqrt{T}
    +
\deff H^{1.5} \sqrt{T}
\right)
    \\&=
\tilde{O} \left( \deff^2 H^3 
    +
\sqrt{\deff H^4 + \deff^2 H^3} \sqrt{T} 
    +
\left(\deff^7 H^8 + \deff^4 H^9 \right)^{1/4} T^{1/4}
\right)
.
\end{align*}
This completes the proof of the theorem.
\end{proof}

\section{Proof of Results for $\algname$ with Misspecification}\label{sec:mainproof_3}
In this section we prove Theorem \ref{thm:main_RKHSmis}. 
\begin{lemma}\label{lem:main_misspecification}
Assuming that for any $h \in [H]$, $\norm{\btheta_h^*}_{\cH} \leq B$. Let $\lambda = 1 + 1/T$ and $\beta_t$ satisfies
\beq\label{eq:beta_choice_mis}
\left(\frac{\beta_t}{H}\right)^2 \geq 3\Gamma_K(T, \lambda)
+
3
+
6\cdot \log \left( \frac{1}{\delta} \right)
+
3 \lambda \left(\frac{B}{H}\right)^2
+
3 \iota_{\mis}^2 t
.
\eeq
Then for any $\delta > 0$, with probability at least $1 - \delta$ the following holds for any $(t, h) \in [T] \times [H]$ and any $z \in \cZ$:
\begin{align*}
\left| \left\langle \phi_{\overline{V}_{h+1}^t}(z) , \overline{\btheta}_h^t\right\rangle_{\cH} - \PP_h \overline{V}_{h+1}^t(z)\right| 
&\leq 
\beta_t \cdot \overline{\bonus}_h^t(z) + H \cdot \iota_{\mis}
,
\\
\left| \left\langle \phi_{\underline{V}_{h+1}^t}(z) , \underline{\btheta}_h^t\right\rangle_{\cH} - \PP_h \underline{V}_{h+1}^t(z)\right| 
&\leq 
\beta_t \cdot \underline{\bonus}_h^t(z) + H \cdot \iota_{\mis}
.
\end{align*}
\end{lemma}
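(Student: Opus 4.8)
The plan is to reproduce the proof of Lemma~\ref{lem:main} almost verbatim, inserting the misspecification error $\iota_{\mis}$ at the two — and only two — places where that proof used the exact identity $\PP_h V = \langle \phi_V,\btheta_h^*\rangle_{\cH}$: at the evaluation point $z$, and inside each value-targeted regression target. I treat the max-player; the min-player bound is identical with overlines replaced by underlines. First I would decompose
\[
\langle \phi_{\overline{V}_{h+1}^t}(z), \overline{\btheta}_h^t\rangle_{\cH} - \PP_h \overline{V}_{h+1}^t(z)
= \langle \phi_{\overline{V}_{h+1}^t}(z), \overline{\btheta}_h^t - \btheta_h^*\rangle_{\cH}
+ \big(\langle \phi_{\overline{V}_{h+1}^t}(z), \btheta_h^*\rangle_{\cH} - \PP_h \overline{V}_{h+1}^t(z)\big),
\]
where, by Assumption~\ref{assu:misspecification} and $\|\overline{V}_{h+1}^t\|_\infty \le H$, the last bracket is at most $\|\overline{V}_{h+1}^t\|_\infty\,\|\PP_h(\cdot\mid z) - \langle\phi(\cdot\mid z),\btheta_h^*\rangle_{\cH}\|_{\text{TV}} \le H\iota_{\mis}$. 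So it suffices to show that on a $(1-\delta)$-probability event, $|\langle \phi_{\overline{V}_{h+1}^t}(z),\overline{\btheta}_h^t-\btheta_h^*\rangle_{\cH}| \le \beta_t\,\overline{\bonus}_h^t(z)$ for all $(t,h)$ and all $z$.

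For the remaining term I would follow the well-specified argument. Using the normal equations in the form given by Lemma~\ref{lem:facts}(a) and (c),
\[
\langle \phi_{\overline{V}_{h+1}^t}(z), \overline{\btheta}_h^t - \btheta_h^*\rangle_{\cH}
= \phi_{\overline{V}_{h+1}^t}(z)^\top (\overline{\Lambda}_h^t)^{-1} (\overline{\phiholder}_h^t)^\top\big(\overline{\yholder}_h^t - \overline{\phiholder}_h^t\btheta_h^*\big)
- \lambda\, \phi_{\overline{V}_{h+1}^t}(z)^\top (\overline{\Lambda}_h^t)^{-1} \btheta_h^*,
\]
and then Cauchy--Schwarz in the $(\overline{\Lambda}_h^t)^{-1}$-inner product together with Lemma~\ref{lem:facts}(b) (so that $\|\phi_{\overline{V}_{h+1}^t}(z)\|_{(\overline{\Lambda}_h^t)^{-1}} = \overline{\bonus}_h^t(z)$), $\|(\overline{\Lambda}_h^t)^{-1}\|_{\mathrm{op}}\le 1/\lambda$ and $\|\btheta_h^*\|_{\cH}\le B$ bound the last term by $\sqrt{\lambda}\,B\,\overline{\bonus}_h^t(z)$. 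The new ingredient is that the regression residual now splits as $\overline{\yholder}_h^t - \overline{\phiholder}_h^t\btheta_h^* = \bm{\eta} + \bm{\zeta}$, where the $\tau$-th coordinate of $\bm{\eta}$ is $\eta_{\tau+1} := \overline{V}_{h+1}^\tau(x_{h+1}^\tau) - \PP_h\overline{V}_{h+1}^\tau(z_h^\tau)$ and of $\bm{\zeta}$ is $\zeta_\tau := \PP_h\overline{V}_{h+1}^\tau(z_h^\tau) - \langle\phi_{\overline{V}_{h+1}^\tau}(z_h^\tau),\btheta_h^*\rangle_{\cH}$. With respect to the filtration generated by all data up to episode $\tau-1$ and steps $1,\dots,h$ of episode $\tau$, $\{\eta_{\tau+1}\}$ is a martingale-difference sequence bounded by $2H$ — crucially, $\PP_h$ here is the \emph{true} kernel, so this piece is unchanged from the well-specified case — while $|\zeta_\tau| \le H\iota_{\mis}$ by Assumption~\ref{assu:misspecification}.

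It then remains to bound $\|(\overline{\phiholder}_h^t)^\top\bm{\eta}\|_{(\overline{\Lambda}_h^t)^{-1}}$ and $\|(\overline{\phiholder}_h^t)^\top\bm{\zeta}\|_{(\overline{\Lambda}_h^t)^{-1}}$ separately. The first I would handle with the same self-normalized concentration inequality used in the proof of Lemma~\ref{lem:main} (a special case of Theorem~\ref{lemma:concentration_variance} with $R=\sigma=2H$, combined with Lemma~\ref{lemm:bound_b} to replace $\logdet(\Ib + \overline{K}_h^t/\lambda)$ by $\Gamma_K(T,\lambda)$): for each fixed $h$, uniformly over $t$ and with probability at least $1-\delta/H$, this is at most $H\sqrt{\Gamma_K(T,\lambda) + 1 + 2\log(1/\delta)}$, exactly the well-specified bound. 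For the second, the push-through identity of Lemma~\ref{lem:facts}(a) gives $(\overline{\phiholder}_h^t)(\overline{\Lambda}_h^t)^{-1}(\overline{\phiholder}_h^t)^\top = \overline{K}_h^t(\overline{K}_h^t+\lambda\Ib)^{-1}\preceq\Ib$, hence $\|(\overline{\phiholder}_h^t)^\top\bm{\zeta}\|_{(\overline{\Lambda}_h^t)^{-1}}^2 = \bm{\zeta}^\top \overline{K}_h^t(\overline{K}_h^t+\lambda\Ib)^{-1}\bm{\zeta} \le \|\bm{\zeta}\|^2 \le t\,H^2\iota_{\mis}^2$, so $\|(\overline{\phiholder}_h^t)^\top\bm{\zeta}\|_{(\overline{\Lambda}_h^t)^{-1}} \le H\iota_{\mis}\sqrt{t}$. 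Collecting the three contributions, $|\langle\phi_{\overline{V}_{h+1}^t}(z),\overline{\btheta}_h^t-\btheta_h^*\rangle_{\cH}| \le \overline{\bonus}_h^t(z)\big(\sqrt{\lambda}B + H\sqrt{\Gamma_K(T,\lambda)+1+2\log(1/\delta)} + H\iota_{\mis}\sqrt{t}\big)$; squaring and using $(a+b+c)^2\le 3(a^2+b^2+c^2)$ shows the parenthesis is at most $\beta_t$ under hypothesis~\eqref{eq:beta_choice_mis} (the factor $3$, versus $2$ in Lemma~\ref{lem:main}, is precisely the cost of the three-way split, and the $\log H$ from the union over $h\in[H]$ is absorbed into the logarithmic term as in Lemma~\ref{lem:main}). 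Adding back the $H\iota_{\mis}$ from the first paragraph and taking a union bound over $h\in[H]$ finishes the proof.

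The argument is by design a near-transcription of the well-specified proof, so the step I would be most careful with — the only genuinely new one — is the clean split of each regression residual into the \emph{true} conditional-mean martingale noise plus a deterministic misspecification bias bounded coordinatewise by $H\iota_{\mis}$, followed by the bound $\|(\overline{\phiholder}_h^t)^\top\bm{\zeta}\|_{(\overline{\Lambda}_h^t)^{-1}} \le H\iota_{\mis}\sqrt{t}$ via $\overline{K}_h^t(\overline{K}_h^t+\lambda\Ib)^{-1}\preceq\Ib$. Everything else (the normal-equation rewriting, the self-normalized concentration, the calibration of $\beta_t$) carries over unchanged, as does the subsequent regret analysis in Theorem~\ref{thm:main_RKHSmis}, where the per-step error $H\iota_{\mis}$ accumulates to the stated $O(\sqrt{\Gamma_K(T,\lambda)}\,H^2 T\iota_{\mis})$ term.
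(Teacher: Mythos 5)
Your proposal is correct and follows essentially the same route as the paper's proof: the same normal-equation decomposition, the same split of the regression residual into true-conditional-mean martingale noise (handled by the self-normalized concentration bound) plus a coordinatewise-bounded misspecification bias, and the same bound $H\iota_{\mis}\sqrt{t}$ on the bias term — your push-through computation $\bm{\zeta}^\top \overline{K}_h^t(\overline{K}_h^t+\lambda\Ib)^{-1}\bm{\zeta}\le\|\bm{\zeta}\|^2$ is exactly the paper's proof of its Lemma~\ref{lem:ab}. The only cosmetic difference is that you place the evaluation-point error $H\iota_{\mis}$ up front while the paper invokes it at the end via Lemma~\ref{lemm:misspecification}.
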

The proof of Theorem~\ref{thm:main_RKHSmis} shares similar techniques with the proof of Theorem~\ref{thm:main}, except that we need Lemma~\ref{lem:main_misspecification} instead of Lemma~\ref{lem:main}. 
We present the whole proof for completeness.

We recall that the duality gap is defined as $\sum_{t = 1}^T V_1^{*, \nu^t}(x_1^t) - V_1^{\pi^t, *}(x_1^t)$.
As can be seen in our Algorithm~\ref{alg:base}, we maintain an optimistic estimate of $V_h^{*, v^t}(\cdot)$ as $\VUp(\cdot)$ and a pessimistic estimate of $V_h^{\pi^t, *}(\cdot)$ as $\VLp(\cdot)$.
Hence the term $\VUp(x_h^t) - \VLp(x_h^t)$ is approximately the upper bound of the duality gap.
We write the decomposition formally as below:
\beq\label{eq:decomp_misspecification}
\begin{aligned}
V_h^{*, \nu^t}(x_h^t) - V_h^{\pi^t, *}(x_h^t)
=
\underbrace{\overline{V}_h^t(x_h^t) - \underline{V}_h^t(x_h^t)}_{\mbox{I}}
-
\underbrace{\left(  V_h^{\pi^t, *}(x_h^t) - \underline{V}_h^t(x_h^t) \right)}_{\mbox{II}}
-
\underbrace{\left( \overline{V}_h^t(x_h^t) - V_h^{*, \nu^t}(x_h^t) \right)}_{\mbox{III}}
.
\end{aligned}
\eeq
We use $\overline{\delta}_h^t$ to denote the important quantity $\left\langle \phi_{\overline{V}_{h+1}^t}(z_h^t), \btheta_h^* - \overline{\btheta}_h^t \right\rangle_{\cH}$ (and $\left\langle \phi_{\underline{V}_{h+1}^t}(z_h^t), \btheta_h^* - \underline{\btheta}_h^t \right\rangle_{\cH}$) in estimating the duality gap. In the rest of the proof we aim to show that all of the above three terms can be bounded by a quantity related to $\overline{\delta}_h^t$ ($\underline{\delta}_h^t$) and a stochastic random variable that forms a martingale difference sequence for all $h \in [H], t \in [T]$.

For bounding term $\mbox{I}$, we first define two sequences of zero-mean variables:
\beq\label{eq:def_gamma_xi_misspecification}\begin{aligned}
\gamma_h^t
&:=
\QUxh - \QLxh 
- \EE_{(a, b)} \left[\QUxab - \QLxab\right]
,
\\
\xi_h^t
&:=
\left(\PP_h (\VUp - \VLp) \right)(x_h^t, a_h^t, b_h^t) 
-
\left(\VUp(x_{h+1}^t) - \VLp(x_{h+1}^t)\right)
,
\end{aligned}\eeq 
where $\gamma_h^t$ depicts the stochastic error with respect to the policy and $\xi_h^t$ depicts the stochastic error with respect to the transition.
We refer the readers to the proof of Lemma~\ref{lem:bound_UCB_LCB_misspecification} for detailed explanations of these two error terms.
Given the above definition, we have the following lemma.

\begin{lemma}\label{lem:bound_UCB_LCB_misspecification}
Under the settings of Lemma~\ref{lem:main_misspecification}, we have the following recursive bound for $\forall h \in [H]$:
\beq\label{eq:bound_UCB_LCB_misspecification}
\begin{aligned}
\lefteqn{
\overline{V}_h^t(x_h^t) 
-
\underline{V}_h^t(x_h^t)
}
\\&\leq 
\overline{V}_{h+1}^t(x_{h+1}^t)
-
\underline{V}_{h+1}^t(x_{h+1}^t)
+
2\beta_t \min\{1, \overline{\bonus}_h^t(x_h^t)\}
+
2 \beta_t \min\{1, \underline{\bonus}_h^t(x_h^t)\}
+
2 H \cdot \iota_{\mis}
 + \xi_h^t + \gamma_h^t 
.
\end{aligned}\eeq 
\end{lemma}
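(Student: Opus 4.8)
The plan is to follow the proof of Lemma~\ref{lem:bound_UCB_LCB} essentially verbatim, making the single substitution of Lemma~\ref{lem:main_misspecification} for Lemma~\ref{lem:main} wherever the concentration bound is invoked; each such invocation now costs an extra additive $H\iota_{\mis}$. First I would use the \texttt{FIND\_CCE} update rule (exactly as in Eq.~\eqref{eq:QtoV}--\eqref{eq:gamma}) to decompose
\[
\overline{V}_h^t(x_h^t) - \underline{V}_h^t(x_h^t) = \QU{z_h^t} - \QL{z_h^t} + \gamma_h^t ,
\]
where $\gamma_h^t$ is the zero-mean policy-sampling error from Eq.~\eqref{eq:def_gamma_xi_misspecification}.

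Next I would bound $\QU{z_h^t} - \QL{z_h^t}$. Dropping the outer projection $\Pi_{[-H,H]}$ and cancelling the common reward $r_h(z_h^t)$, this difference is at most $\langle \phi_{\overline{V}_{h+1}^t}(z_h^t),\overline{\btheta}_h^t\rangle_{\cH} - \langle \phi_{\underline{V}_{h+1}^t}(z_h^t),\underline{\btheta}_h^t\rangle_{\cH} + \beta_t \overline{\bonus}_h^t(z_h^t) + \beta_t \underline{\bonus}_h^t(z_h^t)$. Inserting $\pm\PP_h \overline{V}_{h+1}^t(z_h^t)$ and $\pm\PP_h \underline{V}_{h+1}^t(z_h^t)$ and regrouping, the right-hand side equals
\[
\big[\langle \phi_{\overline{V}_{h+1}^t}(z_h^t),\overline{\btheta}_h^t\rangle_{\cH} - \PP_h \overline{V}_{h+1}^t(z_h^t)\big]
+ \big(\PP_h(\overline{V}_{h+1}^t - \underline{V}_{h+1}^t)\big)(z_h^t)
+ \big[\PP_h \underline{V}_{h+1}^t(z_h^t) - \langle \phi_{\underline{V}_{h+1}^t}(z_h^t),\underline{\btheta}_h^t\rangle_{\cH}\big]
+ \beta_t \overline{\bonus}_h^t(z_h^t) + \beta_t \underline{\bonus}_h^t(z_h^t) .
\]
Applying Lemma~\ref{lem:main_misspecification} to the first and third bracketed quantities bounds them by $\beta_t \overline{\bonus}_h^t(z_h^t) + H\iota_{\mis}$ and $\beta_t \underline{\bonus}_h^t(z_h^t) + H\iota_{\mis}$ respectively, which yields
\[
\QU{z_h^t} - \QL{z_h^t} \le \big(\PP_h(\overline{V}_{h+1}^t - \underline{V}_{h+1}^t)\big)(z_h^t)
+ 2\beta_t \overline{\bonus}_h^t(z_h^t) + 2\beta_t \underline{\bonus}_h^t(z_h^t) + 2H\iota_{\mis} .
\]

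Then I would extract the zero-mean transition error $\xi_h^t$ of Eq.~\eqref{eq:def_gamma_xi_misspecification} via $\big(\PP_h(\overline{V}_{h+1}^t - \underline{V}_{h+1}^t)\big)(z_h^t) = \overline{V}_{h+1}^t(x_{h+1}^t) - \underline{V}_{h+1}^t(x_{h+1}^t) + \xi_h^t$, combine with the $\gamma_h^t$ decomposition, and finally truncate: since $\overline{V}_h^t(x_h^t) - \underline{V}_h^t(x_h^t) \le 2H$ and since the choice of $\beta_t$ in Lemma~\ref{lem:main_misspecification} forces $(\beta_t/H)^2 \ge 3$, hence $\beta_t/H \ge 1$, I may replace $\beta_t \overline{\bonus}_h^t(z_h^t)$ by $\beta_t \min\{1,\overline{\bonus}_h^t(z_h^t)\}$ (and likewise for the min-player), reaching the claimed recursion~\eqref{eq:bound_UCB_LCB_misspecification}. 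I do not anticipate a genuine obstacle here; the only thing requiring care is the bookkeeping of the misspecification term as it passes through the projection and truncation steps — specifically, verifying that it is picked up exactly once from the over-estimator side and once from the under-estimator side, and that it survives the $\min\{2H,\cdot\}$ truncation rather than being absorbed into it, which is precisely why it appears in~\eqref{eq:bound_UCB_LCB_misspecification} as a standalone $2H\iota_{\mis}$ summand.
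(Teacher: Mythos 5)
Your proposal is correct and matches the paper's approach: the paper's own proof of this lemma is a one-line remark that it ``follows the same derivation as Lemma~\ref{lem:bound_UCB_LCB}, noting $\beta_t \geq H$ still holds,'' and your write-up simply fills in that derivation, correctly replacing the exact identity $\PP_h V = \langle \phi_V, \btheta_h^*\rangle_{\cH}$ with the two invocations of Lemma~\ref{lem:main_misspecification} that each contribute $H\iota_{\mis}$. The bookkeeping of the $2H\iota_{\mis}$ term and the use of $(\beta_t/H)^2 \geq 3$ to justify the $\min\{1,\cdot\}$ truncation are both exactly as intended.
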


\begin{proof}[Proof of Lemma~\ref{lem:bound_UCB_LCB_misspecification}]
Follows the same derivative as in Lemma~\ref{lem:bound_UCB_LCB} as it still holds that $\beta_t \geq H$ in Lemma~\ref{lem:main_misspecification}
\end{proof}

For bounding $\mbox{II}$ and $\mbox{III}$, we observe that the gap between two consecutive points of the values of term $\mbox{II}$ can be decomposed as 
\beq\label{eq:sub_decomp_misspecification}\begin{aligned}
&
\left( 
\overline{V}_h^t(x_h^t) - V_h^{*, \nu_h^t}(x_h^t) 
\right)
-
\left(
\overline{V}_{h + 1}^t(x_{h + 1}^t) - V_{h + 1}^{*, \nu_h^t}(x_{h + 1}^t) 
\right)
    \\&=
\underbrace{\left( 
\overline{V}_h^t(x_h^t) - V_h^{*, \nu_h^t}(x_h^t) 
\right)
-
\left( 
\overline{Q}_h^t(z_h^t) - Q_h^{*, \nu_h^t}(z_h^t) 
\right)
}_{\mbox{I}_1}
\\&~\quad + 
\underbrace{\left( 
\overline{Q}_h^t(z_h^t) - Q_h^{*, \nu_h^t}(z_h^t) 
\right) 
-
\left(\PP_h(\overline{V}_{h + 1}^t - V_{h + 1}^{*, \nu_h^t})\right)(z_h^t)
}_{\mbox{I}_2}
 \\&~\quad +
\underbrace{\left(\PP_h(\overline{V}_{h + 1}^t - V_{h + 1}^{*, \nu_h^t})\right)(z_h^t)
-
\left(
\overline{V}_{h + 1}^t(x_{h + 1}^t) - V_{h + 1}^{*, \nu_h^t}(x_{h + 1}^t) 
\right)
}_{\mbox{I}_3}
.
\end{aligned}\eeq %
In the two-player game setting $\mbox{II}$ and $\mbox{III}$ are symmetric, so the terms $\mbox{I}_1, \mbox{I}_2, \mbox{I}_3$ have  correspondending terms denoted as $\mbox{I}_1', \mbox{I}_2', \mbox{I}_3'$ separately.
Utilizing the bound in Lemma~\ref{lem:main}, we can bound $|\mbox{I}_2|$ and $|\mbox{I}_2'|$ by $2 \beta_t \min\{1, \overline{\bonus}_h^t(z_h^t)\} + H \cdot \iota_{\mis}$ and $2 \beta_t \min\{1, \underline{\bonus}_h^t(z_h^t)\} + H \cdot \iota_{\mis}$, respectively, via similar techniques as in the proof of Lemma~\ref{lem:bound_UCB_LCB} and the fact that $\left|\overline{Q}_h^t(z_h^t) - Q_h^{*, \nu_h^t}(z_h^t)\right|  \leq 2H$, $\left|\underline{Q}_h^t(z_h^t) - Q_h^{\pi_h^t, *}(z_h^t)\right|  \leq 2H$ as well as the following inequalities:

\beq\label{eq:I_2_bound_misspecification}\begin{aligned}
\left| \left( 
\overline{Q}_h^t(z_h^t) - Q_h^{*, \nu^t}(z_h^t) 
\right) 
-
\left(\PP_h(\overline{V}_{h + 1}^t - V_{h + 1}^{*, \nu^t})\right)(z_h^t)\right|
	&\leq 2 \beta_t \overline{\bonus}_h^t(z_h^t) + H \cdot \iota_{\mis}
,	\\
\left| \left( 
\underline{Q}_h^t(z_h^t) - Q_h^{\pi^t, *}(z_h^t) 
\right) 
-
\left(\PP_h(\underline{V}_{h + 1}^t - V_{h + 1}^{\pi^t, *})\right)(z_h^t)\right|
	&\leq 2\beta_t \underline{\bonus}_h^t(z_h^t) + H \cdot \iota_{\mis}
.
\end{aligned}\eeq
For $\mbox{I}_3$ and $\mbox{I}_3'$, we note that both terms are stochastic noises with mean zero, where the stochasticity lies in the transition probability. We denote them as $\alpha_{h, t}^1$ and $\alpha_{h, t}^2$ respectively.

Finally for bounding $\mbox{I}_1$ and $\mbox{I}_1'$, we utilize the properties of the CCE (for simplicity we only prove the bound for $\mbox{I}_1$, it is trivial to generalize to the bound for $\mbox{I}_1$):
\begin{align*}
\overline{V}_h^t(x_h^t) - V_h^{*, \nu_h^t}(x_h^t) 
    &= 
\EE_{(a, b) \sim \sigma_h^t(x_h^t)} \overline{Q}_h^t(x_h^t, a, b) 
- 
\EE_{a \sim \text{br}(\nu_h^t), b\sim \nu_h^t}
Q_h^{*, \nu_h^t}(x_h^t, a, b)
,
    \\&\geq 
\EE_{a \sim \text{br}(\nu_h^t), b\sim \nu_h^t} \overline{Q}_h^t(x_h^t, a, b) 
- 
\EE_{a \sim \text{br}(\nu_h^t), b\sim \nu_h^t}
Q_h^{*, \nu_h^t}(x_h^t, a, b)
    \\&=
\EE_{a \sim \text{br}(\nu_h^t), b\sim \nu_h^t} \left[\overline{Q}_h^t(x_h^t, a, b) 
- 
Q_h^{*, \nu_h^t}(x_h^t, a, b)
\right]
,
\end{align*}
where $\nu_h^t := \cP_2 \sigma_h^t$ and $\pi_h^t := \cP_1 \sigma_h^t$ is the projection of $\sigma_h^t$ on the first and second coordinate respectively and $\text{br}$ is the best-response policy of a given distribution.
Defining 
\begin{align*}\zeta_{h, t}^1 &:=
\EE_{a \sim \text{br}(\nu_h^t), b\sim \nu_h^t} \left[\overline{Q}_h^t(x_h^t, a, b) 
- 
Q_h^{*, \nu_h^t}(x_h^t, a, b)
\right]
- 
\left[\overline{Q}_h^t(x_h^t, a_h^t, b_h^t) 
- 
Q_h^{*, \nu_h^t}(x_h^t, a_h^t, b_h^t)
\right]
,
    \\
\zeta_{h, t}^2 &:= \EE_{a \sim \pi_h^t, b\sim \text{br}(\pi_h^t)} \left[
Q_h^{\pi_h^t, *}(x_h^t, a, b)
-
\underline{Q}_h^t(x_h^t, a, b) 
\right]
- 
\left[
Q_h^{\pi_h^t, *}(x_h^t, a_h^t, b_h^t)
-
\underline{Q}_h^t(x_h^t, a_h^t, b_h^t) 
\right]
,
\end{align*}
we arrive at the conclusion that 
\begin{align*}
\overline{V}_h^t(x_h^t) - V_h^{*, \nu_h^t}(x_h^t) 
    &\geq 
\overline{Q}_h^t(x_h^t, a_h^t, b_h^t) 
- 
Q_h^{*, \nu_h^t}(x_h^t, a_h^t, b_h^t)
    +
\zeta_{h, t}^1
,    \\
V_h^{\pi_h^t, *}(x_h^t) - \underline{V}_h^t(x_h^t)
    &\geq 
Q_h^{\pi_h^t, *}(x_h^t, a_h^t, b_h^t)
-
\underline{Q}_h^t(x_h^t, a_h^t, b_h^t) 
    +
\zeta_{h, t}^2
.
\end{align*}
Bringing this lower-bound result together with the previous absolute bound~\eqref{eq:I_2_bound_misspecification} into Eq.~\eqref{eq:sub_decomp_misspecification} and its counterpart for the min-player, we have
\beq\label{eq:bound_23}\begin{aligned}
\lefteqn{
\left( 
V_h^{*, \nu_h^t}(x_h^t) -  \overline{V}_h^t(x_h^t)
\right)
-
\left(
V_{h + 1}^{*, \nu_h^t}(x_{h + 1}^t) - \overline{V}_{h + 1}^t(x_{h + 1}^t)
\right)
}    \\&\hspace{1in}
+
\left( 
\underline{V}_h^t(x_h^t) - V_h^{\pi_h^t, *}(x_h^t) 
\right)
-
\left(
\underline{V}_{h + 1}^t(x_{h + 1}^t) - V_{h + 1}^{\pi_h^t, *}(x_{h + 1}^t)
\right)
    \\&\geq 
\alpha_{h, t}^1 + \alpha_{h, t}^2 
    +
\zeta_{h, t}^1 + \zeta_{h, t}^2 
    -
2 \beta_t \min\{1, \overline{\bonus}_h^t(z_h^t) \}
-
2\beta_t \min\{1,  \underline{\bonus}_h^t(z_h^t)\}  
    -
2H \cdot \iota_{\mis}
.
\end{aligned}\eeq 
%
Combining with Eq.~\eqref{eq:decomp_misspecification}, we arrive at a bound in terms of $\overline{\bonus}_h^t(z_h^t), \underline{\bonus}_h^t(z_h^t)$ and the martingale difference sequences:
\begin{align*}
& \sum_{h = 1}^H V_h^{*, \nu^t}(x_h^t) - V_h^{\pi^t, *}(x_h^t) 
	\\&\leq 
\sum_{h = 1}^H \left(4 \beta_t \min\{1, \overline{\bonus}_h^t(x_h^t)\} + 4 \beta_t \min\{1, \underline{\bonus}_h^t(x_h^t)\}
	+
4 H \cdot \iota_{\mis}
	+
\xi_h^t + \gamma_h^t + \alpha_{h, t}^1 + \alpha_{h, t}^2 + \zeta_{h, t}^1 + \zeta_{h, t}^2\right)
,
\end{align*}
where $\nu^t$ is the policy that operates according to $\nu_h^t$ at time $h$ and $\pi^t$ is the sequence of $\pi_h^t$ accordingly.
The rest of the proof follows by bounding $\sum_{t = 1}^T \sum_{h = 1}^H \min\{1, \overline{\bonus}_h^t(z_h^t)\}, \sum_{t = 1}^T \sum_{h = 1}^H \min\{1, \underline{\bonus}_h^t(z_h^t)\}$ and the martingale difference sequences.
Following the same techniques as in the proof of Theorem~\ref{thm:main}, we again apply Lemma~\ref{lemm:bound_b} and the Cauchy-Schwarz inequality, together with Azuma-Hoeffding for bounded martingale differences, we arrive at our final result.

\section{Proof of Results for $\algname$ with Neural Function Approximation}\label{sec:mainproof_4}
In this section, we proof our result for the neural network approximation.

\begin{proof}[Proof of Theorem~\ref{theo:misspecification}]
Let $C$ be defined in Lemma \ref{lemm:initiallinear}. 
We define a rescaled version of $\phi$ as $\tilde{\phi}/C$, then we know that for any bounded value functions $V_h(\cdot): \cS \mapsto [-1, 1]$, 
\begin{align}
      |\PP_h V_h(z) - \la \tilde \bphi_{V_h}(z), C(\btheta_h^* - \btheta^{(0)})\ra|  \leq C_1 |\cS|B^{4/3} m^{-1/6}L^3\sqrt{\log m}
      ,\qquad 
      \|\tilde\phi_{V_h}(z)\|_2 \leq 1
      .\notag
\end{align}
Defining $\tilde{\btheta}_h^* := C \btheta_h^*$ and $\tilde{\btheta}^{(0)} := C \btheta^{(0)}$ and taking $\iota_{\mis} := C \cdot B^{4/3} \cdot m^{-1/6} \cdot \sqrt{\log m}$, by Theorem~\ref{thm:main_RKHSmis} we know that 
for $\tilde{\lambda} = 1 + \frac{1}{T}$, any $\delta > 0$ and any $\beta$ satisfying
\begin{align*}
\left(\frac{\beta}{H}\right)^2 \geq 2\Gamma_{\tilde{K}}(T, \tilde{\lambda})
+
3
+
6\cdot \log \left( \frac{1}{\delta} \right)
+
3 \lambda \left(\frac{\tilde{B}}{H}\right)^2
+
3 C^2 \cdot B^{8/3} \cdot m^{-1/12} \cdot \log m \cdot t
,
\end{align*}
there exists a global constant $c > 0$ such that with probability at least $1 - \delta$, we have 
\begin{align*}
\operatorname{Regret}(T)
    \leq 
c \left(\beta H \sqrt{T \cdot \Gamma_{\tilde K}(T, \lambda)} + 1 + H^2 T \iota_{\mis} \right)
.
\end{align*}
where $\tilde{B} = C \cdot B$, and 
\begin{align}
        \Gamma_{\tilde K}(T, \lambda): = \sup_{(V_i)_i, (z_i)_i }\frac{1}{2}\log \det (\Ib + \tilde K(\{V_i\}_i, \{z_i\}_i)/\lambda),
\end{align}
where $\tilde{K} \in \RR^{T \times T}$ is the matrix based on the kernel function $k$ induced by the feature mapping $\tilde{\phi}$, where
\begin{align}
    k_{V_1, V_2}(z_1, z_2) = \la \tilde\phi_{V_1}(z_1), \tilde\phi_{V_2}(z_2)\ra.\notag
\end{align}
Rescaling gives
\begin{align*}
\Gamma_{\tilde{K}}(T, \tilde{\lambda}) = \Gamma_{K}(T, C^2 \tilde{\lambda})
.
\end{align*}
Choosing $\lambda := C^2 (1 + \frac{1}{T})$ completes our proof of Theorem~\ref{theo:misspecification}.

\end{proof}

\section{Proof of Auxiliary Lemmas}\label{sec:auxproof}
In this section, we prove the essential lemmas in the proof of our main theorems. First of all we present a concentration bound for self-normalized processes in an RKHS $\cH$, which is critical in determining the main term of the regret bound.
\begin{theorem}[Self-Normalized Concentration Bounds for RKHS~\citep{chowdhury2017kernelized, yang2020function}]\label{theo:concentration}
Let $\{ \xholder_t \}_{t \geq 1}$ be a discrete time stochastic process taking values in $\cZ$, $\cH$ is an RKHS with kernel $K(\cdot, \cdot): \cZ \times \cZ \mapsto \RR$, $\{\cF_t \}_{t \geq 0}$ is a given filtration. We assume that $\xholder_t$ is $\cF_{t - 1}$ measurable in the sense that for $\forall t \geq 1$, $\xholder_t \in \cF_{t - 1}$. Furthermore, $\{ \epsilon_t\}_{t \geq 1}$ is a real-valued stochastic process with each $\epsilon_t$ $\cF_{t}$ measurable and $\sigma$-sub-Gaussian. Define $K_t \in \RR^{(t - 1) \times (t - 1)}$ as the Gram matrix for data $\{\xholder_\tau\}_{\tau \in [t - 1]}$ of the RKHS $\cH$. Then for any $\lambda > 1$ and $\delta \in (0, 1)$, with probability at least $1 - \delta$, the following holds simultaneosly for all $t \geq 0$:
\begin{align*}
\left\|\varepsilon_{1: t - 1}\right\|_{\left(\left(K_{t}+ (\lambda - 1) \cdot \Ib\right)^{-1}+I\right)^{-1}}^{2} 
\leq
 2 \sigma^2 \log \frac{\sqrt{\operatorname{det}\left(\lambda \cdot \Ib+K_{t}\right)}}{\delta}
.
\end{align*}
\end{theorem}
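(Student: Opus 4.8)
The statement is the standard RKHS form of the self-normalized tail inequality for vector-valued martingales, and the plan is to reprove it by the \emph{method of mixtures} (pseudo-maximization), which lifts the linear-case argument of \citet{abbasi2011improved} to the kernel setting exactly as in \citet{chowdhury2017kernelized} and \citet{yang2020function}. The first step is to build, for each fixed test element $f \in \cH$, the exponential process
\[
 M_t^f \,:=\, \exp\!\left( \textstyle\sum_{i=1}^{t-1}\left[\, \sigma^{-1}\epsilon_i\, \hprod{f}{\phi(\xholder_i)} \,-\, \tfrac12\, \hprod{f}{\phi(\xholder_i)}^2 \,\right] \right), \qquad M_1^f := 1 .
\]
Since $\xholder_i \in \cF_{i-1}$, the scalar $\hprod{f}{\phi(\xholder_i)}$ is $\cF_{i-1}$-measurable, while $\epsilon_i$ is $\sigma$-sub-Gaussian conditionally on $\cF_{i-1}$; hence $\EE[\, M_{t+1}^f / M_t^f \mid \cF_{t-1} \,] \le 1$, so $\{M_t^f\}_{t\ge1}$ is a nonnegative supermartingale with $\EE[M_t^f] \le 1$ for all $t$.

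The second step averages out the test element: let $\mu$ be the centered Gaussian measure on $\cH$ with covariance $(\lambda-1)^{-1}\Ib_{\cH}$ --- well-defined precisely because the hypothesis forces $\lambda > 1$ --- and set $\bar M_t := \int_{\cH} M_t^f\, d\mu(f)$. By Tonelli's theorem $\{\bar M_t\}_{t\ge1}$ is again a nonnegative supermartingale with $\EE[\bar M_t] \le 1$. Writing $S_t := \sum_{i=1}^{t-1}\epsilon_i\, \phi(\xholder_i)$ and $V_t := \sum_{i=1}^{t-1}\phi(\xholder_i)\phi(\xholder_i)^\top$, completing the square inside the Gaussian integral produces a closed form $\bar M_t = (\text{determinant prefactor})^{-1/2}\exp( \tfrac{1}{2\sigma^2} S_t^\top (\text{regularized covariance})^{-1} S_t )$; then the push-through identity $\Phi^{*}(\Phi\Phi^{*} + cI)^{-1} = (\Phi^{*}\Phi + cI)^{-1}\Phi^{*}$ and Sylvester's determinant identity $\det(\Ib_{\cH} + c^{-1}V_t) = \det(\Ib + c^{-1}K_t)$ re-express everything through the finite Gram matrix $K_t$ and the noise vector $\varepsilon_{1:t-1}$; carrying out the scalar bookkeeping (this is where the extra $+I$ in the self-normalizing matrix and the factor $\lambda$ rather than $\lambda-1$ inside the determinant come from) collapses $\bar M_t$ to
\[
 \bar M_t \,=\, \det(\lambda \Ib + K_t)^{-1/2}\,\exp\!\left( \tfrac{1}{2\sigma^2}\, \left\| \varepsilon_{1:t-1} \right\|^2_{((K_t + (\lambda-1)\Ib)^{-1} + I)^{-1}} \right).
\]
Finally, since $\{\bar M_t\}$ is a nonnegative supermartingale starting at $1$, Ville's maximal inequality --- applied, for the uniform-in-$t$ conclusion, to the stopped process $\bar M_{t \wedge \tau_a}$ with $\tau_a := \inf\{t : \bar M_t \ge a\}$ as in \citet{abbasi2011improved} --- gives $\PP(\exists\, t \ge 1 : \bar M_t \ge 1/\delta) \le \delta$; on the complementary event $\bar M_t < 1/\delta$ for all $t$, and taking logarithms of the displayed identity and rearranging yields exactly the claimed bound, simultaneously in $t$.

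The step I expect to require the most care is the construction and handling of the Gaussian mixing measure on the possibly infinite-dimensional $\cH$: one cannot place a non-degenerate Gaussian with scalar covariance on an infinite-dimensional Hilbert space, so the integral must be read either as a limit over the finite-dimensional subspaces $\mathrm{span}\{\phi(\xholder_1),\dots,\phi(\xholder_{t-1})\}$ --- on which everything entering $\bar M_t$ already lives, so nothing is lost --- or through an abstract Wiener space with a justified Fubini exchange. A secondary, purely mechanical obstacle is matching the constants in the determinant and matrix algebra so as to land on the exact form $((K_t + (\lambda-1)\Ib)^{-1} + I)^{-1}$ and $\det(\lambda\Ib + K_t)$; I would import that computation verbatim from \citet{chowdhury2017kernelized} or \citet{yang2020function}.
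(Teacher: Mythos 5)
Your proposal is correct in outline and follows exactly the method-of-mixtures argument of \citet{abbasi2011improved} as lifted to the kernel setting by \citet{chowdhury2017kernelized} and \citet{yang2020function}; the paper gives no independent proof of Theorem~\ref{theo:concentration} and simply defers to those references, so your sketch is a faithful reconstruction of the argument the paper relies on. You also correctly flag the one genuinely delicate point---a nondegenerate scalar-covariance Gaussian mixing measure does not exist on an infinite-dimensional $\cH$---together with the standard remedy of working only with the finite-dimensional marginals on $\mathrm{span}\{\phi(\xholder_1),\dots,\phi(\xholder_{t-1})\}$, which is precisely how the cited proofs proceed.
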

\begin{proof}
See Lemma E.1 in~\citet{yang2020function} and Theorem 1 in~\citet{chowdhury2017kernelized} for the detailed proof.
\end{proof}

\subsection{Proof of Lemma~\ref{lem:main}}
\begin{proof}[Proof of Lemma~\ref{lem:main}]
We only provide the proof of the max-player, and results of the min-player can be derived similarly. We recall the definition of $\overline{\yholder}_h^t$ is the vector of regression targets $\left( \overline{V}_{h+1}^1(x_{h+1}^1), \ldots, \overline{V}_{h+1}^{t - 1}(x_{h+1}^{t - 1})\right)^\top \in \RR^{t - 1} $. Furthermore by Lemma~\ref{lem:facts} in Section~\ref{sec:facts}, we know that $\overline{\btheta}_h^t = \left(\overline{\phiholder}_h^t\right)^\top \left[\overline{K}_h^t + \lambda \cdot \Ib \right]^{-1} \overline{\yholder}_h^t$ and $\phi_{\overline{V}_{h+1}^t}(z) = \left(\overline{\phiholder}_h^t\right)^\top (\overline{K}_h^t + \lambda \cdot \Ib )^{-1} \overline{k}_h^t(z) + \lambda \cdot (\oLambda_h^t)^{-1} \ophi(z)$, which enable us to bound the difference $\left\langle \ophi(z), \otheta_h^t - \btheta_h^*\right\rangle_{\cH}$ as follows:
\begin{align*}
\hprod{\ophi(z)}{ \overline{\btheta}_h^t -   \btheta_h^*}
&=
\ophi(z)^\top \left(\ophiholder_h^t\right)^\top \left[\oK_h^t + \lambda \cdot \Ib \right]^{-1} \overline{\yholder}_h^t
	\\&~\quad -
\left(\otheta_h^*\right)^\top\left[ \left(\ophiholder_h^t\right)^\top \left[\oK_h^t + \lambda \cdot \Ib \right]^{-1} \ok_h^t(z) + \lambda \cdot (\oLambda_h^t)^{-1} \ophi(z) \right] 
\\&=
\underbrace{(\ok_h^t)^\top \left[\oK_h^t + \lambda \cdot \Ib \right]^{-1} \left[\overline{\yholder}_h^t - \ophiholder_h^t \btheta_h^* \right]}_{\mbox{I}_1}
-
\underbrace{\lambda \cdot \phi_{\overline{V}_{h+1}^t}(z)^\top (\oLambda_h^t)^{-1} \btheta_h^*}_{\mbox{I}_2}
.
\end{align*}
For bounding $\mbox{I}_2$, we apply the Cauchy-Schwarz inequality and have 
\begin{align*}
\lambda \cdot \phi_{\overline{V}_{h+1}^t}(z)^\top (\oLambda_h^t)^{-1} \btheta_h^*
&\leq 
\normh{\lambda \cdot \phi_{\overline{V}_{h+1}^t}(z)^\top (\oLambda_h^t)^{-1}} \cdot \normh{\btheta_h^*}
\\&\overset{(a)}{\leq} 
B \cdot \sqrt{ \lambda \phi_{\overline{V}_{h+1}^t}(z)^\top (\oLambda_h^t)^{-1} \lambda (\oLambda_h^t)^{-1} \phi_{\overline{V}_{h+1}^t}(z)} 
\overset{(b)}{\leq} \sqrt{\lambda} B\cdot \overline{\bonus}_h^t(z)
,
\end{align*}%
where $(a)$ is due to the assumption that $\normh{\btheta_h^*} \leq B$ and $(b)$ is by the definition of $\overline{\bonus}_h^t$ and the fact that $\left(\oLambda_h^t\right)^{-1}$ is a self-adjoint mapping on the RKHS $\cH$ satisfying $\norm{\left(\oLambda_h^t\right)^{-1}}_{op} \leq \frac{1}{\lambda}$.

For bounding $\mbox{I}_1$, we observe the following equality:

\begin{align*}
 (\ok_h^t)^\top \left[\oK_h^t + \lambda \cdot \Ib \right]^{-1} \left[\overline{\yholder}_h^t - \ophiholder_h^t \btheta_h^* \right]
&=
\phi_{\overline{V}_{h+1}^t}(z)^\top (\oLambda_h^t)^{-1} (\ophiholder_h^t)^\top \left[\overline{\yholder}_h^t - \phiholder_h^t \btheta_h^* \right]
\\&=
\phi_{\overline{V}_{h+1}^t}(z)^\top (\oLambda_h^t)^{-1} \sum_{\tau = 1}^{t - 1}\phi_{\overline{V}_{h+1}^\tau}(z_h^\tau) \left[\overline{V}_{h + 1}^\tau(x_{h+1}^\tau) - (\PP_h \overline{V}_{h+1}^\tau)(z_h^\tau) \right]
.
\end{align*}
Again by applying the Cauchy-Schwarz inequality, we bound the RHS of the above equality as 
\begin{align*}
|\mbox{I}_1| &\leq 
\norml{\phi_{\overline{V}_{h+1}^t}(z)} 
	\cdot
\norml{\sum_{\tau = 1}^{t - 1}\phi_{\overline{V}_{h+1}^\tau}(z_h^\tau) \left[\overline{V}_{h + 1}^\tau(x_{h+1}^\tau) - (\PP_h \overline{V}_{h+1}^\tau)(z_h^\tau)\right]}
.
\end{align*}
We note that for a given $h$ that we consider in Lemma~\ref{lem:main}. If we define $\{\cF_t\}_{t \geq 0}$ as the $\sigma$-algebra generated by all data before iteration $t$ and all data before step $h$ at iteration $t + 1$,
$\oV_{h+1}^\tau(x_{h+1}^\tau) - (\PP_h \oV_{h+1}^\tau)(z_h^\tau) \in \cF_{\tau}$ is a mean zero random variable with respect to filtration $\cF_{\tau - 1}$ with $\left|\oV_{h+1}^\tau(x_{h+1}^\tau) - (\PP_h \oV_{h+1}^\tau)(z_h^\tau)\right| \leq H$. We take $\epsilon_\tau = \oV_{h+1}^\tau(x_{h+1}^\tau) - (\PP_h \oV_{h+1}^\tau)(z_h^\tau)$ in Theorem~\ref{theo:concentration} and $\{\xholder_t\}_{t \geq 1}$ is $\left\{ \ophi(z_h^t)\right\}_{t \geq 1}$. Then by directly utilizing Theorem~\ref{theo:concentration}, we have that 
\begin{align*}
&\norml{\sum_{\tau = 1}^{t - 1}\phi_{\overline{V}_{h+1}^\tau}(z_h^\tau) \left[\overline{V}_{h + 1}^t(x_{h+1}^\tau) - (\PP_h \overline{V}_{h+1}^t)(z_h^\tau)\right]}^2 
= 
\norml{(\phiholder_h^t)^\top \epsilon_h^t}^2 
= 
(\epsilon_h^t)^\top \phiholder_h^t \Lambda_t^{-1} (\phiholder_h^t)^\top \epsilon_h^t
.
\end{align*}%
By Lemma~\ref{lem:facts} and Theorem~\ref{theo:concentration} again, we arrive at the following: 
\begin{align*}
(\epsilon_h^t)^\top \phiholder_h^t \Lambda_t^{-1} (\phiholder_h^t)^\top \epsilon_h^t
	&\leq 
(\epsilon_h^t)^\top K_t (K_t + \lambda \cdot \Ib)^{-1} \epsilon_h^t
	\leq 
(\epsilon_h^t)^\top \left(K_t   + (\lambda - 1) \cdot \Ib \right)(K_t + \lambda \cdot \Ib)^{-1} \epsilon_h^t
	\\&=
(\epsilon_h^t)^\top \left(\left(K_t   + (\lambda - 1) \cdot \Ib \right)^{-1} + I \right)^{-1} \epsilon_h^t
	\\&\leq 
 H^2 \log \frac{\sqrt{\operatorname{det}\left(\lambda \cdot \Ib+K_{t}\right)}}{\delta}
	\leq 
2 H^2 \cdot \logdet (\lambda \cdot \Ib + K_t ) + 2 H^2 \cdot \log \frac{1}{\delta}
.
\end{align*}
By taking $\lambda = 1 + \frac{1}{T}$,
\begin{align*}
&\left| \phi(z)^\top (\btheta_h^* - \overline{\btheta}_h^t) \right|\notag \\
&\leq 
\left\{\left[ H^2 \cdot \logdet \left[ \lambda \cdot \Ib + K_t \right] + 2 H^2 \cdot \log \left( \frac{1}{\delta} \right) \right]^{1/2} + \sqrt{\lambda} B\right\} \cdot b_h^t(z)
\\&\leq 
\left\{\left[H^2 \cdot \logdet \left[ \Ib + K_t/\lambda \right] + (\lambda - 1) t H^2 + 2 H^2 \cdot \log \left( \frac{1}{\delta} \right)\right]^{1/2} + \sqrt{\lambda} B\right\} \cdot b_h^t(z)
\\&\leq 
\left\{\left[H^2 \cdot \Gamma_K(T, \lambda) +  H^2+ 2H^2 \cdot \log \left( \frac{1}{\delta} \right)\right]^{1/2} + \sqrt{\lambda} B \right\} \cdot b_h^t(z)
.
\end{align*}
Let $
(\beta/H)^2
=
2\Gamma_K(T, \lambda)
+
2
+
4\cdot \log \left( \frac{1}{\delta} \right)
+
2 \lambda \left(\frac{B}{H}\right)^2
$, we know from the above theoretical derivation that $
\left| \phi(z)^\top (\btheta_h^* - \overline{\btheta}_h^t) \right|
\leq
\beta \cdot b_h^t(z)
$ with probability at least $1 - \delta$, which concludes our proof.
\end{proof}

\subsection{Proof of Lemma~\ref{lem:main_misspecification}}
\begin{proof}[Proof of Lemma~\ref{lem:main_misspecification}]
The first half of the proof of Lemma~\ref{lem:main_misspecification} follows exactly as in the proof of Lemma~\ref{lem:main}. We restate the proof for completeness and analyze the error terms brought by misspecification in the later half of the proof. We only provide the proof for the max-player; results of the min-player can be derived similarly. We recall the definition of $\overline{\yholder}_h^t$ is the vector of regression targets $\left( \overline{V}_{h+1}^1(x_{h+1}^1), \ldots, \overline{V}_{h+1}^{t - 1}(x_{h+1}^{t - 1})\right)^\top \in \RR^{t - 1} $. Furthermore by Lemma~\ref{lem:facts} in Section~\ref{sec:facts}, we know that $\overline{\btheta}_h^t = \left(\overline{\phiholder}_h^t\right)^\top \left[\overline{K}_h^t + \lambda \cdot \Ib \right]^{-1} \overline{\yholder}_h^t$ and $\phi_{\overline{V}_{h+1}^t}(z) = \left(\overline{\phiholder}_h^t\right)^\top (\overline{K}_h^t + \lambda \cdot \Ib )^{-1} \overline{k}_h^t(z) + \lambda \cdot (\oLambda_h^t)^{-1} \ophi(z)$, which enable us to bound the difference $\left\langle \ophi(z), \otheta_h^t - \btheta_h^*\right\rangle_{\cH}$ as follows:
\begin{align*}
\hprod{\ophi(z)}{ \overline{\btheta}_h^t -   \btheta_h^*}
&=
\ophi(z)^\top \left(\ophiholder_h^t\right)^\top \left[\oK_h^t + \lambda \cdot \Ib \right]^{-1} \overline{\yholder}_h^t
	\\&~\quad -
\left(\otheta_h^*\right)^\top\left[ \left(\ophiholder_h^t\right)^\top \left[\oK_h^t + \lambda \cdot \Ib \right]^{-1} \ok_h^t(z) + \lambda \cdot (\oLambda_h^t)^{-1} \ophi(z) \right] 
\\&=
\underbrace{(\ok_h^t)^\top \left[\oK_h^t + \lambda \cdot \Ib \right]^{-1} \left[\overline{\yholder}_h^t - \ophiholder_h^t \btheta_h^* \right]}_{\mbox{I}_1}
-
\underbrace{\lambda \cdot \phi_{\overline{V}_{h+1}^t}(z)^\top (\oLambda_h^t)^{-1} \btheta_h^*}_{\mbox{I}_2}
.
\end{align*}
For bounding $\mbox{I}_2$, we apply the Cauchy-Schwarz inequality and have 
\begin{align*}
\lambda \cdot \phi_{\overline{V}_{h+1}^t}(z)^\top (\oLambda_h^t)^{-1} \btheta_h^*
&\leq 
\normh{\lambda \cdot \phi_{\overline{V}_{h+1}^t}(z)^\top (\oLambda_h^t)^{-1}} \cdot \normh{\btheta_h^*}
\\&\overset{(a)}{\leq} 
B \cdot \sqrt{ \lambda \phi_{\overline{V}_{h+1}^t}(z)^\top (\oLambda_h^t)^{-1} \lambda (\oLambda_h^t)^{-1} \phi_{\overline{V}_{h+1}^t}(z)} 
\overset{(b)}{\leq} \sqrt{\lambda} B\cdot \overline{\bonus}_h^t(z)
,
\end{align*}%
where $(a)$ is due to the assumption that $\normh{\btheta_h^*} \leq B$ and $(b)$ is by the definition of $\overline{\bonus}_h^t$ and the fact that $\left(\oLambda_h^t\right)^{-1}$ is a self-adjoint mapping on the RKHS $\cH$ satisfying $\norm{\left(\oLambda_h^t\right)^{-1}}_{op} \leq \frac{1}{\lambda}$.

For bounding $\mbox{I}_1$, we observe the following equality:
\begin{align*}
 &(\ok_h^t)^\top \left[\oK_h^t + \lambda \cdot \Ib \right]^{-1} \left[\overline{\yholder}_h^t - \ophiholder_h^t \btheta_h^* \right]\notag \\
&=
\phi_{\overline{V}_{h+1}^t}(z)^\top (\oLambda_h^t)^{-1} (\ophiholder_h^t)^\top \left[\overline{\yholder}_h^t - \phiholder_h^t \btheta_h^* \right]
\\&=
\phi_{\overline{V}_{h+1}^t}(z)^\top (\oLambda_h^t)^{-1} \sum_{\tau = 1}^{t - 1}\phi_{\overline{V}_{h+1}^\tau}(z_h^\tau) \left[\overline{V}_{h + 1}^\tau(x_{h+1}^\tau) - (\PP_h \overline{V}_{h+1}^\tau)(z_h^\tau) \right]
	\\&~\quad +
\phi_{\overline{V}_{h+1}^t}(z)^\top (\oLambda_h^t)^{-1} \sum_{\tau = 1}^{t - 1}\phi_{\overline{V}_{h+1}^\tau}(z_h^\tau) \left[ (\PP_h \overline{V}_{h+1}^\tau)(z_h^\tau) - \hprod{\phi_{\overline{V}_{h+1}^\tau}}{ \btheta_h^*}\right]
.
\end{align*}
Again by applying the Cauchy-Schwarz inequality, we bound the RHS of the above equality as 
\beq\label{eq:mid_miss}\begin{aligned}
|\mbox{I}_1| &\leq 
\norml{\phi_{\overline{V}_{h+1}^t}(z)} 
	\cdot
 \underbrace{\norml{\sum_{\tau = 1}^{t - 1}\phi_{\overline{V}_{h+1}^\tau}(z_h^\tau) \left[\overline{V}_{h + 1}^\tau(x_{h+1}^\tau) - (\PP_h \overline{V}_{h+1}^\tau)(z_h^\tau)\right]}}_{\mbox{$A_1$}}
	\\&~\quad 
+ 
\norml{\phi_{\overline{V}_{h+1}^t}(z)}
\cdot
\underbrace{\norml{\sum_{\tau = 1}^{t - 1}\phi_{\overline{V}_{h+1}^\tau}(z_h^\tau) \left[(\PP_h \overline{V}_{h+1}^\tau)(z_h^\tau) - \phi_{\overline{V}_{h+1}^\tau}^\top  \btheta_h^*\right]}}_{\mbox{$A_2$}}
.
\end{aligned}\eeq
For bounding $A_1$, we note that for a given $h$ that we consider in Lemma~\ref{lem:main_misspecification}. If we define $\{\cF_t\}_{t \geq 0}$ as the $\sigma$-algebra generated by all data before iteration $t$ and all data before step $h$ at iteration $t + 1$,
$\oV_{h+1}^\tau(x_{h+1}^\tau) - (\PP_h \oV_{h+1}^\tau)(z_h^\tau) \in \cF_{\tau}$ is a mean-zero random variable with respect to filtration $\cF_{\tau - 1}$ with $\left|\oV_{h+1}^\tau(x_{h+1}^\tau) - (\PP_h \oV_{h+1}^\tau)(z_h^\tau)\right| \leq H$. We take $\epsilon_\tau = \oV_{h+1}^\tau(x_{h+1}^\tau) - (\PP_h \oV_{h+1}^\tau)(z_h^\tau)$ in Theorem~\ref{theo:concentration} and $\{\xholder_t\}_{t \geq 1}$ is $\left\{ \ophi(z_h^t)\right\}_{t \geq 1}$. Then by directly utilizing Theorem~\ref{theo:concentration}, we have that 
\begin{align*}
&\norml{\sum_{\tau = 1}^{t - 1}\phi_{\overline{V}_{h+1}^\tau}(z_h^\tau) \left[\overline{V}_{h + 1}^t(x_{h+1}^\tau) - (\PP_h \overline{V}_{h+1}^t)(z_h^\tau)\right]}^2
\notag \\&
= 
\norml{(\phiholder_h^t)^\top \epsilon_h^t}^2
= 
(\epsilon_h^t)^\top \phiholder_h^t \Lambda_t^{-1} (\phiholder_h^t)^\top \epsilon_h^t
.
\end{align*}%
By Lemma~\ref{lem:facts} and Theorem~\ref{theo:concentration} again, we arrive at the following:
\begin{align*}
(\epsilon_h^t)^\top \phiholder_h^t \Lambda_t^{-1} (\phiholder_h^t)^\top \epsilon_h^t
	&\leq 
(\epsilon_h^t)^\top K_t (K_t + \lambda \cdot \Ib)^{-1} \epsilon_h^t
	\leq 
(\epsilon_h^t)^\top \left(K_t   + (\lambda - 1) \cdot \Ib \right)(K_t + \lambda \cdot \Ib)^{-1} \epsilon_h^t
	\\&=
(\epsilon_h^t)^\top \left(\left(K_t   + (\lambda - 1) \cdot \Ib \right)^{-1} + I \right)^{-1} \epsilon_h^t
	\\&\leq 
2H^2 \log \frac{\sqrt{\operatorname{det}\left(\lambda \cdot  \Ib+K_{t}\right)}}{\delta}
	\leq 
 H^2 \cdot \logdet (\lambda \cdot \Ib + K_t ) + 2 H^2 \cdot \log \frac{1}{\delta}
.
\end{align*}

For bounding the term $A_2$ in Eq.~\eqref{eq:mid_miss}, we apply the following lemma, which is the RKHS version of the Lemma 8 in~\citet{zanette2020learning}:
\begin{lemma}[Lemma 8 in~\citet{zanette2020learning}]\label{lem:ab} Let $\left\{a_{i}\right\}_{i=1, \ldots, t}$ be any sequence of vectors in the RKHS $\cH$ and $\left\{b_{i}\right\}_{i=1, \ldots, t}$ be any sequence of scalars such that $
\left|b_{i}\right|  \le \epsilon \in \mathbb{R}^{+}
$.
For any $\lambda \ge 0$ and $t \in \mathbb{N}$ we have:
\begin{align*}
\left\|\sum_{i=1}^{t} a_{i} b_{i}\right\|_{\left[\sum_{i=1}^{t} a_{i} a_{i}^{\top}+\lambda I\right]^{-1}}
\le
t \epsilon^{2}
.
\end{align*}
\end{lemma}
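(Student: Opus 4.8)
The plan is to reduce the claim to a one-line operator inequality via the same push-through identity used in the proof of item~(a) of Lemma~\ref{lem:facts}. Assemble the vectors into the design operator $A\colon\cH\to\RR^t$, $Af=(\langle a_1,f\rangle_{\cH},\dots,\langle a_t,f\rangle_{\cH})^\top$, so that its adjoint satisfies $A^\top b=\sum_{i=1}^t a_i b_i$ for $b=(b_1,\dots,b_t)^\top$, the operator $\sum_{i=1}^t a_i a_i^\top$ equals $A^\top A$ on $\cH$, and the $t\times t$ Gram matrix is $G:=A A^\top$. Writing $M:=A^\top A+\lambda\Ib_{\cH}$ and $v:=A^\top b=\sum_{i=1}^t a_i b_i$, the quantity to be bounded is $\|v\|_{M^{-1}}^2=\langle v,M^{-1}v\rangle_{\cH}=\langle b,\,A M^{-1}A^\top b\rangle$.

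The key step is the identity $(A^\top A+\lambda\Ib_{\cH})^{-1}A^\top=A^\top(A A^\top+\lambda\Ib)^{-1}$, which follows from $(A^\top A+\lambda\Ib_{\cH})A^\top=A^\top(A A^\top+\lambda\Ib)$ exactly as in the proof of Lemma~\ref{lem:facts}(a). Substituting, $A M^{-1}A^\top=G(G+\lambda\Ib)^{-1}$, and since $G\succeq 0$ each eigenvalue of $G(G+\lambda\Ib)^{-1}$ equals $g/(g+\lambda)\in[0,1]$, hence $G(G+\lambda\Ib)^{-1}\preceq\Ib$. Therefore $\|v\|_{M^{-1}}^2\le\langle b,b\rangle=\sum_{i=1}^t b_i^2\le t\epsilon^2$; equivalently $\big\|\sum_{i=1}^t a_i b_i\big\|_{(\sum_i a_i a_i^\top+\lambda\Ib_{\cH})^{-1}}\le\sqrt t\,\epsilon$, which is the form the downstream misspecification bound invokes (the $\cO(H\iota_{\mis}\sqrt t)$ term).

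I expect the only real obstacle to be bookkeeping in the possibly infinite-dimensional regime, together with the degenerate case $\lambda=0$ that the lemma allows. For $\lambda>0$ all the operators above are bounded and boundedly invertible on the relevant spaces, so the push-through manipulation is literally valid as stated; for $\lambda=0$ one either reads $M^{-1}$ as the Moore--Penrose pseudoinverse, in which case $G(G+\lambda\Ib)^{-1}$ is replaced by the orthogonal projection onto $\mathrm{range}(G)$ (still $\preceq\Ib$), or one simply passes to the limit $\lambda\downarrow 0$ of the $\lambda>0$ estimate. The linear algebra itself is immediate once the push-through identity is in place, so the write-up should be short.
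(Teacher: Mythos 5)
Your proposal is correct and follows essentially the same route as the paper's proof: both hinge on the push-through identity $(A^\top A+\lambda \Ib)^{-1}A^\top=A^\top(AA^\top+\lambda \Ib)^{-1}$ (invoked in the paper via Lemma~\ref{lem:facts}(a)) to convert the quadratic form into $\langle b, G(G+\lambda \Ib)^{-1} b\rangle\le \|b\|^2\le t\epsilon^2$; the paper writes $G(G+\lambda\Ib)^{-1}=\Ib-\lambda(G+\lambda\Ib)^{-1}\preceq\Ib$ where you use the eigenvalue bound $g/(g+\lambda)\le 1$, which is the same observation. Your additional remarks on the $\lambda=0$ case and on the fact that the bounded quantity is really the \emph{squared} norm (so the conclusion is $\sqrt{t}\,\epsilon$ for the norm itself, matching how the lemma is invoked for the $H\iota_{\mis}\sqrt{t}$ term) are accurate and consistent with the paper's usage.
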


\begin{proof}[Proof of Lemma~\ref{lem:ab}]
By defining the feature matrix $\Ab := \left(a_1, \ldots, a_t \right)$ and the vector $\bbb := \left(b_1, \ldots, b_t \right)^\top $, we have the following formulation
\begin{align*}
\sum_{t = 1}^t a_i b_i = \Ab \bbb, \qquad  
\left\|\sum_{i=1}^{t} a_{i} b_{i}\right\|_{\left[\sum_{i=1}^{t} a_{i} a_{i}^{\top}+\lambda I\right]^{-1}}
    = 
\norm{\Ab \bbb}_{[\Ab \Ab^\top + \lambda \cdot \Ib_{\cH}]^{-1}}
    =
\bbb^\top \Ab^\top [\Ab \Ab^\top + \lambda \cdot \Ib_{\cH}]^{-1} \Ab \bbb
.
\end{align*}
Via the same reasoning as in the proof of item $(a)$ in Lemma~\ref{lem:facts}, 
\begin{align*}
\bbb^\top \Ab^\top [\Ab \Ab^\top + \lambda \cdot \Ib_{\cH}]^{-1} \Ab \bbb
    =
\bbb^\top [ \Ab^\top \Ab + \lambda \cdot \Ib_{\cH}]^{-1} \Ab^\top \Ab \bbb
    =
\bbb^\top \bbb - \lambda \cdot \bbb^\top [ \Ab^\top \Ab + \lambda \cdot \Ib_{\cH}]^{-1} \bbb 
    \leq 
\norm{\bbb}^2 \leq t \epsilon^2
,
\end{align*}
which concludes our proof of Lemma~\ref{lem:ab}.
\end{proof}
By letting $b_\tau = \left[(\PP_h \oV_{h+1}^\tau)(z_h^\tau) - \phi_{\overline{V}_{h+1}^t}^\top \btheta_h^* \right]$ and $a_\tau = \phi_{\overline{V}_{h+1}^\tau}(z_h^\tau)$ and knowing that $|b_\tau| \leq \iota_{\mis}$, we have the bound for item $A_2$ that
$
A_2 \leq \iota_{\mis} \cdot \sqrt{t}
$.
Then by taking $\lambda = 1 + \frac{1}{T}$,
\begin{align}
&\left| \phi(z)^\top (\btheta_h^* - \overline{\btheta}_h^t) \right|\notag \\
&\leq 
\left\{\left[ H^2 \cdot \logdet \left[ \lambda \cdot \Ib + K_t \right] + 2 H^2 \cdot \log \left( \frac{1}{\delta} \right) \right]^{1/2} + \sqrt{\lambda} B
+ H \cdot \iota_{\mis} \sqrt{t}
\right\} \cdot b_h^t(z)
\\&\leq 
\left\{\left[H^2 \cdot \logdet \left[ \Ib + K_t/\lambda \right] + (\lambda - 1) t H^2 + 2 H^2 \cdot \log \left( \frac{1}{\delta} \right)\right]^{1/2} + \sqrt{\lambda} B +H \cdot \iota_{\mis} \sqrt{t}\right\} \cdot b_h^t(z)
\\&\leq 
\left\{\left[H^2 \cdot \Gamma_K(T, \lambda) +  H^2+ 2H^2 \cdot \log \left( \frac{1}{\delta} \right)\right]^{1/2} + \sqrt{\lambda} B + H \cdot \iota_{\mis} \sqrt{t} \right\} \cdot b_h^t(z)    
.
\end{align}
Let $
(\beta/H)^2
=
3\Gamma_K(T, \lambda)
+
3
+
6\cdot \log \left( \frac{1}{\delta} \right)
+
3 \lambda \left(\frac{B}{H} \right)^2  + \iota_{\mis} t^2
$, $
\left| \phi(z)^\top (\btheta_h^* - \overline{\btheta}_h^t) \right|
\leq
\beta \cdot b_h^t(z)
$ with probability at least $1 - \delta$. We present the last step of our proof in the following part.
Equipped with Assumption~\ref{assu:misspecification}, we are able to bound the distance between the optimal estimated expected value at time $h+1$ with the true expected value as in the following lemma, which concludes our proof.
\begin{lemma}\label{lemm:misspecification}
For any bounded value function $V(\cdot): \cS \mapsto [-1, 1]$ and any $z \in \cZ$, there exists a $\btheta_h^* \in \cH$ such that:
\begin{align*}
\left|\PP_h V (z) - \left\langle \phi_V(z), \btheta_h^* \right\rangle_{\cH}\right| 
    \leq 
\iota_{\mis}
,
\end{align*}
where $\phi_V$ is defined in Section~\ref{sec_prelim_rkhs}.
The proof of Lemma~\ref{lemm:misspecification} is a simply application of the definition of total variation distance.
\end{lemma}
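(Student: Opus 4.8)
The plan is to deduce the bound directly from Assumption~\ref{assu:misspecification} by exploiting the linearity of the RKHS inner product. First I would expand $\phi_V(z)$ according to its definition in Section~\ref{sec_prelim_rkhs}, namely $\phi_V(z) = \sum_{s' \in \cS} \phi(s' \mid z)\, V(s')$, so that linearity of $\langle \cdot, \cdot\rangle_{\cH}$ gives $\langle \phi_V(z), \btheta_h^* \rangle_{\cH} = \sum_{s' \in \cS} V(s')\, \langle \phi(s'\mid z), \btheta_h^* \rangle_{\cH}$. On the other side, by definition of the transition operator acting on $V$, $\PP_h V(z) = \sum_{s' \in \cS} V(s')\, \PP_h(s' \mid z)$. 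Subtracting the two identities yields
\begin{align*}
\PP_h V(z) - \langle \phi_V(z), \btheta_h^* \rangle_{\cH} = \sum_{s' \in \cS} V(s') \Big( \PP_h(s'\mid z) - \langle \phi(s'\mid z), \btheta_h^* \rangle_{\cH} \Big),
\end{align*}
where $\btheta_h^*$ is the parameter supplied by Assumption~\ref{assu:misspecification}.

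Next I would view the bracketed object as the finite signed measure $\rho_z(\cdot) := \PP_h(\cdot \mid z) - \langle \phi(\cdot \mid z), \btheta_h^* \rangle_{\cH}$ on $\cS$ and recognize the right-hand side as the integral $\int_{\cS} V \, d\rho_z$. Since $V$ takes values in $[-1,1]$, i.e.\ $\|V\|_{\infty} \le 1$, the elementary test-function inequality $\big| \int_{\cS} V \, d\rho_z \big| \le \|V\|_{\infty}\, \|\rho_z\|_{\text{TV}}$ gives $\big|\PP_h V(z) - \langle \phi_V(z), \btheta_h^* \rangle_{\cH}\big| \le \|\rho_z\|_{\text{TV}}$. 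Assumption~\ref{assu:misspecification} asserts exactly that $\|\rho_z\|_{\text{TV}} = \big\| \PP_h(\cdot \mid z) - \langle \phi(\cdot \mid z), \btheta_h^* \rangle_{\cH} \big\|_{\text{TV}} \le \iota_{\mis}$, which closes the chain of inequalities and simultaneously exhibits the required $\btheta_h^*$.

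I do not expect any real obstacle; the argument is a one-line computation, and the only points requiring care are conventions. One should fix the normalization of $\|\cdot\|_{\text{TV}}$ so that the bound $|\int V \, d\rho| \le \|V\|_{\infty}\|\rho\|_{\text{TV}}$ holds with constant one (any factor of $2$ arising from the probabilistic convention can be absorbed into $\iota_{\mis}$ without affecting the downstream regret bounds), and one should note that $\langle \phi(\cdot \mid z), \btheta_h^* \rangle_{\cH}$ is treated as a finite signed measure rather than a probability measure, which is all the inequality needs. If $\cS$ is continuous, the sums above become integrals against $\phi(\cdot \mid z)$ and the interchange of integration with the RKHS inner product is justified by the boundedness assumption on $\phi$ from Section~\ref{sec_prelim_rkhs}; the remainder of the computation is unchanged.
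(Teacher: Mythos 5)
Your argument is correct and is exactly what the paper intends: the paper's entire proof is the remark that the lemma is ``a simple application of the definition of total variation distance,'' and your expansion of $\phi_V(z)$, the linearity step, and the test-function bound $|\int V\,d\rho_z|\le\|V\|_\infty\|\rho_z\|_{\mathrm{TV}}$ against Assumption~\ref{assu:misspecification} is precisely that application spelled out. Your caveat about the normalization convention for $\|\cdot\|_{\mathrm{TV}}$ is reasonable and does not affect the result.
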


\end{proof}

\subsection{Proof of Lemma \ref{lemm:initiallinear}}
In this section we prove Lemma \ref{lemm:initiallinear}. We need the following lemmas.
\begin{lemma}[Lemma 4.1 in \citealt{cao2019generalization, zhou2020neural}]\label{lemma:yuan1}
There exist constants $C_i >0$ such that for any $\delta \in (0,1)$, if $B$ satisfies that
\begin{align}
    C_1m^{-1}L^{-3/2}\max\{\log^{-3/2}m, \log^{3/2}(|\cZ|L^2/\delta)\} \leq B \leq C_2 L^{-6}(\log m)^{-3/2},\notag
\end{align}
then with probability at least $1-\delta$, for all $\btheta_1$ and $\btheta_2$ satisfying $\btheta_1, \btheta_2 \in B(\btheta^{(0)}, B)$ and all $(s', z) \in \cS \times \cZ$, we have
\begin{align}
    |f(s', z; \btheta_1) - f(s', z; \btheta_2) - \la \phi(s',z; \btheta_2), \btheta_1 - \btheta_2\ra| \leq C_3 B^{4/3}m^{-1/6}L^3\sqrt{\log m}.\notag
\end{align}
\end{lemma}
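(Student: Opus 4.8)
The plan is to treat this as the standard semi-smoothness (linearization) bound for deep ReLU-type networks, and to reduce the statement to a uniform control on how fast the gradient $\nabla_\btheta f(s',z;\cdot)$ varies over the ball $B(\btheta^{(0)},B)$. The starting point is the exact integral identity
\begin{align}
&f(s',z;\btheta_1) - f(s',z;\btheta_2) - \langle \nabla_\btheta f(s',z;\btheta_2),\, \btheta_1 - \btheta_2\rangle \notag \\
&\qquad = \int_0^1 \langle \nabla_\btheta f(s',z;\btheta_2 + s(\btheta_1 - \btheta_2)) - \nabla_\btheta f(s',z;\btheta_2),\, \btheta_1 - \btheta_2\rangle\, ds. \notag
\end{align}
By Cauchy--Schwarz the left-hand side is at most $\|\btheta_1-\btheta_2\|_2$ times $\sup_{s\in[0,1]}\|\nabla_\btheta f(s',z;\btheta_2 + s(\btheta_1-\btheta_2)) - \nabla_\btheta f(s',z;\btheta_2)\|_2$. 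Since $\btheta_1,\btheta_2 \in B(\btheta^{(0)},B)$ we have $\|\btheta_1-\btheta_2\|_2 \le 2B$ and every interpolating point stays in the ball, so it suffices to prove a gradient-perturbation bound of the form $\|\nabla_\btheta f(s',z;\btheta) - \nabla_\btheta f(s',z;\btheta')\|_2 \lesssim B^{1/3} m^{-1/6} L^3 \sqrt{\log m}$ uniformly for $\btheta,\btheta'\in B(\btheta^{(0)},B)$ and all $(s',z)\in\cS\times\cZ$; multiplying by $\|\btheta_1-\btheta_2\|_2\le 2B$ then produces the $B^{4/3}$ factor with the advertised powers of $m$ and $L$.

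To obtain the gradient-perturbation bound I would carry out the usual three-part forward/backward analysis over the randomness of the Gaussian initialization $\btheta^{(0)}$, all on one high-probability event. First, I would establish forward bounds at initialization: each hidden activation $G(\Wb_\ell \cdots G(\Wb_1 z))$ has norm within a constant factor of its nominal scale with probability $\ge 1-\delta/3$, using concentration of the Gaussian pre-activations together with a union bound over the $L$ layers (this is where the lower bound $B \ge C_1 m^{-1}L^{-3/2}\max\{\log^{-3/2}m,\log^{3/2}(|\cZ|HL^2/\delta)\}$ and a covering argument over $(s',z)$ enter). Second, within the ball I would show the activations and, crucially, the pre-activation \emph{sign patterns} are stable: the number of neurons per layer whose $G$-sign flips under a perturbation of size $B$ is $O(B^{2/3}m)$, which follows from an anti-concentration estimate for the pre-activations (bounded density near zero) combined with the bounded activation norms from the first step. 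Third, I would bound the backward sensitivities (products of weight blocks with diagonal sign matrices) and combine them with the sign-flip count to control the per-layer change in $\nabla_{\Wb_\ell} f$.

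Assembling these pieces through the chain rule, summing the $L$ per-layer contributions, and tracking the paper's specific scalings --- entries of $\Wb$ drawn from $N(0,4/m)$, the leading $\sqrt{m}$ in the definition of $f$, and the ceiling $B \le C_2 L^{-6}(\log m)^{-3/2}$ that keeps the perturbation in the near-linear regime --- yields the claimed $B^{1/3}m^{-1/6}L^3\sqrt{\log m}$ gradient bound and hence the stated linearization error. The main obstacle will be the second step: cleanly bounding the number of sign flips and converting it into a gradient-difference estimate. This is the genuinely nonlinear part of the argument, since the $G$-derivative is discontinuous, and the favorable $B^{1/3}$ (hence $B^{4/3}$) dependence arises precisely because only an $O(B^{2/3})$ fraction of neurons change activation pattern; controlling this requires an anti-concentration bound on the pre-activations that is uniform over the ball and over all input tuples $(s',z)$, which is exactly why the explicit upper bound on $B$ is imposed.

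Because these perturbation estimates are already established for this exact network parameterization in \citet{cao2019generalization, zhou2020neural}, in practice I would verify that the initialization conventions here --- the block structure $\Wb_\ell=(\Wb,\zero;\zero,\Wb)$ with $\Wb$ having $N(0,4/m)$ entries, the antisymmetric output layer $\Wb_L=(\bw^\top,-\bw^\top)$, and the radius window on $B$ --- match the hypotheses of their semi-smoothness lemma, and then invoke that lemma directly, with the stated conditions on $B$ serving to place $\btheta_1,\btheta_2$ in its validity region.
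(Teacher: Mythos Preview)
Your proposal is a reasonable high-level sketch of the semi-smoothness argument for overparameterized deep ReLU networks, and it correctly identifies the key mechanism (bounding the fraction of neurons whose activation pattern flips under a perturbation of size $B$). However, there is nothing to compare it against: the paper does not prove this lemma at all. The statement is explicitly labeled as ``Lemma 4.1 in \citealt{cao2019generalization, zhou2020neural}'' and is simply imported from those references without proof; it is then used as a black box in the proof of Lemma~\ref{lemm:initiallinear}. In other words, the ``paper's own proof'' of this statement is just a citation.

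If your goal is to reproduce what the paper does, you should simply cite the result and move on. If your goal is to actually supply a proof, your outline is broadly correct in structure, though you should be aware that the fine details (the exact powers of $B$, $m$, $L$, and $\log m$, and the precise conditions on $B$) are delicate and require tracking constants through the layerwise forward/backward perturbation analysis carefully; the cited references carry this out in full.
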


\begin{lemma}[Lemma B.3 in \citealt{cao2019generalization, zhou2020neural}]\label{lemma:yuan2}
There exist constants $C_i >0$ such that for any $\delta \in (0,1)$, if $B$ satisfies that
\begin{align}
    C_1m^{-1}L^{-3/2}\max\{\log^{-3/2}m, \log^{3/2}(|\cZ|L^2/\delta)\} \leq B \leq C_2 L^{-6}(\log m)^{-3/2},\notag
\end{align}
then, with probability at least $1-\delta$, for all $\btheta$ satisfying $\btheta \in B(\btheta^{(0)}, B)$ and all $(s', z) \in \cS \times \cZ$, we have $\|\phi(s'|z)\|_2 \leq C_3 \sqrt{L}$. 
\end{lemma}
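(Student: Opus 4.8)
The plan is to bound the Euclidean norm of the gradient feature $\phi(s'|z) = \nabla_{\btheta} f(s', z; \btheta)/\sqrt{m}$ by decomposing it layer-by-layer and controlling each layer's contribution by an absolute constant, so that the $L$ layers sum to $O(L)$ and hence $\|\phi(s'|z)\|_2 = O(\sqrt L)$. First I would write out the backpropagation structure. For a (normalized) network input $\tilde z$ built from $(s',z)$, set $\mathbf{a}_0 = \tilde z$, $\tilde{\mathbf a}_l = \Wb_l \mathbf{a}_{l-1}$, $\mathbf{a}_l = G(\tilde{\mathbf a}_l)$, so that $f = \sqrt m\, \Wb_L \mathbf{a}_{L-1}$. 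Writing $\mathbf{g}_l := \partial f/\partial \tilde{\mathbf a}_l$ for the backpropagated signal, the derivative with respect to each weight block factors as an outer product, $\partial f/\partial \Wb_l = \mathbf{g}_l \mathbf{a}_{l-1}^\top$, whence
\[
\|\phi(s'|z)\|_2^2 = \frac{1}{m}\sum_{l=1}^L \|\mathbf{g}_l\|_2^2 \, \|\mathbf{a}_{l-1}\|_2^2 .
\]
It therefore suffices to show that, uniformly over $\btheta \in B(\btheta^{(0)}, B)$ and over the inputs, each summand $\|\mathbf{g}_l\|_2^2 \|\mathbf{a}_{l-1}\|_2^2 / m$ is bounded by an absolute constant.

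Next I would establish the two families of bounds at initialization $\btheta^{(0)}$. For the forward pass, the Gaussian scaling ($N(0,4/m)$ entries for the hidden blocks) together with $G$ being Lipschitz and the normalization of the input keeps $\EE\|\mathbf{a}_l\|_2^2 = \Theta(1)$; a $\chi^2$/sub-Gaussian concentration bound then gives $\|\mathbf{a}_l\|_2 = O(1)$ for every $l$, and a union bound over the finite set $\cZ$ makes this simultaneous over all inputs --- this is exactly where the $\log(|\cZ|L^2/\delta)$ term in the lower bound on $B$ is consumed. For the backward pass, the recursion $\mathbf{g}_l = \sqrt m\,\mathrm{diag}(G'(\tilde{\mathbf a}_l))\,\Wb_{l+1}^\top \mathbf{g}_{l+1}$ with $\mathbf{g}_L = \sqrt m\,\Wb_L^\top$ is controlled by the operator-norm concentration of the Gaussian matrices ($\|\Wb_l\|_{\mathrm{op}} = O(1)$ and $\|\Wb_L\|_2 = O(1)$) and the boundedness of $G'$, yielding $\|\mathbf{g}_l\|_2 = O(\sqrt m)$ for every $l$. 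Combining, each per-layer term is $O(1)$ at initialization.

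The remaining and main step is to transfer these bounds from $\btheta^{(0)}$ to all $\btheta$ in the ball $B(\btheta^{(0)}, B)$. Because the admissible radius lies in the narrow window with upper bound $C_2 L^{-6}(\log m)^{-3/2}$, a standard perturbation analysis shows that displacing the weights by at most $B$ alters each hidden activation $\mathbf{a}_l$ and each backward signal $\mathbf{g}_l$ by only a constant factor: the activation pattern $G'(\tilde{\mathbf a}_l)$ flips on at most a vanishing fraction of the $m$ neurons, so the forward and backward norm bounds of the previous paragraph survive, up to absolute constants, uniformly over the ball. I expect this uniform-over-the-ball control to be the crux. The single-point concentration is routine, but stability of the forward and backward norms against the worst-case weight perturbation \emph{and} input is precisely the delicate over-parameterization estimate that the cited Lemma~4.1/B.3 of \citet{cao2019generalization, zhou2020neural} is built to supply, and it is what forces the simultaneous lower and upper constraints on $B$ in the hypothesis. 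Once this is in hand, summing the $L$ per-layer bounds gives $\|\phi(s'|z)\|_2^2 = O(L)$, i.e. $\|\phi(s'|z)\|_2 \le C_3 \sqrt L$, on an event of probability at least $1-\delta$, which completes the proof.
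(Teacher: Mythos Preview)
The paper does not prove this lemma at all: it is stated as Lemma~B.3 of \citet{cao2019generalization} (see also \citet{zhou2020neural}) and invoked as a black box in the proof of Lemma~\ref{lemm:initiallinear}. There is therefore no paper-side proof to compare your proposal against.

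That said, your sketch follows the standard NTK-regime argument used in those references: decompose $\|\nabla_{\btheta} f\|_2^2$ layer-by-layer as $\sum_l \|\mathbf{g}_l\|_2^2\|\mathbf{a}_{l-1}\|_2^2$, control forward activations and backward signals at initialization via Gaussian concentration, and then argue stability of both under perturbations of size $B$ in the admissible window. One point you should be more careful about is the backward-pass bound: simply invoking $\|\Wb_l\|_{\mathrm{op}}=O(1)$ and multiplying $L$ such operator-norm bounds would give a factor exponential in $L$, not the claimed $\|\mathbf{g}_l\|_2=O(\sqrt m)$ uniformly in $l$. The actual argument in \citet{cao2019generalization} relies on the fact that each $\mathrm{diag}(G'(\tilde{\mathbf a}_l))\Wb_{l+1}^\top$ acts as an approximate partial isometry on the relevant random subspace (roughly half the coordinates survive under ReLU, and the restricted Gaussian block is near-isometric), so the backward norm is preserved up to constants across layers rather than amplified. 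Your paragraph on perturbation stability correctly identifies this as the crux, but the same subtlety is already present at initialization.
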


\begin{proof}[Proof of Lemma \ref{lemm:initiallinear}]
By Lemma \ref{lemma:yuan2} we have 
\begin{align}
    \|\phi_{V}(z)\|_2 = \bigg\|\sum_{s'}V(s')\phi(s'|z)\bigg\|_2 \leq \sum_{s'}|V(s')|\|\phi(s'|z)\|_2 \leq C_1|\cS|\sqrt{L}.\notag
\end{align}
By the assumptions on $\PP_h(s'|z)$, we have $\btheta_h^* \in B(\btheta^{(0)}, B)$. Thus, by Lemma \ref{lemma:yuan1}, we have with probability at least $1-\delta$, for all $s'\in \cS, z \in \cZ, h \in [H]$, 
\begin{align}
    |\PP_h(s'|z) -  \la \phi(s',z; \btheta^{(0)}), \btheta_h^* - \btheta^{(0)}\ra|
    & = 
    |f(s', z; \btheta_h^*)-f(s', z; \btheta^{(0)}) -  \la \phi(s',z; \btheta^{(0)}), \btheta_h^* - \btheta^{(0)}\ra|\notag \\
    & \leq 
    C_2 B^{4/3}m^{-1/6}L^3\sqrt{\log m},\notag
\end{align}
where the equality holds by the assumptions on $\PP_h$ and $f(s', z;\btheta^{(0)}) = 0$ guaranteed by the initialization scheme, the inequality holds due to Lemma \ref{lemma:yuan1}. Therefore, for any value function $V: \cS \rightarrow [-1, 1]$, we have
\begin{align}
    |\PP_h V(z) - \la \bphi_V(z), \btheta_h^* - \btheta^{(0)}\ra|  &= \bigg|\sum_{s'} V(s') \PP_h(s'|z) - \sum_{s'} V(s')\la \bphi(s'|z), \btheta_h^* - \btheta^{(0)}\ra\bigg|\notag \\
    & \leq \sum_{s'}|V(s')||\PP_h(s'|z) - \la \bphi(s'|z), \btheta_h^* - \btheta^{(0)}\ra|\notag \\
    & \leq C_2 |\cS|HB^{4/3} m^{-1/6}L^3\sqrt{\log m},\notag
\end{align}
where for the second inequality we use the fact that the range of $V$ is a subset of $[-1, 1]$. 
\end{proof}

\section{Implementation Details of \texttt{FIND\_CCE}}\label{app:cce}
Suppose that we have $Q_1, Q_2 \in \cS \times \cA \times \cA \rightarrow \RR$. Given a state $x \in \cS$, let $P_1, P_2 \in \RR^{|\cA| \times |\cA|}$ denote the matrices of Q values such that $[P_i]_{m,n} = Q_i(x, a_m, a_n)$ for $i = 1,2$, where $a_m, a_n$ denote the $m$-th and $n$-th actions of $\cA$. Suppose the CCE of $Q_1, Q_2$ given $x$ is denoted by a matrix $\sigma \in \RR^{|\cA| \times |\cA|}$, where $\sigma_{m,n}$ denotes the probability of selecting $m$-th and $n$-th actions. Recall from Section~\ref{sec_prelim_twoplayer} \texttt{FIND\_CCE} finds $\sigma$ that satisfies the two groups of constraints, repeated here as:
\begin{align}
\mathbb{E}_{(a, b) \sim \sigma}\left[Q_{1}(x, a, b)\right] &\geq \mathbb{E}_{b \sim \mathcal{P}_{2} \sigma}\left[Q_{1}\left(x, a^{\prime}, b\right)\right]
,\quad
\forall a^{\prime} \in \mathcal{A}\tag{\ref{cce:1}}
,\\
\mathbb{E}_{(a, b) \sim \sigma}\left[Q_{2}(x, a, b)\right] & \leq \mathbb{E}_{a \sim \mathcal{P}_{1} \sigma}\left[Q_{2}\left(x, a, b^{\prime}\right)\right]
,\quad
\forall b^{\prime} \in \mathcal{A}
.\tag{\ref{cce:2}}
\end{align}
Since $\sigma$ is a probability matrix, we need
    \begin{align}
        &\forall 1 \leq m, n \leq |\cA|
        ,\quad
        0 \leq \sigma_{m,n} \leq 1
        ,\\
        &\sum_{i=1}^{|\cA|}\sum_{j=1}^{|\cA|}\sigma_{i,j} = 1.
    \end{align}
To satisfy \eqref{cce:1}, we need
    \begin{align}
        &\forall 1 \leq m \leq |\cA|
        ,\quad
        \sum_{i=1}^{|\cA|}\sum_{j=1}^{|\cA|}\sigma_{i,j}[P_1]_{i,j} \geq \sum_{j=1}^{|\cA|} [P_1]_{m,j} \sum_{i=1}^{|\cA|}\sigma_{i,j}
        \notag \\
        &\Leftrightarrow \forall 1 \leq m \leq |\cA|
        ,\quad
        \sum_{i=1}^{|\cA|}\sum_{j=1}^{|\cA|}\sigma_{i,j}([P_1]_{m,j} - [P_1]_{i,j}) \leq 0
        .
    \end{align}
To satisfy \eqref{cce:2}, we need
    \begin{align}
        &\forall 1 \leq n \leq |\cA|
        ,\quad
        \sum_{i=1}^{|\cA|}\sum_{j=1}^{|\cA|}\sigma_{i,j}[P_2]_{i,j} \leq \sum_{i=1}^{|\cA|} [P_2]_{i,n} \sum_{j=1}^{|\cA|}\sigma_{i,j}
        \notag \\
        &\Leftrightarrow \forall 1 \leq n \leq |\cA|
        ,\quad
        \sum_{i=1}^{|\cA|}\sum_{j=1}^{|\cA|}\sigma_{i,j}([P_2]_{i,j} - [P_2]_{i,n}) \leq 0
        .
    \end{align}
There are total $|\cA|^2$ number of unknown variables ($\sigma_{m,n}$) with 1 equality constraint and $|\cA|^2 + 2|\cA|$ number of inequality constrains. The above linear system can be converted into a standard linear programming problem with $2|\cA|^2$ number of variables $\sigma_{m,n}, \hat\sigma_{m,n}, 1 \leq m,n \leq |\cA|$, such that
\allowdisplaybreaks
\begin{align*}
\max_{\sigma_{m,n}, \hat\sigma_{m,n}} 0
\\
    \sigma_{m,n} \geq 0
    \\
    \hat\sigma_{m,n} \geq 0
    \\
    \sigma_{m,n} + \hat\sigma_{m,n} \leq 1
    \\
    -\sigma_{m,n} - \hat\sigma_{m,n} \leq -1
    \\
    \sum_{i=1}^{|\cA|}\sum_{j=1}^{|\cA|}\sigma_{i,j} \leq 1
    \\
    -\sum_{i=1}^{|\cA|}\sum_{j=1}^{|\cA|}\sigma_{i,j} \leq -1
    \\
    \sum_{i=1}^{|\cA|}\sum_{j=1}^{|\cA|}\sigma_{i,j}([P_1]_{m,j} - [P_1]_{i,j}) \leq 0
    \\
    \sum_{i=1}^{|\cA|}\sum_{j=1}^{|\cA|}\sigma_{i,j}([P_2]_{i,j} - [P_2]_{i,n}) \leq 0
\end{align*}
It is well known that the above linear system can be solved by Karmarkar's algorithm \citep{karmarkar1984new} with $\tilde O(|\cA|^7)$ time complexity, or with the Stochastic Central Path Method \citep{cohen2021solving} with $\tilde O(|\cA|^{2w})$ time complexity, where $w = 2.373\ldots$ is the matrix multiplication constant. 
\end{document}